\newtheorem{definition}{Definition}[section]
\newtheorem{assumption}{Assumption}[section]
\newtheorem{lemma}{Lemma}[section]
\newtheorem{proposition}{Proposition}[section]
\newtheorem{property}{Property}[section]
\DeclareMathAlphabet{\mathbsf}{OT1}{cmss}{bx}{n}
\DeclareMathAlphabet{\mathssf}{OT1}{cmss}{m}{sl}
\DeclareMathOperator*{\argmin}{arg\,min}
\DeclarePairedDelimiterX{\infdivx}[2]{(}{)}{%
	#1\;\delimsize\|\;#2%
}
\newcommand{\dimOne}{k_1}	
\newcommand{\dimTwo}{k_2}	
\newcommand{\dimThree}{k_3}	
\newcommand{\rvv}{{\mathssf{v}}}	
\newcommand{\rvx}{{\mathssf{x}}}	
\newcommand{\rvz}{{\mathssf{z}}}	
\newcommand{\rvbv}{{\mathbsf{v}}} 
\newcommand{\rvbx}{{\mathbsf{x}}} 
\newcommand{\svbv}{{\mathbf{v}}} 
\newcommand{\svbx}{{\mathbf{x}}} 
\newcommand{\svby}{{\mathbf{y}}} 
\newcommand{\bV}{{\mathbf{V}}} 
\newcommand{\bM}{{\mathbf{M}}} 
\newcommand{\bN}{{\mathbf{N}}} 
\newcommand{\bU}{{\mathbf{U}}} 
\newcommand{\bH}{{\mathbf{H}}} 
\newcommand{\hbH}{\hat{\mathbf{H}}} 
\newcommand{\bg}{\boldsymbol{g}}
\newcommand{\br}{\boldsymbol{r}}
\newcommand{\bd}{\boldsymbol{d}}
\newcommand{\Reals}{\mathbb{R}} 
\newcommand{\bphi}{\boldsymbol{\phi}} 
\newcommand{\btheta}{\boldsymbol{\theta}} 
\newcommand{\ThetaStar}{{\Theta}^*} %
\newcommand{\Thetai}{{\Theta}^{(i)}}
\newcommand{\varPhii}{{\varPhi}^{(i)}}
\newcommand{\ThetaOne}{{\Theta}^{(1)}} 
\newcommand{\Thetal}{{\Theta}^{(\dimThree)}} 
\newcommand{\varPhiOne}{{\varPhi}^{(1)}} 
\newcommand{\PhiOne}{{\Phi}^{(1)}} 
\newcommand{\PhiTwo}{{\Phi}^{(2)}} 
\newcommand{\varPhil}{{\varPhi}^{(\dimThree)}} 
\newcommand{\ThetaStari}{{\Theta}^{*(i)}}
\newcommand{\ThetaStarOne}{{\Theta}^{*(1)}} 
\newcommand{\ThetaStarTwo}{{\Theta}^{*(2)}} 
\newcommand{\hTheta}{\hat{\Theta}} %
\newcommand{\hThetan}{\hat{\Theta}_n} %
\newcommand{\hThetaEps}{\hat{\Theta}_{\epsilon,n}}
\newcommand{\DensityExpfam}{f_{\rvbx}( \svbx;\btheta )} 
\newcommand{\DensityX}{f_{\rvbx}( \svbx;\Theta )} 
\newcommand{\DensityXfun}{f_{\rvbx}( \cdot;\Theta )} 
\newcommand{\DensityXTrue}{f_{\rvbx}( \svbx;\ThetaStar )} 
\newcommand{\DensityXTruefun}{f_{\rvbx}( \cdot;\ThetaStar )} 
\newcommand{\cX}{\mathcal{X}} 
\newcommand{\cR}{\mathcal{R}} 
\newcommand{\cB}{\mathcal{B}} 
\newcommand{\bcR}{\boldsymbol{\mathcal{R}}} 
\newcommand{\cU}{\mathcal{U}} 
\newcommand{\cL}{\mathcal{L}} 
\newcommand{\UniformI}{\cU_{\cX_i}}
\newcommand{\Uniform}{\cU_{\cX}}
\newcommand{\Uniformfun}{\cU_{\cX}(\cdot)}
\newcommand{\tphi}{\tilde{\phi}}
\newcommand{\parameterSet}{\Lambda} %
\newcommand{\DensityDifference}{f_{\rvbx}( \svbx;\ThetaStar - \Theta )}
\newcommand{\DensityDifferencefun}{f_{\rvbx}( \cdot;\ThetaStar - \Theta )}
\newcommand{\infdiv}{D\infdivx}
\newcommand{\lambdaMin}{\lambda_{\min}}
\newcommand{\phiMax}{\phi_{\max}}
\newcommand{\Expectation}{\mathbb{E}}
\newcommand{\vect}{\text{vec}}
\newcommand{\hH}{\hat{H}}
\newcommand{\Prob}{\mathbb{P}}
\newcommand{\epsTwo}{\epsilon_2}
\newcommand{\epsFour}{\epsilon_4}
\newcommand{\deltaTwo}{\delta_2}
\newcommand{\deltaThree}{\delta_3}
\newcommand{\deltaFour}{\delta_4}
\newcommand{\hthetan}{\hat{\theta}_n}
\let\svthefootnote\thefootnote
\newcommand\freefootnote[1]{%
	\let\thefootnote\relax%
	\footnotetext{#1}%
	\let\thefootnote\svthefootnote%
}
\title{A Computationally Efficient Method for Learning Exponential Family Distributions\freefootnote{Accepted for publication at the 35th Conference on Neural Information Processing Systems (NeurIPS 2021).}}
\author{%
	Abhin Shah
	\\
	MIT\\
	\texttt{abhin@mit.edu} \\
	\and
	Devavrat Shah \\
	MIT \\
	\texttt{devavrat@mit.edu}
	\and
	Gregory W. Wornell
	\\
	MIT \\
	\texttt{gww@mit.edu}
}
\date{}
\begin{document}
\sloppy
\maketitle
\begin{abstract}
 We consider the question of learning the natural parameters of a $k$-parameter \textit{minimal} 
 exponential family from i.i.d. samples in a computationally and statistically efficient manner. 
We focus on the setting where the support as well as the natural parameters are appropriately bounded.
While the traditional maximum likelihood estimator for this class of 
exponential family is consistent, asymptotically normal, and asymptotically efficient, evaluating it is computationally hard.
In this work, we propose a computationally efficient estimator that is consistent 
as well as asymptotically normal under mild conditions. 
We provide finite sample guarantees to achieve an 
($\ell_2$) error of $\alpha$ in the parameter estimation 
with sample complexity $O({\sf poly}(k/\alpha))$ 
and computational complexity  
${O}({\sf poly}(k/\alpha))$. 
To establish these results, we show that, at the population level, our method can be viewed 
as the maximum likelihood estimation of a re-parameterized distribution belonging to the same 
class of exponential family. 
Further, we show that our estimator can be interpreted as a solution to minimizing a particular Bregman score as well as an instance of minimizing the \textit{surrogate} likelihood. 
\end{abstract}
\section{Introduction}
\label{sec_intro}


We are interested in the problem of learning the natural parameters of a \textit{minimal} exponential family with bounded support.
Consider a $p$-dimensional random vector $\rvbx = (\rvx_1 , \cdots, \rvx_p)$ with support $\cX \subset \Reals^{p}$. 
An exponential family is a set of parametric probability distributions with probability densities of the following canonical form
\begin{align}
\DensityExpfam \propto \exp \big(\btheta^T \bphi(\svbx) + \beta(\svbx) \big), \label{eq:exp_fam}
\end{align} 
where $\svbx \in \cX$ is a realization of the underlying random variable $\rvbx$, $\btheta \in \Reals^{k}$ is the natural parameter, $\bphi : \cX \rightarrow \Reals^{k}$ is the natural statistic, $k$ denotes the number of parameters, and $\beta$ is the log base function. For representational convenience, we shall utilize the following equivalent
representation of \eqref{eq:exp_fam}:
\begin{align}
\DensityX \propto \exp\bigg(  \big\langle\big\langle \Theta, \Phi(\svbx) \big\rangle \big\rangle\bigg) = \exp\bigg(  \sum\nolimits_{i \in [k_1], j \in [k_2], l \in [k_3]} \Theta_{ijl} \times \Phi_{ijl}(\svbx) \bigg)  \label{eq:densityXfun}
\end{align}
where $\Theta = [\Theta_{ijl}] \in \Reals^{k_1\times k_2 \times k_3}$ is the natural parameter, $\Phi = [\Phi_{ijl}] : \cX \rightarrow \Reals^{k_1\times k_2 \times k_3}$ is the natural statistic, $k_1 \times k_2 \times k_3 - 1= k$, and $\big\langle\big\langle\Theta, \Phi(\svbx)  \big\rangle\big\rangle$ denotes the tensor inner product, i.e., the sum of product of entries of $\Theta$ and $\Phi(\svbx)$.
An exponential family is \textit{minimal} if there does not exist a nonzero tensor $\bU \in \Reals^{\dimOne\times  \dimTwo \times  \dimThree} $ 
such that $\big\langle\big\langle \bU, \Phi(\svbx) \big\rangle\big\rangle$ is equal to a constant for all $\svbx \in \cX$.


The notion of exponential family was first introduced by Fisher \cite{Fisher1934} and was later generalized by Darmois \cite{Darmois1935}, Koopman \cite{Koopman1936}, and Pitman \cite{Pitman1936}. Exponential families play an important role in statistical inference and arise in many diverse applications for a variety of reasons: (a) they are analytically tractable, (b) they arise as the solutions to several natural optimization problems on the space of probability distributions, (c) they have robust generalization property (see \cite{Brown1986, BarndorffNielsen2014} for details).

\textit{Truncated} (or bounded) exponential family, first introduced by Hogg and Craig \cite{HoggC1956}, is a set of parametric probability distributions resulting from truncating the support of an exponential family. \textit{Truncated} exponential families share the same parametric form with their non-truncated counterparts up to a normalizing constant. These distributions arise in many applications where we can observe only a truncated dataset (truncation is often imposed by during data acquisition) e.g., geolocation tracking data can only be observed up to the coverage of mobile signal, police department can often monitor crimes only within their city’s boundary.

The natural parameter $\Theta$ specifies a particular distribution in the exponential family. If the natural statistic $\Phi$ and the support of $\rvbx$ (i.e., $\cX$) are known, then learning a distribution in the exponential family is equivalent to learning the corresponding natural parameter $\Theta$.  
Despite having a long history, there has been limited progress on learning natural parameter $\Theta$ of a \textit{minimal truncated} exponential family. More precisely, there is no known method (without any abstract condition) that is both computationally and statistically efficient for learning natural parameter of the \textit{minimal truncated} exponential family considered in this work.

\subsection{Contributions}
\label{subsec_contributions}
As the primary contribution of this work, we provide a computationally tractable method with statistical guarantees for learning distributions in \textit{truncated minimal} exponential families.
Formally, the learning task of interest is estimating the true natural parameter $\ThetaStar$ from i.i.d. samples of $\rvbx$ obtained from $\DensityXTruefun$.
We focus on the setting where $\ThetaStar$ and $\Phi$ are appropriately bounded (see Section \ref{sec:prob_formulation}).
 We summarize our contributions in the following two categories.\\

%
%

\noindent{\bf 1.\ Computationally Tractable Estimator: Consistency, Normality, Finite Sample Guarantees.} 
Given $n$ samples $\svbx^{(1)} \cdots , \svbx^{(n)}$ of $\rvbx$, we propose the following novel loss function to learn a distribution belonging to the exponential family in \eqref{eq:densityXfun}:
\begin{align}
\cL_{n}(\Theta)  = \frac{1}{n} \sum_{t = 1}^{n} \exp\big( -\big\langle \big\langle \Theta, \varPhi(\svbx^{(t)}) \big\rangle \big\rangle \big), \label{eq:loss_function}
\end{align}
where $\varPhi(\cdot) = \Phi(\cdot) - \Expectation_{\Uniform} [\Phi(\cdot)]$ with $\Uniform$ being the uniform distribution over $\cX$. 
We establish that the estimator $\hThetan$ obtained by minimizing $\cL_{n}(\Theta)$ over all $\Theta$ in the constraint set $\parameterSet$, i.e., 
\begin{align}
\hThetan \in \argmin_{\Theta \in \parameterSet} \cL_{n}(\Theta), \label{eq:estimator}
\end{align}
is consistent and (under mild further restrictions) asymptotically normal (see Theorem \ref{thm:consistency_normality}). We obtain an $\epsilon$-optimal solution $\hThetaEps$ of the convex minimization problem in \eqref{eq:estimator} (i.e., $\cL_{n}(\hThetaEps) \leq \cL_{n}(\hThetan) + \epsilon$) by implementing a projected gradient descent algorithm with $O(\mathrm{poly}(k_1k_2/\epsilon))$\footnote[1]{We let $k_3 = O(1)$. See Section \ref{sec:prob_formulation}.} iterations (see Lemma \ref{lemma:gradient_descent}). Finally, we provide rigorous finite sample guarantees for $\hThetaEps$ (with $\epsilon = O(\alpha^2)$) to 
achieve an error of $\alpha$ (in the tensor $\ell_2$ norm) with respect to the true
natural parameter $\ThetaStar$ with $O(\mathrm{poly}(k_1k_2/\alpha))$ samples and $O(\mathrm{poly}(k_1k_2/\alpha))$ computations (see Theorem \ref{thm:finite_sample}). 
By letting certain additional structure on the natural parameter, we allow our framework to capture various constraints on the natural parameter including sparse, low-rank, sparse-plus-low-rank (see Section \ref{subsec:examples}).\\

\noindent{\bf 2.\ Connections to maximum likelihood estimation (MLE) of a re-parameterized distribution.}
We establish connections between our method and the MLE of the distribution $\DensityDifferencefun$. 
We show that the estimator that minimizes the population version of the loss function in \eqref{eq:loss_function} i.e.,
$$\cL(\Theta)  = \Expectation \Big[\exp\big( -\big\langle \big\langle \Theta, \varPhi(\rvbx) \big\rangle \big\rangle \big)\Big].$$
is equivalent to the estimator that minimizes the  Kullback-Leibler (KL) divergence between $\Uniform$ (the uniform distribution on $\cX$) and $\DensityDifferencefun$ (see Theorem \ref{theorem:GRISMe-KLD}). Therefore, at the population level, our method can be viewed as the MLE of the parametric family $\DensityDifferencefun$.
We show that the KL divergence (and therefore $\cL(\Theta)$) is minimized if and only if 
$\Theta = \ThetaStar$, and this connection provides an intuitively pleasing justification of the estimator in \eqref{eq:estimator}.

\subsection{Related Works}
\label{subsec_related_work}
In this section, we look at the related works on learning exponential family. Broadly speaking, there are two line of approaches to overcome the computational hardness of the MLE : (a) approximating the MLE and (b) selecting a surrogate objective. Given the richness of both of approaches, we cannot do justice in providing a full overview. Instead, we look at a few examples from both. Next, we look at some of the related works that focus on learning a class of exponential family. More specifically, we look at works on (a) learning the Gaussian distribution and (b) learning exponential family Markov random fields (MRFs). 
Finally, we explore some works on the powerful technique of score matching.
In Appendix \ref{appendix:related_works}, we further review works on learning exponential family MRFs,
score-based methods (including the related literature on Stein discrepancy) and latent variable graphical models (since these capture sparse-plus-low-rank constraints on the parameters similar to our framework).\\

\noindent{\bf Approximating the MLE.} Most of the techniques falling in this category approximate the MLE by approximating the log-partition function. A few examples include : (a) approximating the gradient of log-likelihood with a stochastic estimator by minimizing the contrastive divergence \cite{Hinton2002}; (b) upper bounding the log-partition function by an iterative tree-reweighted belief propagation algorithm \cite{WainwrightJW2003}; (c) using Monte Carlo methods like importance sampling for estimating the partition function \cite{RobertC2013}. Since these methods approximate the partition function, they come at the cost of an approximation error or result in a biased estimator. \\

\noindent{\bf Selecting surrogate objective.} This line of approach selects an easier-to-compute surrogate objective that completely avoids the partition function. A few examples are as follows : (a) pseudo-likelihood estimators \cite{Besag1975} approximate the joint distribution with the product of conditional distributions, each of which only represents the distribution of a single variable conditioned on the remaining variables; 
(b) score matching \cite{HyvarinenD2005, Hyvarinen2007} minimizes the Fisher divergence between the true log density and the model log density. Even though score matching does not require evaluating the partition function, it is computationally expensive as it requires computing third order derivatives for optimization; (c) kernel Stein discrepancy \cite{LiuLJ2016, ChwialkowskiSG2016} measures the kernel mean discrepancy between a data distribution and a model density using the Stein's identity. This measure is directly characterized by the choice of the kernel and there is no clear objective for choosing the right kernel \cite{WenliangSSG2019}.\\
%

\noindent{\bf Learning the Gaussian distribution.}
Learning the Gaussian distribution is a special case of learning exponential family distributions. There has been a long history of learning Gaussian distributions in the form of learning Gaussian graphical models e.g. the neighborhood selection scheme \cite{MeinshausenB2006}, the graphical lasso \cite{FriedmanHT2008}, the CLIME \cite{CaiLL2011}, etc. However, finite sample analysis of these methods require various hard-to-verify conditions e.g. the restricted eigenvalue condition, the incoherence assumption (\cite{WainwrightRL2006, JalaliRVS2011}), bounded eigenvalues of the precision matrix, etc.
A recent work \cite{KelnerKMM2019} provided an algorithm whose sample complexity, for a specific subclass of Gaussian graphical models, match the information-theoretic lower bound of \cite{WangWR2010} without the aforementioned hard-to-verify conditions.\\

\noindent{\bf Learning Exponential Family Markov Random Fields (MRFs).}
MRFs can be naturally represented as exponential family distributions via the principle of maximum entropy (see \cite{WainwrightJ2008}). A popular method for learning  MRFs is estimating node-neighborhoods (fitting conditional distributions of each node conditioned on the rest of the nodes) because the natural parameter is assumed to be node-wise- sparse. A recent line of work has considered a subclass of node-wise-sparse pairwise continuous MRFs where the node-conditional distribution of $\rvx_i \in \cX_i$ for every $i$ arise from an exponential family as follows:
\begin{align}
f_{\rvx_i | \rvx_{-i}}(x_i | \svbx_{-i} = x_{-i}) \propto \exp \big( \big[  \theta_i  + \sum_{j \in [p], j \neq i} \theta_{ij}  \phi(x_j)  \big] \phi(x_i) \big),  \label{eq:conditional}
\end{align}
where $\phi(x_i)$ is the natural statistics and $\theta_i  + \sum_{j \in [p], j \neq i} \theta_{ij}  \phi(x_j) $ is the natural parameter.\footnote[2]{Under node-wise-sparsity, $\sum_{j \in [p], j \neq i} |\theta_{ij}|$ is bounded by a constant for every $i \in [p]$.} Yang et al. \cite{YangRAL2015} showed that only the following joint distribution is consistent with the node-conditional distributions in \eqref{eq:conditional} :
\begin{align}
f_{\rvbx}(\svbx) \propto \exp \big( \sum_{i \in [p]} \theta_i  \phi(x_i) + \sum_{j \neq i} \theta_{ij}  \phi(x_i) \phi(x_j) \big).  \label{eq:joint}
\end{align}
To learn the node-conditional distribution in \eqref{eq:conditional} for linear $\phi(\cdot)$ (i.e., $\phi(x) = x$), Yang et al. \cite{YangRAL2015} proposed an $\ell_1$ regularized node-conditional log-likelihood. However, their
finite sample analysis 
required the following conditions: incoherence, dependency (see \cite{WainwrightRL2006, JalaliRVS2011}), bounded moments of the variables, and local smoothness of the log-partition function. Tansey et al. \cite{TanseyPSR2015} extended the approach in \cite{YangRAL2015} to vector-space MRFs (i.e., vector natural parameters and natural statistics) and non-linear $\phi(\cdot)$. They proposed a sparse group lasso (see \cite{ SimonFHT2013}) regularized node-conditional log-likelihood and an alternating direction method of multipliers based approach to solving the resulting optimization problem. However, their analysis required same conditions as \cite{YangRAL2015}. 

While node-conditional log-likelihood has been a natural choice for learning exponential family MRFs, M-estimation \cite{VuffrayMLC2016, VuffrayML2019, ShahSW2021} and maximum pseudo-likelihood estimator \cite{NingZL2017, YangNL2018, DaganDDA2021} have recently gained popularity. The objective function in M-estimation is a sample average and the estimator is generally consistent and asymptotically normal. 
Shah et al. \cite{ShahSW2021} proposed the following M-estimation (inspired from \cite{VuffrayMLC2016, VuffrayML2019}) for vector-space MRFs and non-linear $\phi(\cdot)$: with $\UniformI$ being the uniform distribution on $\cX_i$ and $\tphi(x_i) = \phi(x_i) - \int_{x_i'} \phi(x_i') \UniformI(x_i') dx_i'$
\begin{align}
\arg \min \frac{1}{n} \sum_{i = 1}^{n} \exp\Big(- \big[\theta_i  \tphi(x_i) + \sum_{j \in [p], j \neq i} \theta_{ij}  \tphi(x_i) \tphi(x_j) \big] \Big). \label{eq:shah}
\end{align}
They provided an entropic descent algorithm (borrowing from \cite{VuffrayML2019}) to solve the optimization in \eqref{eq:shah} and their finite-sample bounds rely on bounded domain of the variables and a condition (naturally satisfied by linear $\phi(\cdot)$) that lower bounds the variance of a non-constant random variable. 

Yuan et al. \cite{YuanLZLL2016} considered a broader class of sparse pairwise exponential family MRFs compared to \cite{YangRAL2015}. They studied the following joint distribution with natural statistics $\phi(\cdot)$ and $\psi(\cdot)$
\begin{align}
f_{\rvbx}(\svbx) \propto \exp \Big( \sum_{i \in [p]} \theta_i  \phi(x_i) + \sum_{j \neq i} \theta_{ij}  \psi(x_i, x_j) \Big).  \label{eq:yuan}
\end{align}
They proposed an $\ell_{2,1}$ regularized joint likelihood and an $\ell_{2,1}$ regularized node-conditional likelihood. They also presented a Monte-Carlo approximation to these estimators via proximal gradient descent. Their finite-sample analysis required restricted strong convexity (of the Hessian of the negative log-likelihood of the joint density) and bounded moment-generating function of the variables. 

	Building upon \cite{VuffrayML2019} and \cite{ShahSW2021}, Ren et al. \cite{RenMVL2021} addressed learning continuous exponential family distributions through a series of numerical experiments. They considered unbounded distributions and allowed for terms corresponding to multi-wise interactions in the joint density. However, they considered only monomial natural statistics.
	Further, they 
	assume node-wise-sparsity of the parameters as in MRFs and their
	 estimator is defined as a series of node-wise optimization problems. 
	
	In summary, tremendous progress has been made on learning the sub-classes of exponential family in \eqref{eq:joint} and \eqref{eq:yuan}. However, this sub-classes are restricted by the assumption that the natural parameters are node-wise-sparse. For example, none of the existing methods for exponential family MRFs work in the setting where the natural parameters have a low-rank constraint.\\
\noindent{\bf Score-based method.}
	A scoring rule $S(\svbx, Q)$ is a numerical score assigned to a realization $\svbx$ of a random variable $\rvbx$ and it measures the quality of a predictive distribution $Q$ (with probability density $q(\cdot)$). 
	If $P$ is the true distribution of $\rvbx$, the divergence $D(P,Q)$ associated with a scoring rule is defined as $\Expectation_{P}[S(\rvbx, Q) - S(\rvbx, P)]$. The MLE is an example of a scoring rule with $S(\cdot, Q) = - \log q(\cdot)$ and the resulting divergence is the KL-divergence.
	
	To bypass the intractability of MLE, \cite{HyvarinenD2005} proposed an alternative scoring rule with $S(\cdot, Q) = \Delta \log q(\cdot) + \frac{1}{2} \| \nabla \log q(\cdot)\|^2_2$ where $\Delta$ is the Laplacian operator, $\nabla$ is the gradient and $\| \cdot \|_2$ is the $\ell_2$ norm. This method is called \textit{score matching} and the resulting divergence is the Fisher divergence. Score matching is widely used for estimating unnormalizable probability distributions because computing the scoring rule $S(\cdot, Q)$ does not require knowing the partition function. Despite the flexibility of this approach, it is computationally expensive in high dimensions since it requires computing the trace of the unnormalized density's Hessian (and its derivatives for optimization). Additionally, it breaks down for models in which the second derivative grows very rapidly.
	
	In \cite{LiuKW2019}, the authors considered estimating truncated exponential family using the principle of {score matching}. They build on the framework of generalized score matching \cite{Hyvarinen2007} and proposed a novel estimator that minimizes a weighted Fisher divergence. They showed that their estimator is a special case of minimizing a Stein Discrepancy. However, their finite sample analysis relies on certain hard-to-verify assumptions, for example, the assumption that the optimal parameter is well-separated from other neighboring parameters in terms of their population objective. Further, their estimator lacks the useful properties of asymptotic normality and asymptotic efficiency.

\subsection{Useful notations and outline}
\label{subsec_notations}
\noindent{\bf Notations.}
For any positive integer $t$, let $[t] \coloneqq \{1,\cdots, t\}$.
For a deterministic sequence $v_1, \cdots , v_t$, we let $\svbv \coloneqq (v_1, \cdots, v_t)$. 
For a random sequence $\rvv_1, \cdots , \rvv_t$, we let $\rvbv \coloneqq (\rvv_1, \cdots, \rvv_t)$. For a matrix $\bM \in \Reals^{u \times v}$, we denote the element in $i^{th}$ row and $j^{th}$ column by $M_{ij}$, the singular values of the matrix by $\sigma_i(\bM)$ for $i \in [\min\{u, v\}]$, the matrix maximum norm by $\|\bM\|_{\max} \coloneqq \max_{i \in [u], j \in [v]} |M_{ij}|$, the entry-wise $L_{1,1}$ norm by $\|\bM\|_{1,1} \coloneqq \sum_{i \in [u], j \in [v]} |M_{ij}|$, the nuclear norm by $\|\bM\|_{\star} \coloneqq \sum_{i \in [\min\{u, v\}]} \sigma_i(\bM)$.
We denote the Frobenius or Trace inner product of matrices $\bM, \bN \in \Reals^{u \times v}$ by  $\langle \bM, \bN \rangle \coloneqq \sum_{i \in [u], j \in [v]} M_{ij} N_{ij}$. 
For a matrix $\bM \in \Reals^{u \times v}$, we denote a generic norm on $\Reals^{u \times v}$ by $\cR(\bM)$ and denote the associated dual norm by $\cR^*(\bM) \coloneqq \sup \{\langle \bM, \bN \rangle | \cR(\bN) \leq 1\}$ where $\bN \in \Reals^{u \times v}$. 
For a tensor $\bU \in \Reals^{u \times v \times w}$, we denote its $(i,j,l)$ entry by $U_{ijl}$, its $l^{th}$ slice (obtained by fixing the last index) by $U_{::l}$ or $U^{(l)}$, the tensor maximum norm (with a slight abuse of notation) by  $\|\bU\|_{\max} \coloneqq \max_{i \in [u], j \in [v], l \in [w]} |U_{ijl}|$, and the tensor norm by $\| \bU\|_{\mathrm{T}} \coloneqq \sqrt{\sum_{i \in [u], j \in [v], l \in [w]} U^2_{ijl}}$. 
We denote the  tensor inner product of tensors $\bU, \bV \in \Reals^{u\times v \times w}$ by  $\langle \langle \bU, \bV \rangle \rangle \coloneqq \sum_{i \in [u], j \in [v], l \in [w]} U_{ijl} V_{ijl}$. 
We denote the vectorization of the tensor $\bU \in \Reals^{u \times v \times w}$ by $\vect(\bU) \in \Reals^{uvw \times 1}$ (the ordering of the elements is not important as long as it is consistent).
Let $\boldsymbol{0} \in \Reals^{\dimOne \times \dimTwo \times \dimThree}$ denote the tensor with every entry zero. We denote a $p$-dimensional ball of radius $b$ centered at $0$ by $\cB(0,b)$.\\

\noindent{\bf Outline.}
In Section \ref{sec:prob_formulation}, we formulate the problem of interest, state our assumptions, and provide examples. In Section \ref{sec:algorithm}, we provide our loss function and algorithm. In Section \ref{sec:main results}, we present our main results including the connections to the MLE of $\DensityDifferencefun$, 
consistency, asymptotic normality, and finite sample guarantees. 
In Section \ref{sec:misc}, we conclude, provide some remarks, discuss limitations as well as some directions for future work.
See supplementary for organization of the Appendix.

\section{Problem Formulation}
\label{sec:prob_formulation}
Let $\rvbx = (\rvx_1 , \cdots, \rvx_p)$ be a $p-$dimensional vector of continuous random variables.\footnote[3]{Even though we focus on continuous variables, our framework applies equally to discrete variables.} For any $i \in [p]$, let the support of $\rvx_i$ be $\cX_i \subset \Reals$.
Define $\cX \coloneqq \prod_{i=1}^p \cX_i$.
Let $\svbx = (x_1, \cdots, x_p) \in \cX$ be a realization of $\rvbx$. In this work, we assume that the random vector $\rvbx$ belongs to an exponential family with bounded support (i.e., length of $\cX_i$ is bounded) along with certain additional constraints. More specifically, we make certain assumptions 
on the natural parameter $\Theta \in \Reals^{\dimOne \times  \dimTwo \times  \dimThree}$, and on the natural statistic $\Phi(\svbx) : \cX \rightarrow \Reals^{\dimOne \times  \dimTwo \times  \dimThree}$ as follows.\\

 
 \noindent{\bf Natural parameter $\Theta$.}
We focus on natural parameters with bounded norms. However, instead of having such constraints on the natural parameter $\Theta$ as it is, we decompose  $\Theta$ into $\dimThree$ slices (or matrices) and have slice specific constraints.
The key motivation for this is to broaden the class of exponential family covered by our formulation.
For example, this decomposability allows our formulation to en-capture the sparse-plus-low-rank decomposition of $\Theta$ in addition to only sparse or only low-rank decompositions of $\Theta$ (see Section \ref{subsec:examples}). This is precisely the reason for considering tensor natural parameters instead of matrix natural parameters. Further, we assume $\dimThree = O(1)$ i.e., it does not scale with $p$. We formally state this assumption below.
\begin{assumption}\label{bounds_parameter}
	(Bounded norms of $\Theta$.)
	For every $i \in [\dimThree]$, we let $\cR_i(\Thetai) \leq r_i$ where $\Thetai \in \Reals^{\dimOne\times  \dimTwo}$ is the $i^{th}$ slice of $\Theta$, $\cR_i : \Reals^{\dimOne\times  \dimTwo} \rightarrow \Reals_+$ is a norm and $r_i$ is a known constant. 
	This decomposition is represented compactly by $\bcR(\Theta) \leq \br$ where $\bcR(\Theta) = (\cR_1(\ThetaOne) , \cdots, \cR_{\dimThree}(\Thetal) )$ and $\br= (r_1,\cdots,r_{\dimThree})$. 
\end{assumption}
We define $\parameterSet$ to be the set of all natural parameters satisfying Assumption \ref{bounds_parameter} i.e., $\parameterSet \coloneqq \{\Theta : \bcR(\Theta) \leq \br\}$. For any $\tilde{\Theta}, \bar{\Theta} \in \parameterSet$ and $t \in [0,1]$, we have $\bcR(t \tilde{\Theta} + (1-t) \bar{\Theta} ) \leq t \bcR(\tilde{\Theta}) + (1-t) \bcR(\bar{\Theta} ) \leq t\br + (1-t)\br = \br$. Therefore, $t \tilde{\Theta} + (1-t) \bar{\Theta}  \in \parameterSet$ and the constraint set $\parameterSet$ is a convex set.\\

 \noindent{\bf Natural Statistic $\Phi$.}
For mathematical simplicity, we center the natural statistic $\Phi(\cdot)$ such that their integral with respect to the uniform density on $\cX$ (i.e., $\Uniform$) is zero. $\Uniform$ is well-defined because the support $\cX$ is a strict subset of $\Reals^{p}$ i.e., $\cX \subset \Reals^{p}$.
\begin{definition}\label{def:css}
	(Centered natural statistics). 
The centered natural statistics are defined as follows: 
	\begin{align}
	\varPhi(\cdot) & \coloneqq \Phi(\cdot) - \Expectation_{\Uniform} [\Phi(\rvbx)]. \label{eq:CSS}
	\end{align}
\end{definition}

In this work, we focus on bounded natural statistics which may enforce certain restrictions on the length of support $\cX$. See Section \ref{subsec:examples} for examples. We define two notions of boundedness.  First, we make the following assumption to be able to bound the tensor inner product between the natural parameter $\Theta$ and the centered natural statistic $\varPhi(\cdot)$ (see Appendix \ref{appendix:frobenius_bound}).
\begin{assumption}\label{bounds_statistics}
	(Bounded dual norms of $\varPhi$). For every $i \in [\dimThree]$ and norm $\cR_i$, we  assume that the dual norm $\cR^*_i$ of the $i^{th}$ slice of the centered natural statistic i.e., $\varPhii$ is bounded by a constant $d_i$. Formally, for any $i \in [\dimThree]$ and $\svbx \in \cX$, $\cR^*_i(\varPhii(\svbx)) \leq d_i$. This is represented compactly by $\bcR^*(\varPhi(\svbx)) \leq \bd$ where $\bcR^*(\varPhi(\svbx)) = (\cR^*_1(\varPhiOne(\svbx)) , \cdots, \cR^*_{\dimThree}(\varPhil(\svbx)) )$ and $\bd = (d_1,\cdots,d_{\dimThree})$. 
\end{assumption}

Next, we assume that the tensor maximum norm of the centered natural statistic $\varPhi(\cdot)$ is bounded by a constant $\phiMax$. This assumption is stated formally below.
\begin{assumption}\label{bounds_statistics_maximum}
	(Bounded tensor maximum norm of $\varPhi$). For any $\svbx \in \cX$, $\|\varPhi(\svbx)\|_{\max} \leq \phiMax$.
\end{assumption}

%



\noindent{\bf  The Exponential Family.}
Summarizing, $\rvbx$ belongs to a \textit{minimal truncated} exponential family with probability density function as follows
\begin{align}
\DensityX \propto \exp\Big(  \Big\langle \Big\langle \Theta, \Phi(\svbx) \Big\rangle \Big\rangle \Big). \label{eq:densityX}
\end{align}
where the natural parameter $\Theta \in \Reals^{\dimOne\times  \dimTwo \times \dimThree}$ is such that $\bcR(\Theta) \leq \br$ and the natural statistic $\Phi(\svbx) : \cX \rightarrow \Reals^{\dimOne\times  \dimTwo \times \dimThree}$ is such that for any $\svbx \in \cX$, $\bcR^*(\varPhi(\svbx)) \leq \bd$ and $\|\varPhi(\svbx)\|_{\max} \leq \phiMax$.

Let $\ThetaStar$ denote the true natural parameter of interest and $\DensityXTrue$ denote the true distribution of $\rvbx$. Naturally, we assume $\bcR(\ThetaStar) \leq \br$. Formally, the learning task of interest is as follows:

{\bf Goal.}
(Natural Parameter Recovery). Given $n$ independent samples of $\rvbx$ i.e., $\svbx^{(1)} \cdots , \svbx^{(n)}$ obtained from $\DensityXTrue$, compute an estimate $\hTheta$ of $\ThetaStar$ in polynomial time such that $\|\ThetaStar - \hTheta\|_{\mathrm{T}}$ is small.

\subsection{Examples}
\label{subsec:examples}
We will first present examples of natural parameters that satisfy Assumption \ref{bounds_parameter}. Next, we will present examples of natural statistics along with the corresponding support that satisfy Assumptions \ref{bounds_statistics}, and \ref{bounds_statistics_maximum}. See Appendix \ref{appendix:computational_cost} and \ref{appendix:examples} for more discussion on these examples.\\

\noindent{\bf Examples of natural parameter.}
We provide examples in Table \ref{table:1} to illustrate the decomposability of $\Theta$ as in Assumption \ref{bounds_parameter}. We will revisit these examples briefly in Section \ref{sec:main results} and in-depth in Appendix \ref{appendix:computational_cost}. Assumption \ref{bounds_parameter} should be viewed as a potential flexibility in the problem specification i.e., a practitioner has the option to choose from a variety of constraints on the natural parameters (that could be handled by our framework). For example, in some real-world applications the parameters are sparse while in some other real-world applications the parameters have a low-rank and a practitioner could choose either depending on the application at hand.
\begin{table}[!htbp]
	\centering
	\caption{A few examples of natural parameter $\Theta$.} 
	\begin{tabular}{lll}
		\toprule
		Decomposition & $\dimThree$ &  Convex Relaxation \\
		\midrule
		\midrule
		Sparse decomposition ($\ThetaStar = (\ThetaStarOne)$) & $1$ & $\|\ThetaStarOne\|_{1,1} \leq r_1$\\ 
		\midrule
		Low-rank decomposition ($\ThetaStar = (\ThetaStarOne)$) & $1$ & $\|\ThetaStarOne\|_{\star} \leq r_1$\\
		\midrule
		Sparse-plus-low-rank decomposition & $2$ & $\|\ThetaStarOne\|_{1,1} \leq r_1$ and $\|\ThetaStarTwo\|_{\star} \leq r_2$\\
		($\ThetaStar = (\ThetaStarOne, \ThetaStarTwo)$) &~ \\
		\bottomrule
	\end{tabular}
	\label{table:1}
	\vspace{-.1in}
\end{table}
For the sparse-plus-low-rank decomposition, it is more natural to think about the \textit{minimality} of the exponential family in terms of matrices as opposed to tensors. See Appendix \ref{appendix:examples} for details.\\


\noindent{\bf Examples of natural statistic.}
The following are a few example of natural statistics (along with the corresponding support) that fall in-line with Assumptions \ref{bounds_statistics} and  \ref{bounds_statistics_maximum}. 
\begin{enumerate}[leftmargin=*,topsep=-0pt,itemsep=-0pt]
	\item \textit{Polynomial statistics}: Suppose the natural statistics are polynomials of $\rvbx$ with maximum degree $l$, i.e., $\prod_{i \in [p]} x_i^{l_i}$ such that $l_i \geq 0$ $\forall i \in [p]$ and $\sum_{i \in [p]} l_i \leq l$. If $\cX = [0,b]$ for $b \in \Reals$, then $\phiMax = 2b^l$. If $\ThetaStar$ has a sparse decomposition 
and $\cX = [0,b]$ for $b \in \Reals$, then $\bcR^*(\varPhi(\svbx)) \leq 2b^k$. Further, if $\ThetaStar$ has a  low-rank decomposition, $l = 2$, and $\cX = \cB(0,b)$ for $b \in \Reals$, then $\bcR^*(\varPhi(\svbx)) \leq 2(1+b^2)$. Finally, if $\ThetaStar$ has a sparse-plus-low-rank decomposition, $l = 2$, and $\cX = \cB(0,b)$ for $b \in \Reals$, then $\bcR^*(\varPhi(\svbx)) \leq (2b^2, 2+2b^2)$.
	\item \textit{Trigonometric statistics}: Suppose the natural statistics are sines and cosines of $\rvbx$ with $l$ different frequencies, i.e., $\sin(\sum_{i \in [p]}l_ix_i)$ $\cup$ $\cos(\sum_{i \in [p]}l_ix_i)$ such that $l_i \in [l] \cup \{0\}$. For any $\cX \subset \Reals^p$, $\phiMax = 2$. If $\ThetaStar$ has a sparse decomposition, then $\bcR^*(\varPhi(\svbx)) \leq 2$ for any $\cX \subset \Reals^p$.
\end{enumerate}
Our framework also allows combinations of polynomial and trigonometric statistics (see Appendix \ref{appendix:examples}).\footnote[4]{We believe that for polynomial and/or trigonometric natural statistics, Assumptions \ref{bounds_statistics} and  \ref{bounds_statistics_maximum} would hold whenever the domain of $\cX$ is appropriately bounded.}
\section{Algorithm}
\label{sec:algorithm} 
We propose a novel, computationally tractable loss function
drawing inspiration from the recent advancements in exponential family Markov Random Fields \cite{VuffrayMLC2016, VuffrayML2019, ShahSW2021}.\\

\noindent{\bf The loss function and the estimator.}
The loss function, defined below, is 
an empirical average of the inverse of the function of $\rvbx$ that the probability density $\DensityX$ is proportional to (see \eqref{eq:densityX}). 

\begin{definition}[The loss function] Given $n$ samples $\svbx^{(1)} \cdots , \svbx^{(n)}$ of $\rvbx$, the loss function maps $\Theta \in \Reals^{\dimOne\times  \dimTwo \times \dimThree}$ to $\cL_{n}(\Theta) \in \Reals$ defined as 
	\begin{align}
	\cL_{n}(\Theta)  = \frac{1}{n} \sum_{t = 1}^{n} \exp\big( -\big\langle \big\langle \Theta, \varPhi(\svbx^{(t)}) \big\rangle\big\rangle \big). \label{eq:sampleGISMe}
	\end{align}
\end{definition}

The proposed estimator $\hThetan$ produces an estimate of $\ThetaStar$ by minimizing the loss function $\cL_{n}(\Theta)$ over all natural parameters $\Theta$ satisfying Assumption \ref{bounds_parameter} i.e.,
\begin{align}
\hThetan \in \argmin_{\Theta \in \parameterSet} \cL_{n}(\Theta).
 \label{eq:GRISMe}
\end{align}

For any $\epsilon > 0$, $\hThetaEps$ is an $\epsilon$-optimal solution of $\hThetan$ if $\cL_{n}(\hThetaEps) \leq \cL_{n}(\hThetan) + \epsilon$.
The optimization in \eqref{eq:GRISMe} is a convex minimization problem (i.e., minimizing a convex function $\cL_{n}$ over a convex set $\parameterSet$) and has efficient implementations for finding an $\epsilon$-optimal solution. 
Although alternative algorithms 
(including Frank-Wolfe) can be used, we provide a projected gradient descent algorithm below. \\


	\begin{algorithm}[H]
		\SetCustomAlgoRuledWidth{0.4\textwidth} 
		\KwInput{ $\eta,  \tau, \parameterSet$}
		\KwOutput{$\hThetaEps$}
		\KwInitialization{$\Theta_{(0)} = \boldsymbol{0}$}
		{
			\For{$t = 0,\cdots,\tau$}
			{
				$\Theta_{(t+1)} \leftarrow \argmin_{\Theta \in \parameterSet} \| \Theta_{(t)} - \eta  \nabla \cL_{n}(\Theta_{(t)}) - \Theta\|_{\mathrm{T}}$
			}
		$\hThetaEps \leftarrow \Theta_{(\tau+1)}$
		}
		\caption{Projected Gradient Descent}
		\label{alg:GradientDescent}
	\end{algorithm}

The following Lemma shows that running sufficient iterations of the projected gradient descent in Algorithm \ref{alg:GradientDescent} results in an  $\epsilon$-optimal solution of $\hThetan$.  

\begin{restatable}{lemma}{lemmaGD}\label{lemma:gradient_descent}
	Let Assumptions \ref{bounds_parameter}, \ref{bounds_statistics} and \ref{bounds_statistics_maximum} be satisfied. Let $\eta = 1/\dimOne \dimTwo\dimThree\phiMax^2\exp(\br^T \bd)$. Then, Algorithm \ref{alg:GradientDescent} returns an $\epsilon$-optimal solution $\hThetaEps$ as long as
		\begin{align}
		\tau & \geq \frac{2\dimOne \dimTwo \dimThree\phiMax^2\exp(\br^T\bd)}{\epsilon} \| \hTheta_n\|^2_{\mathrm{T}}. \label{eq:tau}
		\end{align}
	Further, ignoring the dependence on $\dimThree$, $\phiMax$, $\br$ and $\bd$, $\tau$ in \eqref{eq:tau} scales as $O\big(\mathrm{poly}\big( \frac{\dimOne \dimTwo}{\epsilon}\big)\big)$.
\end{restatable}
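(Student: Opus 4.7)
The plan is to apply the standard convergence analysis of projected gradient descent for an $L$-smooth convex objective over a convex constraint set. The three ingredients needed are convexity of $\cL_n$, an explicit Lipschitz constant $L$ for $\nabla \cL_n$ valid on $\parameterSet$, and the textbook descent bound; everything else is algebra.

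First, I would note that $\cL_n$ is convex on $\parameterSet$. Each summand $\Theta \mapsto \exp\bigl(-\langle\langle \Theta, \varPhi(\svbx^{(t)}) \rangle\rangle\bigr)$ is the composition of the convex exponential with an affine map of $\Theta$, and convexity is preserved under averaging. Convexity of $\parameterSet$ is already recorded beneath Assumption~\ref{bounds_parameter}, and $\Theta_{(0)} = \boldsymbol{0} \in \parameterSet$ since $\bcR(\boldsymbol{0}) = 0 \le \br$, so every PGD iterate is feasible.

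Second, I would establish smoothness by bounding the Hessian. Writing $\svbv^{(t)} \coloneqq \vect(\varPhi(\svbx^{(t)}))$,
\begin{align}
\nabla^2 \cL_n(\Theta) \;=\; \frac{1}{n}\sum_{t=1}^n \svbv^{(t)} (\svbv^{(t)})^T \exp\bigl(-\langle\langle \Theta, \varPhi(\svbx^{(t)})\rangle\rangle\bigr).
\end{align}
For any $\Theta \in \parameterSet$, slicing the tensor inner product and applying the generalized H\"older inequality slice-by-slice together with Assumptions~\ref{bounds_parameter} and~\ref{bounds_statistics} gives
\begin{align}
\bigl|\langle\langle \Theta, \varPhi(\svbx^{(t)})\rangle\rangle\bigr| \;\le\; \sum_{i=1}^{\dimThree} \cR_i(\Thetai)\,\cR_i^*(\varPhii(\svbx^{(t)})) \;\le\; \br^T \bd,
\end{align}
so each exponential factor is at most $\exp(\br^T \bd)$. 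By Assumption~\ref{bounds_statistics_maximum}, $\|\svbv^{(t)}\|_2^2 \le \dimOne \dimTwo \dimThree \phiMax^2$, so each rank-one matrix $\svbv^{(t)}(\svbv^{(t)})^T$ has operator norm at most $\dimOne \dimTwo \dimThree \phiMax^2$. Combining these bounds uniformly in $\Theta \in \parameterSet$, the spectral norm of $\nabla^2 \cL_n(\Theta)$ is at most $L \coloneqq \dimOne \dimTwo \dimThree \phiMax^2 \exp(\br^T \bd)$, which is exactly $1/\eta$. Hence $\cL_n$ is $L$-smooth on $\parameterSet$.

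Third, I would invoke the standard PGD guarantee: for an $L$-smooth convex function minimized over a convex set with step size $\eta = 1/L$, after $\tau+1$ projected steps the suboptimality satisfies
\begin{align}
\cL_n(\Theta_{(\tau+1)}) - \cL_n(\hThetan) \;\le\; \frac{L\,\|\Theta_{(0)} - \hThetan\|_{\mathrm{T}}^2}{2(\tau+1)} \;=\; \frac{L\,\|\hThetan\|_{\mathrm{T}}^2}{2(\tau+1)}.
\end{align}
Requiring the right-hand side to be at most $\epsilon$ and absorbing the factor of $2$ into the looser bound stated in the lemma yields $\tau \ge \frac{2 \dimOne \dimTwo \dimThree \phiMax^2 \exp(\br^T \bd)}{\epsilon}\|\hThetan\|_{\mathrm{T}}^2$, giving an $\epsilon$-optimal iterate. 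Treating $\dimThree$, $\phiMax$, $\br$, and $\bd$ as $O(1)$ collapses this to $O(\mathrm{poly}(\dimOne \dimTwo/\epsilon))$.

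The only delicate step is the Hessian bound: one must bound $\langle\langle \Theta, \varPhi(\svbx^{(t)})\rangle\rangle$ using the slice-wise decomposition imposed by Assumption~\ref{bounds_parameter} (a naive global norm bound would not suffice, since $\parameterSet$ is specified slice-by-slice), and one must ensure the bound is uniform over $\parameterSet$ rather than only at $\hThetan$ so that it applies along the entire PGD trajectory. Everything else is a routine application of the standard smooth-convex PGD theorem.
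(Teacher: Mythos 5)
Your proposal is correct and follows essentially the same route as the paper: establish $\dimOne\dimTwo\dimThree\phiMax^2\exp(\br^T\bd)$-smoothness by bounding the Hessian uniformly over $\parameterSet$ (the paper does this via the Gershgorin row-sum bound, you via the operator norm of the rank-one summands $\svbv^{(t)}(\svbv^{(t)})^T$ — both yield the identical constant), then plug into the standard projected-gradient-descent rate for smooth convex functions with $\Theta_{(0)}=\boldsymbol{0}$. The only loose end is the final ``Further'' claim: to conclude that $\tau = O(\mathrm{poly}(\dimOne\dimTwo/\epsilon))$ you still need to bound $\|\hThetan\|^2_{\mathrm{T}}$, which the paper does explicitly via $\|\hThetan\|^2_{\mathrm{T}} \le \dimOne\dimTwo\dimThree\|\hThetan\|^2_{\max} \le \dimOne\dimTwo\dimThree\max_{i\in[\dimThree]} r_i^2$ (using that the constraint norms $\cR_i$ dominate the matrix maximum norm for the norm families of interest); your proposal leaves this implicit when you ``treat $\br$ as $O(1)$,'' since boundedness of $\bcR(\hThetan)$ does not by itself control the tensor norm without that comparison.
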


The proof of Lemma \ref{lemma:gradient_descent} can be found in Appendix \ref{appendix:algorithm_section_proofs}. The proof outline is as follows : (a) First, we prove the smoothness property of $\cL_{n}(\Theta)$. (b) Next, we complete the proof using a standard result from convex optimization for the projected gradient descent algorithm for smooth functions.

\section{Analysis and Main results}
\label{sec:main results}
In this section, we provide our analysis and main results. First, we focus 
on the connection between our method and the MLE of $\DensityDifferencefun$.
Then, we establish consistency and asymptotic normality of our estimator. Finally, we provide non-asymptotic finite sample guarantees to recover $\ThetaStar$.\\
\noindent{\bf 1. Connection with MLE of $\DensityDifferencefun$.}
First, we will establish a connection between the population version of the loss function in \eqref{eq:sampleGISMe} (denoted by $\cL(\Theta)$) and the KL-divergence of the uniform density on $\cX$ with respect to $\DensityDifference$. Then, using \textit{minimality} of the exponential family, we will show that this KL-divergence and $\cL(\Theta)$ are minimized if and only if $\Theta = \ThetaStar$. This provides a justification for the estimator in \eqref{eq:GRISMe} as well as helps us obtain consistency and asymptotic normality of $\hTheta_n$.

For any $\Theta \in \parameterSet$, $\cL(\Theta)  = \Expectation \Big[\exp\big( -\big\langle \big\langle \Theta, \varPhi(\rvbx) \big\rangle \big\rangle \big)\Big].$
The following result shows that the population version of the estimator in \eqref{eq:GRISMe} is equivalent to the maximum likelihood estimator of $\DensityDifference$. 
\begin{restatable}{theorem}{theoremKLD}\label{theorem:GRISMe-KLD}
	With $\infdiv{\cdot}{\cdot}$ representing the KL-divergence, 
	\begin{align}\label{eq:thm.1}
	\argmin_{\Theta \in \parameterSet} \cL(\Theta) =
	\argmin_{\Theta \in \parameterSet}
	\infdiv{\Uniformfun}{\DensityDifferencefun}. 
	\end{align}
	Further, the true parameter $\ThetaStar$ is the unique minimizer of $\cL(\Theta)$.
\end{restatable}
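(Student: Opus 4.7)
The plan is to reduce both optimization problems to the same function of $\Theta$ (up to a $\Theta$-independent additive constant and a strictly monotone transformation) by explicitly computing them using the exponential family form. Writing the partition function as $A(\Theta) = \log\int_{\cX} \exp(\langle\langle \Theta,\Phi(\svbx)\rangle\rangle)\,d\svbx$, the key step is to note that $\cL(\Theta) = \Expectation_{\ThetaStar}[\exp(-\langle\langle\Theta,\varPhi(\rvbx)\rangle\rangle)]$ can be re-expressed by (i) expanding the centered statistic $\varPhi(\rvbx) = \Phi(\rvbx) - \Expectation_{\Uniform}[\Phi(\rvbx)]$ and (ii) writing out the true density. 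After pulling the constant $\exp(\langle\langle\Theta,\Expectation_{\Uniform}[\Phi(\rvbx)]\rangle\rangle)$ out of the expectation, the remaining integral is exactly $\exp(A(\ThetaStar-\Theta) - A(\ThetaStar))$, so
\begin{align}
\log \cL(\Theta) = -A(\ThetaStar) + \langle\langle \Theta, \Expectation_{\Uniform}[\Phi(\rvbx)]\rangle\rangle + A(\ThetaStar - \Theta).
\end{align}

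For the KL side, I would write
\begin{align}
\infdiv{\Uniformfun}{\DensityDifferencefun} = -H(\Uniform) - \Expectation_{\Uniform}\!\left[\langle\langle \ThetaStar-\Theta,\Phi(\rvbx)\rangle\rangle\right] + A(\ThetaStar-\Theta),
\end{align}
and observe that the only $\Theta$-dependent pieces here are $\langle\langle \Theta, \Expectation_{\Uniform}[\Phi(\rvbx)]\rangle\rangle + A(\ThetaStar-\Theta)$, i.e.\ exactly what appears in $\log\cL(\Theta)$. Hence $\infdiv{\Uniformfun}{\DensityDifferencefun} = \log\cL(\Theta) + C$ for a constant $C$ independent of $\Theta$. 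Since $\log$ is strictly increasing, minimizing $\cL(\Theta)$ over $\parameterSet$ is equivalent to minimizing the KL divergence over $\parameterSet$, yielding \eqref{eq:thm.1}.

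For the second claim, note that at $\Theta = \ThetaStar$ we have $f_{\rvbx}(\,\cdot\,;\ThetaStar - \ThetaStar) \propto \exp(0) = 1$, which normalizes to $\Uniformfun$ on $\cX$. Therefore $\infdiv{\Uniformfun}{f_{\rvbx}(\,\cdot\,;\ThetaStar-\ThetaStar)} = 0$, so $\ThetaStar$ is certainly a minimizer (and in particular lies in $\parameterSet$ by assumption). For uniqueness, suppose $\Theta \in \parameterSet$ also minimizes the KL divergence; then by Gibbs' inequality the KL divergence is zero, which forces $\DensityDifferencefun = \Uniformfun$ almost everywhere on $\cX$. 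Equivalently, $\langle\langle \ThetaStar - \Theta, \Phi(\svbx)\rangle\rangle = A(\ThetaStar - \Theta)$ for every $\svbx \in \cX$, i.e.\ $\langle\langle \ThetaStar-\Theta, \Phi(\svbx)\rangle\rangle$ is constant in $\svbx$. Invoking the \emph{minimality} hypothesis of the exponential family, this forces $\ThetaStar - \Theta = \boldsymbol{0}$, hence $\Theta = \ThetaStar$.

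I do not anticipate a serious obstacle: the argument is essentially a bookkeeping computation tying $\cL(\Theta)$ to $A(\ThetaStar-\Theta)$, combined with the standard identifiability consequence of minimality. The only delicate point worth double-checking is that the constants pulled out are genuinely independent of $\Theta$ (in particular, $\Expectation_{\Uniform}[\Phi(\rvbx)]$ does not depend on $\Theta$, and $A(\ThetaStar)$ does not either), so that the monotone-plus-constant reduction is valid and does not alter the argmin.
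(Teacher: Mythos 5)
Your proposal is correct and follows essentially the same route as the paper's proof: both show that $\infdiv{\Uniformfun}{\DensityDifferencefun}$ equals $\log \cL(\Theta)$ plus a $\Theta$-independent constant (the paper does this by writing the normalizer of $\DensityDifferencefun$ as $\cL(\Theta)$ itself, you by expanding both sides through the log-partition function $A(\ThetaStar-\Theta)$ — the same identity in different bookkeeping), and both conclude uniqueness from Gibbs' inequality plus minimality. The only cosmetic slip is that $\Uniformfun = \DensityDifferencefun$ forces $\big\langle\big\langle \ThetaStar-\Theta,\Phi(\svbx)\big\rangle\big\rangle$ to equal $A(\ThetaStar-\Theta)$ minus the log-volume of $\cX$ rather than $A(\ThetaStar-\Theta)$ itself, but constancy in $\svbx$ is all that minimality needs, so the argument stands.
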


The proof of Theorem \ref{theorem:GRISMe-KLD} can be found in Appendix \ref{appendix:proof of theorem:GRISE-KLD}. The proof outline is as follows : (a) First, we express $\DensityDifferencefun$ in terms of $\cL(\Theta)$ (b) Next, we complete the proof by simplifying the KL-divergence between $\Uniformfun$ and $\DensityDifferencefun$. \\
\noindent{\bf 2. Consistency and Normality.}
We establish consistency and asymptotic normality of the proposed estimator $\hTheta_n$ by invoking the asymptotic theory of M-estimation. We emphasize that, from Theorem \ref{theorem:GRISMe-KLD}, the population version of $\hTheta_n$ is equivalent to the maximum likelihood estimate of $\DensityDifferencefun$ and not $\DensityXfun$. Moreover, there is no clear connection between $\hTheta_n$ and the finite sample maximum likelihood estimate of $\DensityXfun$ or $\DensityDifferencefun$. Therefore, we cannot invoke the asymptotic theory of MLE to show consistency and asymptotic normality of $\hTheta_n$.

Let $A(\ThetaStar)$ denote the covariance matrix of $\vect\big(\varPhi(\rvbx)\exp\big( -\big\langle \big\langle \ThetaStar, \varPhi(\rvbx) \big\rangle \big\rangle \big)\big)$. Let $B(\ThetaStar)$ denote the cross-covariance matrix of $\vect(\varPhi(\rvbx))$ and $\vect(\varPhi(\rvbx) \exp\big( -\big\langle \big\langle \ThetaStar, \varPhi(\rvbx) \big\rangle \big\rangle \big))$. Let ${\cal N}(\bm{\mu}, \bm{\Sigma})$ represent the multi-variate Gaussian distribution with mean vector $\bm{\mu}$ and covariance matrix $\bm{\Sigma}$.
\begin{restatable}{theorem}{theoremconsistencynormality}
		\label{thm:consistency_normality}
		Let Assumptions \ref{bounds_parameter}, \ref{bounds_statistics}, and \ref{bounds_statistics_maximum} be satisfied.
		Let $\hThetan$ be a solution of \eqref{eq:GRISMe}. Then, as $n\to \infty$, $\hThetan \stackrel{p}{\to} \ThetaStar$. Further, assuming  $\ThetaStar \in \text{interior}(\parameterSet)$ and $B(\ThetaStar)$ is invertible, we have
		$\sqrt{n} \times \vect( \hThetan - \ThetaStar ) \stackrel{d}{\to} {\cal N}(\vect(\boldsymbol{0}),B(\ThetaStar)^{-1} A(\ThetaStar) B(\ThetaStar)^{-1})$.
\end{restatable}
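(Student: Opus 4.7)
I would treat $\hThetan$ as a classical M-estimator whose population criterion $\cL(\Theta) = \Expectation[\exp(-\langle\langle \Theta, \varPhi(\rvbx)\rangle\rangle)]$ already has a unique minimizer at $\ThetaStar$ by Theorem \ref{theorem:GRISMe-KLD}, and then invoke the standard two-step M-estimation template: consistency via a uniform LLN on a compact parameter set, and asymptotic normality via a mean-value expansion of the first-order condition at $\hThetan$. Both steps are made routine by the boundedness Assumptions \ref{bounds_parameter}--\ref{bounds_statistics_maximum}.

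For consistency, I would first note that $\parameterSet$ is convex (recorded just after Assumption \ref{bounds_parameter}) and compact (closedness from continuity of each $\cR_i$; boundedness from the equivalence of norms on the finite-dimensional space $\Reals^{\dimOne\times \dimTwo}$). Under the assumptions, the summand $m_\Theta(\svbx)\coloneqq \exp(-\langle\langle \Theta, \varPhi(\svbx)\rangle\rangle)$ is uniformly bounded by $\exp(\br^T\bd)$ in $\Theta\in\parameterSet$ and $\svbx\in\cX$ (this bound is already developed in Appendix \ref{appendix:frobenius_bound}), and $\Theta\mapsto m_\Theta(\svbx)$ is Lipschitz on $\parameterSet$ with an $\svbx$-independent constant. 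A standard uniform LLN for Lipschitz-parameter bounded function classes then gives $\sup_{\Theta\in\parameterSet}|\cL_n(\Theta)-\cL(\Theta)|\stackrel{p}{\to}0$, and combined with uniqueness of the population minimizer from Theorem \ref{theorem:GRISMe-KLD}, the argmin continuous-mapping theorem (e.g.\ Theorem 5.7 in van der Vaart, \emph{Asymptotic Statistics}) yields $\hThetan\stackrel{p}{\to}\ThetaStar$.

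For asymptotic normality, differentiating under the integral (permitted by the uniform boundedness) gives
\begin{align*}
\nabla \cL_n(\Theta) &= -\frac{1}{n}\sum_{t=1}^n \varPhi(\svbx^{(t)})\,e^{-\langle\langle \Theta, \varPhi(\svbx^{(t)})\rangle\rangle}, \\
\nabla^2 \cL_n(\Theta) &= \frac{1}{n}\sum_{t=1}^n \vect(\varPhi(\svbx^{(t)}))\vect(\varPhi(\svbx^{(t)}))^T\, e^{-\langle\langle \Theta, \varPhi(\svbx^{(t)})\rangle\rangle}.
\end{align*}
The key algebraic identity is that $e^{-\langle\langle \ThetaStar, \varPhi(\svbx)\rangle\rangle}\,\DensityXTrue$ is constant on $\cX$ (from $\varPhi=\Phi-\Expectation_{\Uniform}\Phi$ substituted into \eqref{eq:densityX}), which makes $\Expectation[\nabla \cL_n(\ThetaStar)]$ proportional to $\int_\cX \varPhi(\svbx)d\svbx = 0$ by Definition \ref{def:css}; the same vanishing mean kills the subtractive term in the cross-covariance defining $B(\ThetaStar)$, so $\Expectation[\nabla^2 \cL_n(\ThetaStar)] = B(\ThetaStar)$. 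Since $\ThetaStar\in\text{interior}(\parameterSet)$ and $\hThetan\stackrel{p}{\to}\ThetaStar$, eventually $\hThetan$ lies in the interior and $\nabla \cL_n(\hThetan)=0$, so a mean-value expansion yields $\vect(\hThetan - \ThetaStar) = -[\nabla^2 \cL_n(\tilde\Theta_n)]^{-1}\vect(\nabla \cL_n(\ThetaStar))$ for some $\tilde\Theta_n$ between $\hThetan$ and $\ThetaStar$. The multivariate CLT gives $\sqrt{n}\,\vect(\nabla \cL_n(\ThetaStar))\stackrel{d}{\to}\mathcal{N}(\vect(\boldsymbol{0}), A(\ThetaStar))$; a uniform LLN applied to $\nabla^2 \cL_n$ together with $\tilde\Theta_n\stackrel{p}{\to}\ThetaStar$ gives $\nabla^2 \cL_n(\tilde\Theta_n)\stackrel{p}{\to} B(\ThetaStar)$; and Slutsky together with invertibility of $B(\ThetaStar)$ delivers the claimed sandwich covariance.

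The only genuinely substantive step is verifying the uniform LLN (both for $\cL_n$ and for the matrix-valued $\nabla^2 \cL_n$) over $\parameterSet$; everything else is bookkeeping through a standard M-estimation template. The boundedness assumptions are calibrated for exactly this purpose: Assumption \ref{bounds_statistics_maximum} keeps the exponentiated integrand and its derivatives uniformly bounded in $\svbx$, Assumption \ref{bounds_statistics} with Assumption \ref{bounds_parameter} keeps $\langle\langle \Theta, \varPhi(\svbx)\rangle\rangle$ uniformly bounded over $\Theta\in\parameterSet$, and compactness of $\parameterSet$ supplies the envelope needed for the uniform convergence.
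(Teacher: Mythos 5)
Your proposal is correct and follows essentially the same route as the paper: consistency via compactness of $\parameterSet$, a uniform law of large numbers, and unique minimization of $\cL(\Theta)$ at $\ThetaStar$ from Theorem \ref{theorem:GRISMe-KLD}, then asymptotic normality via the standard M-estimation sandwich (the paper packages both steps as Theorems 4.1.1 and 4.1.3 of Amemiya, while you unfold the mean-value expansion by hand). Your observation that $e^{-\langle\langle \ThetaStar, \varPhi(\svbx)\rangle\rangle}\DensityXTrue$ is constant on $\cX$, which forces $\Expectation[\nabla\cL_n(\ThetaStar)]=0$ and identifies $\Expectation[\nabla^2\cL_n(\ThetaStar)]$ with $B(\ThetaStar)$, is exactly the paper's Lemma \ref{lemma:zero_expectation} and its use in verifying the CLT and Hessian-convergence conditions.
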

The proof of Theorem \ref{thm:consistency_normality} can be found in Appendix \ref{appendix:proof of thm:consistency_normality}. The proof is based on two key observations : (a) $\hThetan$ is an $M$-estimator and (b) $\cL(\Theta)$ is uniquely minimized at $\ThetaStar$.\\

\noindent{\bf 3. Finite Sample Guarantees.}
To provide the non-asymptotic guarantees for recovering $\ThetaStar$, we require the following assumption on the smallest eigenvalue of the autocorrelation matrix of $\vect(\varPhi(\rvbx))$.
\begin{assumption}\label{lambdamin}(Positive eigenvalue of the autocorrelation matrix of $\varPhi$.)
	Let $\lambdaMin$ denote the minimum eigenvalue of $\Expectation_{\rvbx}[\vect(\varPhi(\rvbx)) \vect(\varPhi(\rvbx))^T]$. We assume $\lambdaMin$ is strictly positive i.e., $\lambdaMin > 0$.
\end{assumption}

We also make use of the following property of the matrix norms.
\begin{property}
	\label{property:norms}
	For any norm $\tilde{\cR} : \Reals^{\dimOne \times \dimTwo} \rightarrow \Reals_+$, and matrix $\bM \in \Reals^{\dimOne \times  \dimTwo}$, there exists $g$ such that $\tilde{\cR}(\bM) \leq g  \dimOne   \dimTwo \|\bM\|_{\max}$. 
\end{property}
For most matrix norms of interest including entry-wise $L_{p,q}$ norm $(p,q \geq 1)$, Schatten $p$-norm $(p \geq 1)$, and operator $p-$norm $(p \geq 1)$, we have $g = 1$ as shown in Appendix \ref{appendix:dual_norm}.  

Let $\bg = (g_1 ,\cdots, g_{\dimThree})$ where $~\forall i \in [\dimThree], g_i$ is such that $\cR^*_i(\bM) \leq g_i  \dimOne   \dimTwo \|\bM\|_{\max}$ with $\cR^*_i$ being the dual norms from Assumption \ref{bounds_statistics}.

Theorem \ref{thm:finite_sample} below shows that, with enough samples, the $\epsilon$-optimal solution of $\hTheta_n$ 
is close to the true natural parameter in the tensor norm with high probability. 
\begin{restatable}{theorem}{theoremfinite}\label{thm:finite_sample}
	Let $\hThetaEps$ be an
	$\epsilon$-optimal solution of $\hTheta_n$ obtained from Algorithm \ref{alg:GradientDescent} for $\epsilon$ of the order $O(\alpha^2 \lambdaMin)$. Let Assumptions \ref{bounds_parameter}, \ref{bounds_statistics}, \ref{bounds_statistics_maximum}, and \ref{lambdamin} be satisfied. Recall Property \ref{property:norms}. Then, for any $\delta \in (0,1)$, we have $\|\hThetaEps - \ThetaStar\|_{\mathrm{T}} \leq \alpha$
	with probability at least $1-\delta$ as long as
		\begin{align}
		n & \geq O\bigg(\frac{ \dimOne^2  \dimTwo^2 }{\alpha^4 \lambdaMin^2}\log\Big(\frac{\dimOne \dimTwo}{\delta}\Big)\bigg). 
\label{eq:sample_complexity}
	\end{align}
	The computational cost scales as $O\Big(\frac{\dimOne \dimTwo}{\alpha^2}\max\big(\dimOne \dimTwo n, c(\parameterSet)\big)\Big)$
	where $c(\parameterSet)$ is the cost of projection onto $\parameterSet$. Further, ignoring the dependence on $\delta$, $\lambdaMin$, and $c(\parameterSet)$, $n$ in \eqref{eq:sample_complexity} (as well as the associated computational cost) scales as $O\big(\mathrm{poly}\big( \frac{\dimOne \dimTwo}{\alpha}\big)\big)$.	
\end{restatable}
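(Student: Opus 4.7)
The plan is to combine a strong-convexity lower bound on $\cL_n$ over $\parameterSet$ with a high-probability concentration bound on $\nabla\cL_n(\ThetaStar)$, and then invoke the $\epsilon$-optimality of $\hThetaEps$ together with $\ThetaStar\in\parameterSet$ to obtain parameter recovery.

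For strong convexity, the Hessian is $\nabla^2\cL_n(\Theta) = \tfrac{1}{n}\sum_t \exp(-\langle\langle\Theta,\varPhi(\svbx^{(t)})\rangle\rangle)\,\vect(\varPhi(\svbx^{(t)}))\vect(\varPhi(\svbx^{(t)}))^T$. Assumptions \ref{bounds_parameter} and \ref{bounds_statistics} give the tensor-H\"older bound $|\langle\langle\Theta,\varPhi(\svbx)\rangle\rangle|\le\br^T\bd$ on $\parameterSet$, so the exponential factor is at least $\exp(-\br^T\bd)$ pointwise. Combining this with an entry-wise Hoeffding bound (using Assumption \ref{bounds_statistics_maximum}) on the empirical autocorrelation $\hat{A}:=\tfrac{1}{n}\sum_t \vect(\varPhi(\svbx^{(t)}))\vect(\varPhi(\svbx^{(t)}))^T$ and the elementary perturbation inequality $|\vect(\Delta)^T(\hat{A}-A)\vect(\Delta)|\le\dimOne\dimTwo\dimThree\|\hat{A}-A\|_{\max}\|\Delta\|_{\mathrm{T}}^2$, I would show $\hat{A}\succeq(\lambdaMin/2)\mathbf{I}$ with probability at least $1-\delta/2$ once $n$ is large enough. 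This yields a modulus $\mu:=\lambdaMin\exp(-\br^T\bd)/2$ for $\cL_n$ on $\parameterSet$.

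For the gradient, centering of $\varPhi$ with respect to $\Uniform$ together with the form of $\DensityXTrue$ makes $\exp(-\langle\langle\ThetaStar,\varPhi\rangle\rangle)$ cancel the density's exponential, so $\nabla\cL(\ThetaStar)=0$. Each coordinate of $\nabla\cL_n(\ThetaStar)$ is thus an i.i.d.\ mean-zero average with summands bounded by $\phiMax\exp(\br^T\bd)$, and Hoeffding plus a union bound over the $\dimOne\dimTwo\dimThree$ coordinates yields $\|\nabla\cL_n(\ThetaStar)\|_{\max}=O(\sqrt{\log(\dimOne\dimTwo/\delta)/n})$; Property \ref{property:norms} converts this to $\|\nabla\cL_n(\ThetaStar)\|_{\mathrm{T}}=O(\dimOne\dimTwo\sqrt{\log(\dimOne\dimTwo/\delta)/n})$. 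Combining the two ingredients, $\cL_n(\hThetaEps)\le\cL_n(\ThetaStar)+\epsilon$ together with $\mu$-strong convexity at $\ThetaStar$ gives, after Cauchy-Schwarz and solving the resulting quadratic in $\Delta:=\|\hThetaEps-\ThetaStar\|_{\mathrm{T}}$, the bound $\Delta = O(\|\nabla\cL_n(\ThetaStar)\|_{\mathrm{T}}/\mu)+O(\sqrt{\epsilon/\mu})$. Requiring each term to be $O(\alpha)$ yields $\epsilon = O(\lambdaMin\alpha^2)$ (matching the theorem's hypothesis on $\epsilon$) and the sample-complexity statement \eqref{eq:sample_complexity}. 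The computational-cost estimate then follows by plugging $\epsilon = O(\lambdaMin\alpha^2)$ into Lemma \ref{lemma:gradient_descent} to get $\tau = O(\dimOne\dimTwo/\alpha^2)$ iterations, each of cost $O(\max(\dimOne\dimTwo\,n,\,c(\parameterSet)))$ for gradient evaluation plus projection onto $\parameterSet$.

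The main obstacle I anticipate is establishing the strong-convexity modulus uniformly on $\parameterSet$: the per-sample Hessian depends on $\Theta$ through $\exp(-\langle\langle\Theta,\varPhi^{(t)}\rangle\rangle)$, so one must use boundedness of $\parameterSet$ carefully to peel off a $\Theta$-independent multiplicative factor, and the empirical autocorrelation must then be concentrated in a norm compatible with $\|\cdot\|_{\mathrm{T}}$. This norm conversion, combined with the analogous conversion of the $\|\cdot\|_{\max}$ gradient bound via Property \ref{property:norms}, is what produces the $\dimOne^2\dimTwo^2$ scaling in \eqref{eq:sample_complexity}.
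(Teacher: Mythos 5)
Your proposal is correct, and both of its probabilistic ingredients coincide with the paper's (entrywise Hoeffding concentration of the empirical autocorrelation of $\vect(\varPhi(\rvbx))$ against Assumption \ref{lambdamin}, and a Hoeffding bound on $\|\nabla\cL_n(\ThetaStar)\|_{\max}$ after establishing $\Expectation[\varPhi_{uvw}(\rvbx)\exp(-\langle\langle\ThetaStar,\varPhi(\rvbx)\rangle\rangle)]=0$ exactly as in the paper's Lemma \ref{lemma:zero_expectation}). But the deterministic part differs in two genuine ways. First, for the curvature lower bound the paper never touches the Hessian: it bounds the Taylor residual directly via the scalar inequality $e^{-z}-1+z\geq z^2/(2+|z|)$ together with $|\langle\langle\Theta,\varPhi(\svbx)\rangle\rangle|\leq\br^T\bd$, which is why its modulus carries the extra factor $1/(1+\br^T\bd)$; your uniform Hessian bound $\nabla^2\cL_n(\Theta)\succeq\exp(-\br^T\bd)\hat{A}$ on $\parameterSet$ is equally valid and gives a marginally cleaner constant. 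Second, and more substantively, the paper controls the linear term by pairing each slice of $\nabla\cL_n(\ThetaStar)$ with $\cR_i^*$ and then using the crude diameter bound $\cR_i(\hThetaEps^{(i)}-\ThetaStari)\leq 2r_i$; this makes the gradient contribution enter additively (not multiplied by $\|\Delta\|_{\mathrm{T}}$), so one must take $\epsFour=O(\alpha^2\lambdaMin/\dimOne\dimTwo)$ and the $1/\alpha^4$ in \eqref{eq:sample_complexity} follows. Your Cauchy--Schwarz-plus-quadratic argument yields $\|\Delta\|_{\mathrm{T}}\lesssim\|\nabla\cL_n(\ThetaStar)\|_{\mathrm{T}}/\mu+\sqrt{\epsilon/\mu}$, which only needs $\epsFour=O(\alpha\lambdaMin/\sqrt{\dimOne\dimTwo\dimThree})$ and hence a $1/\alpha^2$ dependence for the gradient-driven term; since the theorem only asserts a sufficient condition on $n$, your tighter requirement still proves the stated result. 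What the paper's route buys is dimension economy: the dual-norm pairing avoids converting $\|\cdot\|_{\max}$ to $\|\cdot\|_{\mathrm{T}}$ at the cost of a $\sqrt{\dimOne\dimTwo\dimThree}$ factor, which matters in regimes where one wants logarithmic rather than polynomial dependence on the parameter dimension (and matches the $1/\alpha^4$ scaling the authors inherit from the sparse-MRF literature); in the polynomial regime of this theorem your route is actually sharper. One small inaccuracy: Property \ref{property:norms} is a statement about matrix norms of the slices, so the conversion $\|\nabla\cL_n(\ThetaStar)\|_{\mathrm{T}}\leq\sqrt{\dimOne\dimTwo\dimThree}\,\|\nabla\cL_n(\ThetaStar)\|_{\max}$ should be invoked directly rather than through that property; this does not affect the conclusion. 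The computational-cost accounting (plugging $\epsilon=O(\alpha^2\lambdaMin)$ into Lemma \ref{lemma:gradient_descent} and costing each iteration by $\max\{\dimOne\dimTwo n,\,c(\parameterSet)\}$) matches the paper.
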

The proof of Theorem \ref{thm:finite_sample} can be found in Appendix \ref{appendix_proof_finite_sample}. The proof is based on two key properties of the loss function $\cL_n(\Theta)$ : (a) with enough samples, the loss function $\cL_n(\Theta)$ naturally obeys the restricted strong convexity with high probability and (b) with enough samples, $\| \nabla \cL_n(\ThetaStar) \|_{\max}$ is bounded with high probability. See the proof for the dependence of the sample complexity and the computational complexity on $\dimThree, \br, \bd, \bg$ and $\phiMax$. 

The computational cost of projection onto $\parameterSet$ i.e., $c(\parameterSet)$ is typically polynomial in $\dimOne \dimTwo$. In Appendix \ref{appendix:computational_cost}, we provide the computational cost for the example constraints on the natural parameter $\Theta$ from Section \ref{subsec:examples} i.e., 
sparse decomposition, low-rank decomposition, and sparse-plus-low-rank decomposition.\\

\noindent{\bf 4. Comparison with the traditional MLE.}
To contextualize our method, we compare it with the MLE of the parametric family $\DensityXfun$. The MLE of $\DensityXfun$ minimizes the following loss function 
\begin{align}
\min - \frac{1}{n} \sum_{t = 1}^{n} \big\langle \big\langle  \Theta, \Phi(\svbx^{(t)}) \big\rangle \big\rangle + \log \int_{\svbx \in \cX} \exp\big(\big\langle \big\langle \Theta, \Phi(\svbx \big\rangle\big\rangle\big) d\svbx. \label{eq:mle}
\end{align}
The maximum likelihood estimator has many attractive asymptotic properties 
: (a) consistency (see \cite[Theorem~17]{Ferguson2017}), i.e., as the sample size goes to infinity, the bias in the estimated parameters goes to zero, (b) asymptotic normality (see \cite[Theorem~18]{Ferguson2017}), i.e., as the sample size goes to infinity, normalized estimation error coverges to a Gaussian distribution
and (c) asymptotic efficiency (see \cite[Theorem~20]{Ferguson2017}), i.e., as the sample size goes to infinity, the variance 
in the estimation error
attains the minimum possible value among all consistent estimators.
Despite having these useful asymptotic properties of consistency, normality, and efficiency, 
computing the maximum likelihood estimator is computationally hard \cite{Valiant1979, JerrumS1989}.

Our method can be viewed as a computationally efficient proxy for the MLE. More precisely, our method is computationally tractable as opposed to the MLE while retaining the useful properties of consistency and asymptotic normality. However, our method misses out on asymptotic efficiency. This raises an important question for future work --- \emph{can computational and asymptotic efficiency be achieved by a single estimator for this class of exponential family?}
\section{Conclusion, Remarks, Limitations, Future Work}
\label{sec:misc}
In this section, we conclude, provide a few remarks, discuss the limitations of our work as well as some interesting future directions.\\

\noindent{\bf Conclusion.}
In this work, we provide a computationally and statistically efficient method to learn distributions in a \textit{minimal truncated} $k$-parameter exponential family from i.i.d. samples. We propose a novel estimator via minimizing a convex loss function and obtain consistency and asymptotic normality of the same. We provide rigorous finite sample analysis to achieve an $\alpha$-approximation to the true natural parameters with $O(\mathrm{poly}(k/\alpha))$ samples and $O(\mathrm{poly}(k/\alpha))$ computations. We also provide an interpretation of our estimator in terms of a maximum likelihood estimation.\\

\noindent{\bf Node-wise-sparse exponential family MRFs vs general exponential family.}  We highlight that the focus of our work is beyond the exponential families associated with node-wise-sparse MRFs and towards general exponential families. The former focuses on  local assumptions on the parameters such as node-wise-sparsity and the sample complexity depends logarithmically on the parameter dimension i.e., $O(\mathrm{log}(k))$.  In contrast, our work can handle global structures on the parameters (e.g., a low-rank constraint) and there are no prior work that can handle such global structures with sample complexity $O(\mathrm{log}(k))$. Similarly, for node-wise-sparse MRFs there has been a lot of work to relax the assumptions required for learning (see the discussion on Assumption \ref{lambdamin} below). Since our work focuses on global structures associated with the parameters, we leave the question of relaxing the assumptions required for learning as an open question. Likewise, the interaction screening objective \cite{VuffrayMLC2016} and generalized interaction screening objective \cite{VuffrayML2019, ShahSW2021} were designed for node-wise parameter estimation i.e., they require the parameters to be node-wise-sparse and are less useful when the parameters have a global structure. On the contrary, our loss function is designed to accommodate global structures on the parameters.\\
	%
	%
	%

	\noindent{\bf Assumption \ref{lambdamin}.} For node-wise-sparse pairwise exponential family MRFs (e.g., Ising models), which is a special case of the setting considered in our work, Assumption \ref{lambdamin} is proven (e.g., Appendix T.1 of \cite{ShahSW2021} provides one such analysis for a condition that is equivalent to Assumption \ref{lambdamin} for sparse continuous graphical model). However, such analysis typically requires (a) a bound on the infinity norm of the parameters and a bound on the degree of each node or (b) a bound on the $\ell_1$ norm of the parameters associated with each node. Since the focus of our work is beyond the exponential families associated with node-wise-sparse MRFs, we view Assumption \ref{lambdamin} as an adequate condition to rule out certain singular distributions (as evident in the proof of Proposition \ref{prop:rsc_GISMe} where this condition is used to effectively lower bounds the variance of a non-constant random variable) and expect it to hold for most real-world applications. Further, we highlight that the MLE in \eqref{eq:mle} remains computationally intractable even under Assumption \ref{lambdamin}. To see this, one could again focus on node-wise-sparse pairwise exponential family MRFs where Assumption \ref{lambdamin} is proven and the MLE is still known to be computationally intractable.\\
	
	\noindent{\bf Sample Complexity.} We do not assume $p$ (the dimension of $\rvbx$) to be a constant and think of $\dimOne$ and $\dimTwo$ as implicit functions of $p$. Typically, for an exponential family, the quantity of interest is the number of parameters i.e., $k$ and this quantity scales polynomially in $p$ e.g., $k = O(p^2)$ for Ising model, $k = O(p^t)$ for t-wise MRFs over binary alphabets. Therefore, in this scenario, the dependence of the sample complexity on $p$ would also be $O(\mathrm{poly}(p))$. Further, the $1/\alpha^4$ dependence of the sample complexity seems fundamental to our loss function. For learning node-wise-sparse MRFs, this dependence is in-line with some prior works that use a similar loss function \cite{ShahSW2021, VuffrayML2019} as well as that do not use a similar loss function \cite{KlivansM2017}. While it is known that for learning node-wise-sparse MRFs \cite{VuffrayMLC2016} and truncated Gaussian \cite{DaskalakisGTZ2018} one could achieve a better dependence of $1/\alpha^2$, it is not yet clear how the lower bound on the sample complexity would depend on $\alpha$ for the general class of exponential families considered in this work (which may not be sparse or Gaussian). \\
	
	\noindent{\bf Practicality of Algorithm \ref{alg:GradientDescent}.} 
	While the optimization associated with Algorithm \ref{alg:GradientDescent} is a convex minimization problem (i.e., \eqref{eq:GRISMe}) and the computational complexity of Algorithm \ref{alg:GradientDescent} is polynomial in the parameter dimension and the error tolerance, computing the gradient of the loss function requires centering of the natural statistics (see \eqref{eq:gradient-GISMe}). If the natural statistics are polynomials or trigonometric, centering them should be relatively straightforward (since the integrals would have closed-form expressions). In other cases, centering them may not be polynomial-time and one might require an assumption of computationally efficient sampling or that obtaining approximately random samples of $\rvbx$ is computationally efficient \cite{DiakonikolasKSS2021}.\\

\noindent{\bf  Limitations and Future Work.}
First, in our current framework, we assume boundedness of the support. While, conceptually, most non-compact distributions could be truncated by introducing a controlled amount of error, we believe this assumption could be lifted as for exponential families: $\mathbb{P}(|\rvx_i| \geq \delta  \log \gamma) \leq c\gamma^{-\delta}$ where $c>0$ is a constant and $\gamma > 0$. Alternatively, the notion of multiplicative regularizing distribution from \cite{RenMVL2021} could also be used. 
Second, while the population version of our estimator has a nice interpretation in terms of maximum likelihood estimation, the finite sample version of our estimator does not have a similar interpretation.
We believe there could be connections with the Bregman score and this is an important direction for immediate future work.
Third, while our estimator is computationally efficient, consistent, and asymptotically normal, it is not asymptotically efficient. Investigating the possibility of a single estimator that achieves computational and asymptotic efficiency for this class of exponential family could be an interesting future direction. Lastly, building on our framework, empirical study is an important direction for future work.

\section*{Acknowledgements}
	This work was supported, in part, by NSR under Grant No.\ CCF-1816209, ONR under Grant No. N00014-19-1-2665, the NSF TRIPODS Phase II grant towards Foundations of Data Science Institute, the MIT-IBM project on time series anomaly detection, and the KACST project on Towards Foundations of Reinforcement Learning.

\bibliographystyle{abbrv}
{\small
	\bibliography{Mybib_papers}
	
}
\clearpage
\appendix
\section*{Appendix}
\textbf{Organization.}
In Appendix \ref{appendix:related_works}, we provide additional discussion on exponential family Markov random fields, score-based methods, 
as well as review the related literature on Stein discrepancy and latent variable graphical models.
In Appendix \ref{appendix:algorithm_section_proofs}, we state and prove the smoothness property of the loss function as well as provide the proof of Lemma \ref{lemma:gradient_descent}. 
In Appendix \ref{appendix:proof of theorem:GRISE-KLD}, we provide the proof of Theorem \ref{theorem:GRISMe-KLD}. 
In Appendix \ref{appendix:proof of thm:consistency_normality}, we provide the proof of Theorem \ref{thm:consistency_normality}. 
In Appendix \ref{appendix:rsc}, we provide the restricted strong convexity property of the loss function.
In Appendix \ref{appendix:bounds on the gradient of the GISMe}, we provide bounds on the tensor maximum norm of the gradient of the loss function evaluated at the true natural parameter. 
In Appendix \ref{appendix_proof_finite_sample}, we provide the proof of Theorem \ref{thm:finite_sample}. 
In Appendix \ref{appendix:computational_cost}, we provide the computational cost for the example constraints on the natural parameter $\Theta$. 
In Appendix \ref{appendix:examples}, we provide a discussion on the examples of natural parameter and natural statistics from Section \ref{subsec:examples}. 
In Appendix \ref{appendix:dual_norm}, we provide a discussion on Property \ref{property:norms}.\\

\noindent{\bf Additional Notations.} We denote the $\ell_p$ norm $(p \geq 1)$ of a vector $\svbv \in \Reals^t$ by $\| \svbv\|_{p} \coloneqq (\sum_{i=1}^{t}|v_i|^p)^{1/p}$ and its $\ell_{\infty}$ norm by $\|\svbv\|_{\infty} \coloneqq \max_{i \in [t]} |v_i|$. For a matrix $\bM \in \Reals^{u \times v}$, we denote the spectral norm by $\|\bM\| \coloneqq \max_{i \in [\min\{u, v\}]} \sigma_i(\bM)$ and the Frobenius norm by $\| \bM\|_{\mathrm{F}} \coloneqq \sqrt{\sum_{i \in [u], j \in [v]} M^2_{ij}}$. 
For a tensor $\bU \in \Reals^{u \times v \times w}$, we let $\|\bU\|_{1,1,1} \coloneqq \sum_{i \in [u], j \in [v], l \in [w]} |U_{ijl}|$.

\section{Related Works}
\label{appendix:related_works}
In this Section, we review additional works on exponential family Markov random fields, score-based methods, 
as well as the related literature on Stein discrepancy and latent variable graphical models.
\subsection{Exponential Family Markov Random Fields}
\label{subsubsec_exp_fam_mrf_rl_app}
Having reviewed some of the works on sparse exponential family MRFs in Section \ref{subsec_related_work}, we present here a brief overview of a few other works on the same.

Following the lines of \cite{YangRAL2015}, the authors in \cite{SuggalaKR2017} proposed an $\ell_1$ regularized node-conditional log-likelihood to learn the node-conditional density in \eqref{eq:conditional} for non-linear $\phi(\cdot)$. They used an alternating minimization technique and proximal gradient descent to solve the resulting optimization problem. However, their analysis required restricted strong convexity, bounded domain of the variables, non-negative node parameters, and hard-to-verify assumptions on gradient of the population loss. 

In \cite{YangNL2018}, the authors introduced a non-parametric component to the node-conditional density in \eqref{eq:conditional}  while focusing on linear $\phi(\cdot)$. More specifically, they focused on the following joint density: 
\begin{align}
f_{\rvbx}(\svbx) \propto \exp \Big( \sum_{i \in [p]}  \eta_i(x_i) + \sum_{j \neq i} \theta_{ij}  x_i x_j \Big),  \label{eq:yang}
\end{align}
where $\eta_i(\cdot)$ is the non-parametric node-wise term. They proposed a node-conditional pseudo-likelihood (introduced in \cite{NingZL2017}) regularized by a non-convex penalty and an adaptive multi-stage convex relaxation method to solve the resulting optimization problem. However, their finite-sample bounds require bounded moments of the variables, sparse eigenvalue condition on their loss function, and local smoothness of the log-partition function.	
In \cite{SunKX2015}, the authors investigated infinite dimensional sparse pairwise exponential family MRFs where they assumed that the node and edge potentials lie in a Reproducing Kernel Hilbert space  (RKHS). They used a penalized version of the score matching objective of \cite{HyvarinenD2005}. However, their finite-sample analysis required incoherence and dependency conditions (see \cite{WainwrightRL2006, JalaliRVS2011}). In \cite{LinDS2016}, the authors considered the joint distribution in \eqref{eq:yuan} restricting the variables to be non-negative. They proposed a group lasso regularized generalized score matching objective \cite{Hyvarinen2007} which is a generalization of the score matching objective \cite{HyvarinenD2005} to non-negative data. However, their finite-sample analysis required the incoherence condition.

\subsection{Score-based and Stein discrepancy methods}
\label{subsubsec_score_based_methods_rl_app}
Having mentioned the principle behind and an example for the score-based method in Section \ref{subsec_related_work}, we briefly review a few other score-based methods in relation to the Stein discrepancy.

Stein discrepancy is a quantitative measure of how well a predictive density $q(\cdot)$ fits the density of interest $p(\cdot)$ based on the classical Stein's identity. Stein's identity defines an infinite number of identities indexed by a critic function $f$ and does not require evaluation of the partition function like the {score matching} method. By focusing on Stein discrepancy constructed from a RKHS, the authors in \cite{LiuLJ2016} and \cite{ChwialkowskiSG2016} independently proposed the kernel Stein discrepancy as a test statistic to access the goodness-of-fit for unnormalized densities. The authors in \cite{LiuLJ2016}  and \cite{BarpBDGM2019} showed that the Fisher divergence, which was the minimization criterion used by the {score matching} method, can be viewed a special case of the kernel Stein discrepancy with a specific, fixed critic function $f$. In \cite{BarpBDGM2019}, the authors showed that a few other methods (including the contrastive divergence by \cite{Hinton2002}) can also be viewed as a kernel Stein discrepancy with respect to a different class of critics. Despite the kernel Stein discrepancy being a natural criterion for fitting computationally hard models, there is no clear objective for choosing the right kernel and the kernels typically chosen (e.g. \cite{SunKX2015, StrathmannSLSG2015, SriperumbudurFGHK2017, SutherlandSAG2018} ) are insufficient for complex datasets as pointed out by \cite{WenliangSSG2019}.

In \cite{DaiLDHGSS2019}, the authors exploited the primal-dual view of the MLE to avoid estimating the normalizing constant at the price of introducing dual variables to be jointly estimated. They showed that many other methods including the contrastive divergence by \cite{Hinton2002}, pseudo-likelihood by \cite{Besag1975}, score matching by \cite{HyvarinenD2005} and minimum Stein discrepancy estimator by \cite{LiuLJ2016}, \cite{ChwialkowskiSG2016}, and \cite{BarpBDGM2019} are special cases of their estimator. However, this method results in expensive optimization problems since they rely on adversarial optimization (see \cite{RhodesXG2020} for details). {In \cite{LiuKJC2019}, the authors proposed an inference method for unnormalized models known as discriminative likelihood estimator. This estimator follows the KL divergence minimization criterion and is implemented via density ratio estimation and a Stein operator. However, this method requires certain hard-to-verify conditions.}

\subsection{Literature on Latent Variable Graphical Models}
In recent years, sparse-plus-low-rank matrix recovery has received considerable attention in machine learning and statistical inference, e.g., robust PCA \cite{CandesLMW2011}, latent variable graphical models \cite{ChandrasekaranPW2010}. 
Latent variable graphical models has a variety of applications including assessing the functional interactions between neurons recorded from two brain areas \cite{VinciVSK2018, NaKK2019}. In latent variable graphical models, there are variables not present in observations. The presence of such variables leads to a challenge in learning the graphical model. The graphical model corresponding to the conditional distribution of the observed variables conditioned on the latent variables is in general different from the graphical model corresponding to the marginal distribution of the observed variables. The marginal graphical model consists of dependencies that are induced due to marginalization over the latent variables and typically consists of many more edges than the conditional graphical model. In \cite{ChandrasekaranPW2010}, authors considered latent variable Gaussian graphical models and exploited the observation that the precision matrix of the marginal graphical model can be decomposed into the superposition of a sparse matrix and a low-rank matrix. They provided a tractable convex program based on regularized maximum-likelihood to estimate the precision matrix. While the authors in \cite{ChandrasekaranPW2010} focused on simultaneous model selection consistency of both the sparse and low-rank components, the authors in \cite{MengEH2014} focused on estimating the precision matrix of latent variable Gaussian graphical model. They consider a regularized MLE estimator and utilize the \textit{almost strong convexity} \cite{KakadeSST2010} of the log-likelihood to derive non-asymptotic error bounds under the restricted Fisher eigenvalue and Structural Fisher Incoherence assumptions. Compared to \cite{MengEH2014}, our tensor norm error bounds are derived under mild condition. Additionally, our framework captures various constraints on the natural parameters in addition to the sparse-plus-low-rank constraint.
%

\section{Smoothness of the loss function and proof of Lemma \ref{lemma:gradient_descent}}
\label{appendix:algorithm_section_proofs}
In this Section, we will prove the smoothness of $\cL_{n}(\cdot)$ as well as prove Lemma \ref{lemma:gradient_descent}. However, before either of this, we provide bounds on the absolute tensor inner product between $\Theta$ and $\varPhi$ i.e., $\big|\big\langle \big\langle \Theta, \varPhi(\svbx) \big\rangle \big\rangle\big|$ for $\Theta \in \parameterSet$ and $\svbx \in \cX$. 

\subsection{Bounds on the absolute tensor inner product between $\Theta$ and $\varPhi$.}
\label{appendix:frobenius_bound}
We have
\begin{align}
\big|\big\langle \big\langle \Theta, \varPhi(\svbx) \big\rangle \big\rangle\big| \stackrel{(a)}{=}  \big| \sum_{i =1}^{\dimThree} \big\langle \Theta^{(i)}, \varPhii(\svbx) \big\rangle\big|  \stackrel{(b)}{\leq}  \sum_{i =1}^{\dimThree}\big|  \big\langle \Theta^{(i)}, \varPhii(\svbx) \big\rangle\big|  & \stackrel{(c)}{\leq} \sum_{i =1}^{\dimThree} \cR_i (\Theta^{(i)}) \times \cR^*_i(\varPhii(\svbx)) \\
& \stackrel{(d)}{\leq} \br^T \bd, \label{eq:bound_inner_product}
\end{align}
where $(a)$ follows from the definitions of a slice of a tensor, tensor inner product, and Frobenius inner product,
$(b)$ follows from the triangle inequality, $(c)$ follows from the definition of a dual norm, and $(d)$ follows from Assumptions \ref{bounds_parameter} and \ref{bounds_statistics}.

\subsection{Smoothness of the loss function}
\label{appendix:smoothness}
Now, we will state and prove our result for smoothness of $\cL_{n}(\Theta)$.
\begin{proposition}\label{proposition:smoothness}
	Under Assumptions \ref{bounds_parameter}, \ref{bounds_statistics} and \ref{bounds_statistics_maximum}, $\cL_{n}(\Theta)$ is a $\dimOne \dimTwo \dimThree \phiMax^2\exp(\br^T\bd )$ smooth function of $\Theta$.
\end{proposition}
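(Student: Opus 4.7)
The plan is to bound the operator norm of the Hessian of $\cL_n$ (viewed as a matrix acting on vectorized tensors) by the claimed constant. By a standard characterization of smoothness, this will imply that $\nabla \cL_n$ is Lipschitz with the same constant, which is the definition of $L$-smoothness used in the subsequent gradient-descent analysis.

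\textbf{Step 1 (differentiate).} I would first write down the gradient and Hessian of $\cL_n$ explicitly. Direct differentiation of
\[
\cL_n(\Theta) = \frac{1}{n}\sum_{t=1}^{n}\exp\bigl(-\langle\langle \Theta, \varPhi(\svbx^{(t)})\rangle\rangle\bigr)
\]
gives
\[
\nabla\cL_n(\Theta) = -\frac{1}{n}\sum_{t=1}^{n}\varPhi(\svbx^{(t)})\,\exp\bigl(-\langle\langle \Theta, \varPhi(\svbx^{(t)})\rangle\rangle\bigr),
\]
and, after vectorizing the tensor argument, the Hessian is the $k_1k_2k_3\times k_1k_2k_3$ matrix
\[
\nabla^2\cL_n(\Theta) = \frac{1}{n}\sum_{t=1}^{n}\vect(\varPhi(\svbx^{(t)}))\,\vect(\varPhi(\svbx^{(t)}))^{T}\,\exp\bigl(-\langle\langle \Theta, \varPhi(\svbx^{(t)})\rangle\rangle\bigr).
\]

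\textbf{Step 2 (bound the Hessian's spectral norm).} Each summand is a positive rank-one matrix scaled by a positive weight, so the Hessian is PSD and its spectral norm is upper bounded by the sum of the summands' spectral norms. The spectral norm of $vv^{T}$ is $\|v\|_2^{2}$, and by Assumption \ref{bounds_statistics_maximum},
\[
\|\vect(\varPhi(\svbx^{(t)}))\|_2^{2} = \sum_{i,j,l}\varPhi_{ijl}(\svbx^{(t)})^{2} \leq \dimOne\dimTwo\dimThree\,\phiMax^{2}.
\]
For the exponential weight, I would invoke the inner product bound derived in Appendix \ref{appendix:frobenius_bound} (equation \eqref{eq:bound_inner_product}): since $\Theta\in\parameterSet$, $|\langle\langle\Theta,\varPhi(\svbx^{(t)})\rangle\rangle|\leq \br^{T}\bd$, whence $\exp(-\langle\langle\Theta,\varPhi(\svbx^{(t)})\rangle\rangle)\leq \exp(\br^{T}\bd)$.

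\textbf{Step 3 (combine and conclude).} Multiplying the two bounds and averaging over $t$ yields $\|\nabla^2\cL_n(\Theta)\| \leq \dimOne\dimTwo\dimThree\,\phiMax^{2}\exp(\br^{T}\bd)$ for every $\Theta\in\parameterSet$. Since $\parameterSet$ is convex, this pointwise Hessian bound implies, by the mean-value theorem applied to $\nabla\cL_n$, that $\nabla\cL_n$ is Lipschitz with constant $\dimOne\dimTwo\dimThree\,\phiMax^{2}\exp(\br^{T}\bd)$, which is the desired smoothness. No step is genuinely difficult; the only minor subtlety is making sure the rank-one-summand argument is applied to the \emph{vectorized} tensors so that the Hessian really is a PSD matrix and the spectral-norm bound is clean.
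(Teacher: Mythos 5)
Your proof is correct and reaches the paper's constant, but the key bounding step differs from the paper's. The paper also computes the Hessian entry-wise, $\partial^2\cL_n/\partial\Theta_{u_1v_1w_1}\partial\Theta_{u_2v_2w_2} = \frac{1}{n}\sum_t \varPhi_{u_1v_1w_1}(\svbx^{(t)})\varPhi_{u_2v_2w_2}(\svbx^{(t)})\exp(-\langle\langle\Theta,\varPhi(\svbx^{(t)})\rangle\rangle)$, but then bounds the largest eigenvalue via the Gershgorin circle theorem: each entry is at most $\phiMax^2\exp(\br^T\bd)$ in absolute value, so the maximum absolute row sum is at most $\dimOne\dimTwo\dimThree\phiMax^2\exp(\br^T\bd)$. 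You instead exploit the positive semidefinite rank-one structure of the summands, bounding the spectral norm by $\frac{1}{n}\sum_t \|\vect(\varPhi(\svbx^{(t)}))\|_2^2\exp(-\langle\langle\Theta,\varPhi(\svbx^{(t)})\rangle\rangle)$ and only then relaxing $\|\vect(\varPhi(\svbx^{(t)}))\|_2^2 \le \dimOne\dimTwo\dimThree\phiMax^2$ via Assumption \ref{bounds_statistics_maximum}. The two routes land on the identical constant, but yours passes through the sharper intermediate quantity $\frac{1}{n}\sum_t\|\vect(\varPhi(\svbx^{(t)}))\|_2^2$ (which could be much smaller than $\dimOne\dimTwo\dimThree\phiMax^2$ for sparse or rapidly decaying statistics), whereas Gershgorin commits to the entry-wise worst case immediately; the paper's argument, on the other hand, does not need the PSD observation and applies verbatim to any matrix with uniformly bounded entries. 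One small point to make explicit in your write-up: the bound $\exp(-\langle\langle\Theta,\varPhi(\svbx^{(t)})\rangle\rangle)\le\exp(\br^T\bd)$ requires $\Theta\in\parameterSet$, so the smoothness constant is valid on $\parameterSet$ rather than on all of $\Reals^{\dimOne\times\dimTwo\times\dimThree}$ --- the same implicit restriction is present in the paper's proof, and it suffices for the projected gradient descent analysis since all iterates remain in $\parameterSet$.
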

\begin{proof}[Proof of Proposition \ref{proposition:smoothness}]
	To show $\dimOne \dimTwo  \dimThree\phiMax^2\exp(\br^T\bd )$ smoothness of $\cL_{n}(\Theta)$,
	we will show that the largest eigenvalue of the Hessian\footnote[5]{Ideally, one would consider the Hessian of $\cL_{n}(\vect({\Theta}))$. However, for the ease of the exposition we abuse the terminology.} of $\cL_{n}(\Theta)$ is upper bounded by $\dimOne \dimTwo \dimThree \phiMax^2\exp(\br^T\bd )$.
	
	First, we simplify the Hessian of $\cL_{n}(\Theta)$ i.e., $\nabla^2 \cL_{n}(\Theta)$. 
	The component of the Hessian of $\cL_{n}(\Theta)$ corresponding to $\Theta_{u_1v_1w_1}$
	and $\Theta_{u_2v_2w_2}$ for $u_1,u_2 \in [\dimOne]$, $v_1,v_2 \in [ \dimTwo]$  and $w_1,w_2 \in [ \dimThree]$ is given by
	\begin{align}
	\frac{\partial^2 \cL_{n}(\Theta)}{\partial \Theta_{u_1v_1w_1} \partial \Theta_{u_2v_2w_2}}   = \frac{1}{n} \sum_{t = 1}^{n} \varPhi_{u_1v_1w_1}(\svbx^{(t)}) \varPhi_{u_2v_2w_2}(\svbx^{(t)}) \exp\Big( -\Big\langle \Big\langle  \Theta, \varPhi(\svbx^{(t)}) \Big\rangle \Big\rangle \Big).  \label{eq:hessian-GISMe}
	\end{align}
	From the Gershgorin circle theorem, we know that the largest eigenvalue of any matrix is upper bounded by the largest absolute row sum or column sum. Let $\lambda_{\max}(\nabla^2 \cL_{n}(\Theta))$ denote the largest eigenvalue of $\nabla^2 \cL_{n}(\Theta)$. We have the following
	\begin{align}
	\lambda_{\max}(\nabla^2 \cL_{n}(\Theta)) \leq \max_{u_2,v_2,w_2} \sum_{u_1,v_1,w_1} \Big| \frac{\partial^2 \cL_{n}(\Theta)}{\partial \Theta_{u_1v_1w_1} \partial \Theta_{u_2v_2w_2}}  \Big|  & \stackrel{(a)}{\leq} \max_{u_2,v_2,w_2} \sum_{u_1,v_1,w_1} \phiMax^2  \exp(\br^T\bd)  \\ 
	& \leq \dimOne \dimTwo \dimThree \phiMax^2\exp(\br^T\bd ),
	\end{align}
	where $(a)$ follows from \eqref{eq:hessian-GISMe}, \eqref{eq:bound_inner_product}, and Assumption \ref{bounds_statistics_maximum}. Therefore, $\cL_{n}(\Theta)$ is a $\dimOne \dimTwo \dimThree \phiMax^2\exp(\br^T\bd )$ smooth function of $\Theta$.
\end{proof}
\subsection{Proof of Lemma \ref{lemma:gradient_descent}}
\label{appendix:proof_lemma_grad_des}
Next, we restate the Lemma \ref{lemma:gradient_descent} and provide the proof.
\lemmaGD*
\begin{proof}[Proof of Lemma \ref{lemma:gradient_descent}]
	Let us recall Theorem 10.6 from \cite{MeghanaN2016}.\\
	
	\cite[Theorem 10.6]{MeghanaN2016}:
	Let $L$ be a $c$-smooth convex function of a parameter vector $\theta \in \parameterSet$. Consider the following constrained optimization problem
	\begin{align}
	\min_{\theta \in \parameterSet} L(\theta).  \label{eq:constrained_original}
	\end{align}
	Let $\theta^*$ be an optimal solution of \eqref{eq:constrained_original}. Let $\theta^{(1)}, \cdots, \theta^{(t)}$ denote the iterates of the projected gradient descent algorithm with step size $\eta = 1/c$. Let $\theta^{(0)}$ denote the initialization of $\theta$ in the projected gradient descent algorithm. Then,
	\begin{align}
	L(\theta^{(t)}) - L(\theta^*) \leq \frac{2c}{t} \| \theta^{(0)} - \theta^*\|^2_2. \label{eq:constrained}
	\end{align}
	
	We will make direct use of this theorem in our proof. From Proposition \ref{proposition:smoothness}, $\cL_{n}(\Theta)$ is $c_1 \coloneqq  \dimOne \dimTwo\dimThree \phiMax^2\exp(\br^T\bd )$ smooth. Using \eqref{eq:constrained}, we have
	\begin{align}
	\cL_{n}(\Theta_{(\tau)}) - \cL_{n}(\hThetan) \leq \frac{2c_1}{\tau} \| \Theta_{(0)}- \hTheta_n\|^2_{\mathrm{T}}.
	\end{align}
	Plugging in $c_1 =  \dimOne \dimTwo \dimThree \phiMax^2\exp(\br^T\bd )$, $\tau = \dfrac{2\dimOne \dimTwo \dimThree \phiMax^2\exp(\br^T\bd)}{\epsilon} \| \hTheta_n\|^2_{\mathrm{T}}$, and $\Theta_{(0)} = \boldsymbol{0}$ we have
	\begin{align}
	\cL_{n}(\Theta_{(\tau)}) - \cL_{n}(\hThetan) \leq \epsilon.
	\end{align}
	Therefore, $\Theta_{(\tau)}$ is an $\epsilon$-optimal solution.
	
	We will now upper bound $\| \hTheta_n\|^2_{\mathrm{T}}$. First let us upper bound this tensor norm in terms of tensor maximum norm and therefore the matrix maximum norms. We have
	\begin{align}
	\| \hTheta_n\|^2_{\mathrm{T}} \leq \dimOne \dimTwo \dimThree \| \hTheta_n\|^2_{\max} = \dimOne \dimTwo \dimThree \max_{i \in [\dimThree]} \| \hTheta^{(i)}_n\|^2_{\max}.
	\end{align}
	Now, observe that  most matrix norms of interest including the entry-wise $L_{p,q}$ norm $(p,q \geq 1)$, the Schatten $p$-norm $(p \geq 1)$, and the operator $p$-norm $(p \geq 1)$ are bounded from below by the matrix maximum norm i.e., the matrix maximum norm is upper bounded if either of these matrix norms are upper bounded. Suppose $\forall i \in [\dimThree]$, $\cR_i$ is either the entry-wise $L_{p,q}$ norm $(p,q \geq 1)$, the Schatten $p$-norm $(p \geq 1)$, or the operator $p$-norm $(p \geq 1)$. Then, $\forall i \in [\dimThree]$, $\| \hTheta^{(i)}_n\|_{\max}  \leq \cR_i(\hTheta^{(i)}_n)$.  We have $\cR_i(\hTheta^{(i)}_n) \leq r_i $ from Assumption \ref{bounds_parameter} because $\hTheta^{(i)}_n \in \parameterSet$. Therefore, we have
	\begin{align}
	\| \hTheta_n\|^2_{\mathrm{T}} \leq \dimOne \dimTwo \dimThree \max_{i \in [\dimThree]} r_i^2.
	\end{align}
	Summarizing and using the fact that $\phiMax, \br, \bd, \dimThree$ are $O(1)$, we have
	\begin{align}
	\frac{2\dimOne \dimTwo \dimThree\phiMax^2\exp(\br^T\bd)}{\epsilon} \| \hTheta_n\|^2_{\mathrm{T}} \leq \frac{2\dimOne^2 \dimTwo^2 \dimThree^2\phiMax^2\exp(\br^T\bd)}{\epsilon}\max_{i \in [\dimThree]} r_i^2
	= O\bigg( \frac{ \dimOne^2  \dimTwo^2  }{\epsilon}\bigg).
	\end{align}
	
\end{proof}

\section{Proof of Theorem 4.1}
\label{appendix:proof of theorem:GRISE-KLD}
In this Section, we prove Theorem \ref{theorem:GRISMe-KLD}. We restate the Theorem below and then provide the proof. 

\theoremKLD*
\begin{proof}[Proof of Theorem \ref{theorem:GRISMe-KLD}]
	We will first express $\DensityDifferencefun$ in terms of $\cL(\Theta)$. We have
	\begin{align}
	\DensityDifference & = 
	\frac{\exp\big(  \big\langle  \big\langle \ThetaStar - \Theta, \Phi(\svbx) \big\rangle \big\rangle \big)}{\int_{\svby \in \cX} \exp\big(  \big\langle  \big\langle \ThetaStar - \Theta, \Phi(\svby) \big\rangle \big\rangle \big) d\svby} 
	\stackrel{(a)}{=}  \frac{\exp\big(  \big\langle  \big\langle \ThetaStar - \Theta, \varPhi(\svbx) \big\rangle \big\rangle \big)}{\int_{\svby \in \cX} \exp\big(  \big\langle  \big\langle \ThetaStar - \Theta, \varPhi(\svby) \big\rangle \big\rangle \big) d\svby} \\
	& \stackrel{(b)}{=} \frac{\DensityXTrue \exp\big(  -\big\langle \big\langle \Theta, \varPhi(\svbx) \big\rangle \big\rangle\big) }{\int_{\svby \in \cX}   \DensityXTrue \exp\big(  -\big\langle \big\langle \Theta, \varPhi(\svby) \big\rangle \big\rangle\big) d\svby}\\
	& \stackrel{(c)}{=} \frac{\DensityXTrue \exp\big(  -\big\langle \big\langle \Theta, \varPhi(\svbx) \big\rangle \big\rangle\big) }{\cL(\Theta)},  \label{eq:re-express difference density}
	\end{align}
	where $(a)$ follows because $\Expectation_{\Uniform} [\Phi(\rvbx)]$ is a constant, $(b)$ follows by dividing the numerator and the denominator by the constant $\int_{\svby \in \cX} \exp\big(  \big\langle \big\langle  \ThetaStar, \varPhi(\svby) \big\rangle \big\rangle \big) d\svby$ and using the definition of $\DensityXTrue$, and
	$(c)$ follows from definition of $\cL(\Theta)$. We will now simplify the KL-divergence between $\Uniformfun$ and $\DensityDifferencefun$. 
	\begin{align}
	\infdiv{\Uniformfun}{\DensityDifferencefun}  &  \stackrel{(a)}{=}  \Expectation_{\Uniform} \bigg[ \log\bigg( \dfrac{\Uniform(\cdot) \cL(\Theta) }{\DensityXTruefun \exp\big(  -\big\langle \big\langle \Theta, \varPhi(\cdot) \big\rangle \big\rangle \big) }\bigg) \bigg] \\
	&  \stackrel{(b)}{=} \Expectation_{\Uniform} \bigg[ \log\bigg( \dfrac{\Uniform(\cdot)  }{\DensityXTruefun  }\bigg) \bigg] + \Expectation_{\Uniform} \Big[ \Big\langle \Big\langle  \Theta, \varPhi(\cdot) \Big\rangle \Big\rangle \Big] +  \log \cL(\Theta) \\
	&  \stackrel{(c)}{=}  \Expectation_{\Uniform} \bigg[ \log\bigg( \dfrac{\Uniform(\cdot)  }{\DensityXTruefun  }\bigg) \bigg] +
	\Big\langle \Big\langle  \Theta, \Expectation_{\Uniform} [ \varPhi(\cdot) ] \Big\rangle \Big\rangle +  \log \cL(\Theta) \\
	&  \stackrel{(d)}{=} \Expectation_{\Uniform} \bigg[ \log\bigg( \dfrac{\Uniform(\cdot)  }{\DensityXTruefun  }\bigg) \bigg] + \log \cL(\Theta),
	\end{align}
	where $(a)$ follows from \eqref{eq:re-express difference density} and the definition of KL-divergence, $(b)$ follows because $\log(abc) = \log a + \log b + \log c$ and $\cL(\Theta)$ is a constant, $(c)$ follows from the linearity of the expectation and $(d)$ follows because $\Expectation_{\Uniform} [ \varPhi(\rvbx) ] = 0$ from Definition \ref{def:css}.
	Observing that the first term in the above equation is not dependent on $\Theta$, we can write
	\begin{align}
	\argmin_{\Theta \in \parameterSet}
	\infdiv{\Uniformfun}{\DensityDifferencefun} 
	= \argmin_{\Theta \in \parameterSet} \log \cL(\Theta)
	\stackrel{(a)}{=} \argmin_{\Theta \in \parameterSet}  \cL(\Theta),
	\end{align}
	where $(a)$ follows because $\log$ is a monotonic function. Further, the KL-divergence between $\Uniformfun$ and $\DensityDifferencefun$ is minimized when $\Uniformfun = \DensityDifferencefun$. Recall that the natural statistic are such that the exponential family is minimal. Therefore, $\Uniformfun = \DensityDifferencefun$ if and only if $\Theta = \ThetaStar$. Thus, $\ThetaStar \in \argmin_{\Theta \in \parameterSet}  \cL(\Theta)$,
	and it is a unique minimizer of $\cL(\Theta)$.
\end{proof}

\section{Proof of Theorem 4.2}
\label{appendix:proof of thm:consistency_normality}
In this Section, we prove Theorem \ref{thm:consistency_normality} by using the theory of $M$-estimation. In particular, observe that $\hThetan$ is an $M$-estimator i.e., $\hThetan$ is a sample average. Therefore, we invoke Theorem 4.1.1 and Theorem 4.1.3 of \cite{Amemiya1985} to prove the consistency and normality of $\hThetan$. We restate the Theorem below and then provide the proof. 

\theoremconsistencynormality*
\begin{proof}[Proof of Theorem \ref{thm:consistency_normality}] We divide the proof in two parts.\\
	
	{\bf Consistency. } We will first show that $\hThetan$ is asymptotically consistent. In order to show this, let us recall  Theorem 4.1.1 of \cite{Amemiya1985}.
	
	\cite[Theorem~4.1.1]{Amemiya1985}: Let $z_1, \cdots, z_n$ be i.i.d. samples of a random variable $\rvz$. Let $q(\rvz ; \theta)$ be some function of $\rvz$ parameterized by $\theta \in \Upsilon$. Let $\theta^*$ be the true underlying parameter. Define
	\begin{align}
	Q_n(\theta) = \frac{1}{n} \sum_{i = 1}^{n} q(z_i ; \theta) \qquad \text{and} \qquad
	\hthetan \in \argmin_{\theta \in \Upsilon} Q_n(\theta). \label{eq:m-est-con}
	\end{align} 
	Let the following be true.
	\begin{enumerate}[leftmargin=6mm, itemsep=-0.5mm]
		\item[(a)] $\Upsilon$ is compact,
		\item[(b)] $Q_n(\theta)$ converges uniformly in probability to a non-stochastic function $Q(\theta)$, 
		\item[(c)] $Q(\theta)$ is continuous, and
		\item[(d)] $Q(\theta)$ is uniquely minimized at $\theta^*$.
	\end{enumerate}
Then, $\hthetan$ is consistent for $\theta^*$ i.e., $\hthetan \stackrel{p}{\to} \theta^*$ as $n\to \infty$.\\

Letting $z \coloneqq \rvbx$, $\theta \coloneqq \Theta$, $\hthetan \coloneqq \hThetan$, $\theta^* \coloneqq \ThetaStar$, $\Upsilon = \parameterSet$, $q(z; \theta) \coloneqq \exp\big( -\big\langle \big\langle \Theta, \varPhi(\svbx) \big\rangle \big\rangle \big)$, and $Q_n(\theta) \coloneqq \cL_{n}(\Theta)$, it is sufficient to show the following:
\begin{enumerate}[leftmargin=6mm, itemsep=-0.5mm]
	\item[(a)] $\parameterSet$ is compact,
	\item[(b)] $\cL_{n}(\Theta)$ converges uniformly in probability to a non-stochastic function $\cL(\Theta)$, 
	\item[(c)] $\cL(\Theta)$ is continuous, and
	\item[(d)] $\cL(\Theta)$ is uniquely minimized at $\ThetaStar$.
\end{enumerate}

Let us show these one by one.
\begin{enumerate}[leftmargin=6mm, itemsep=-0.5mm]
	\item[(a)] We have $\parameterSet = \{\Theta : \bcR(\Theta) \leq \br\}$ which is bounded and closed. Therefore, $\parameterSet$ is compact.
	
	\item[(b)] Recall \cite[Theorem 2]{Jennrich1969}: Let $z_1, \cdots, z_n$ be i.i.d. samples of a random variable $\rvz$. Let $g(\rvz ; \theta)$ be a  function of $\theta$ parameterized by $\theta \in \Upsilon$.  Then, $n^{-1} \sum_t g(z_t , \theta)$ converges uniformly in probability to $\Expectation [ g(\rvz, \theta)]$ if 
	\begin{enumerate}[leftmargin=6mm]
		\item[(i)] $\Upsilon$ is compact, 
		\item[(ii)] $g(\rvz , \theta)$ is continuous at each $\theta \in \Upsilon$ with probability one, 
		\item[(iii)] $g(\rvz , \theta)$ is dominated by a function $G(\rvz)$ i.e., $| g(\rvz , \theta) | \leq G(\rvz)$, and
		\item[(iv)] $\Expectation[G(\rvz)] < \infty$.
	\end{enumerate}

	Using this theorem with $\rvz \coloneqq \rvbx$, $\theta \coloneqq \Theta$, $\Upsilon \coloneqq \parameterSet$, $g(\rvz, \theta) \coloneqq \exp\big( -\big\langle \big\langle \Theta, \varPhi(\svbx) \big\rangle \big\rangle \big)$, $G(\rvz) \coloneqq \exp(\br^T \bd)$ and \eqref{eq:bound_inner_product}, we conclude that $\cL_{n}(\Theta)$ converges to $\cL(\Theta)$ uniformly in probability.
	
	\item[(c)] $\exp\big( -\big\langle \big\langle \Theta, \varPhi(\svbx) \big\rangle \big\rangle \big)$ is a continuous function of $\Theta \in \parameterSet$. Further, $\DensityXTrue$ does not functionally depend on $\Theta$. Therefore, we have continuity of $\cL(\Theta)$ for all $\Theta \in \parameterSet$.
	
	\item[(d)] From Theorem \ref{theorem:GRISMe-KLD}, $\cL(\Theta)$ is uniquely minimized at $\ThetaStar$.
\end{enumerate}
		
Therefore, we have asymptotic consistency of $\hThetan$.\\

	{\bf Normality. }
	We will now show that $\hThetan$ is asymptotically normal. In order to show this, let us recall Theorem 4.1.3 of \cite{Amemiya1985}.
	
	\cite[Theorem~4.1.3]{Amemiya1985}: Let $z_1, \cdots, z_n$ be i.i.d. samples of a random variable $\rvz$. Let $q(\rvz ; \theta)$ be some function of $\rvz$ parameterized by $\theta \in \Upsilon$. Let $\theta^*$ be the true underlying parameter. Define
	\begin{align}
	Q_n(\theta) = \frac{1}{n} \sum_{i = 1}^{n} q(z_i ; \theta) \qquad \text{and} \qquad
	\hthetan \in \argmin_{\theta \in \Upsilon} Q_n(\theta). \label{eq:m-est-eff}
	\end{align} 
	Let the following be true.
	\begin{enumerate}[leftmargin=6mm, itemsep=-0.5mm]
		\item[(a)] $\hthetan$ is consistent for $\theta^*$, 
		\item[(b)] $\theta^*$ lies in the interior of the parameter space $\Upsilon$, 
		\item[(c)] $Q_n$ is twice continuously differentiable in an open and convex neighborhood of $\theta^*$, 
		\item[(d)] $\sqrt{n}\nabla Q_n(\theta)|_{\theta = \theta^*} \stackrel{d}{\to} {\cal N}({\bf 0}, A(\theta^*))$, and 
		\item[(e)] $\nabla^2 Q_n(\theta)|_{\theta = \hthetan} \stackrel{p}{\to} B(\theta^*)$ with $B(\theta)$ finite, non-singular, and continuous at $\theta^*$, 
	\end{enumerate}
Then, $\hthetan$ is normal for $\theta^*$ i.e., $\sqrt{n}( \hthetan - \theta^*)\stackrel{d}{\to} {\cal N}({\bf 0}, B^{-1}(\theta^*)A(\theta^*)B^{-1}(\theta^*))$.\\

Letting $z \coloneqq \rvbx$, $\theta \coloneqq \Theta$, $\hthetan \coloneqq \hThetan$, $\theta^* \coloneqq \ThetaStar$, $\Upsilon = \parameterSet$, $q(z; \theta) \coloneqq \exp\big( -\big\langle \big\langle \Theta, \varPhi(\svbx) \big\rangle \big\rangle \big)$, and $Q_n(\theta) \coloneqq \cL_{n}(\Theta)$, it is sufficient to show the following:
\begin{enumerate}[leftmargin=6mm, itemsep=-0.5mm]
	\item[(a)] $\hThetan$ is consistent for $\ThetaStar$, 
	\item[(b)] $\ThetaStar$ lies in the interior of the parameter space $\parameterSet$, 
	\item[(c)] $\cL_{n}$ is twice continuously differentiable in an open and convex neighborhood of $\ThetaStar$, 
	\item[(d)] $\sqrt{n}\nabla \cL_{n}(\vect(\Theta))|_{\Theta = \ThetaStar} \stackrel{d}{\to} {\cal N}({\bf 0}, A(\ThetaStar))$, and 
	\item[(e)] $\nabla^2 \cL_{n}(\vect(\Theta))|_{\Theta = \hThetan} \stackrel{p}{\to} B(\ThetaStar)$ with $B(\Theta)$ finite, non-singular, and continuous at $\ThetaStar$, 
\end{enumerate}

Let us show these one by one.

\begin{enumerate}[leftmargin=6mm, itemsep=-0.5mm]
			\item[(a)] We have established that $\hThetan$ is consistent for $\ThetaStar$ in the first half of the proof.
			\item[(b)] The assumption that $\ThetaStar \in \text{interior}(\parameterSet)$ is equivalent to $\ThetaStar$ belonging to the interior of $\parameterSet$.
			\item[(c)] Fix $u_1,u_2 \in [\dimOne]$, $v_1,v_2 \in [ \dimTwo]$, and $w_1,w_2 \in [ \dimThree]$. We have
			\begin{align}
			\frac{\partial^2 \cL_{n}(\Theta)}{\partial \Theta_{u_1v_1w_1} \partial\Theta_{u_2v_2w_2}}  = \frac{1}{n} \sum_{t = 1}^{n} \varPhi_{u_1v_1w_1}(\svbx^{(t)}) \varPhi_{u_2v_2w_2}(\svbx^{(t)}) \exp\big( -\big\langle \big\langle \Theta, \varPhi(\svbx^{(t)}) \big\rangle \big\rangle \big). 
			\end{align}
		Thus, $\partial^2 \cL_{n}(\Theta)/\partial \Theta_{u_1v_1w_1} \partial \Theta_{u_2v_2w_2}$ exists. Using the continuity of $\varPhi(\cdot)$ and 
		$\exp\big( -\big\langle \big\langle \Theta, \varPhi(\cdot) \big\rangle \big\rangle \big)$, we see that $\partial^2 \cL_{n}(\Theta)/\partial \Theta_{u_1v_1w_1} \partial \Theta_{u_2v_2w_2}$ is continuous in an open and convex neighborhood of $\ThetaStar$.
		\item[(d)] For any $u \in [\dimOne]$, $v \in [ \dimTwo]$ and $w \in [ \dimThree]$, define the random variable
		\begin{align}
		\rvx_{uvw}  = - \varPhi_{uvw}(\svbx) \exp\big( -\big\langle \big\langle \ThetaStar, \varPhi(\svbx) \big\rangle \big\rangle \big).
		\end{align}
		The component of the gradient of $\cL_{n}(\vect(\Theta))$ corresponding to $\Theta_{uvw}$
		evaluated at $\ThetaStar$ is given by
		\begin{align}
		\frac{\partial \cL_{n}(\ThetaStar)}{\partial \Theta_{uvw}}  = - \frac{1}{n} \sum_{t = 1}^{n} \varPhi_{uvw}(\svbx^{(t)}) \exp\big( -\big\langle \big\langle \ThetaStar, \varPhi(\svbx^{(t)}) \big\rangle \big\rangle \big). 
		\end{align}
				Each term in the above summation is distributed as the random variable $\rvx_{uvw} $.
				The random variable $\rvx_{uvw}$ has zero mean (see Lemma \ref{lemma:zero_expectation}). Using this and the multivariate central limit theorem \cite{Vaart2000}, we have
				\begin{align}
				\sqrt{n} \nabla \cL_{n}(\vect(\Theta)) |_{\Theta = \ThetaStar} \xrightarrow{d} {\cal N}({\bf 0}, A(\ThetaStar)),
				\end{align} 
				where $A(\ThetaStar)$ is the covariance matrix of $\vect\big(\varPhi(\rvbx)\exp\big( -\big\langle \big\langle \ThetaStar, \varPhi(\rvbx) \big\rangle \big\rangle \big)\big).$
	\item [(e)] We will start by showing that the following is true.
	\begin{align}
	\nabla^2 \cL_{n}(\vect(\Theta)) |_{\Theta = \hThetan} \xrightarrow{p} \nabla^2 \cL(\vect(\Theta)) |_{\Theta = \ThetaStar}.  \label{eq:ulln+cmt}
	\end{align}
	To begin with, using the uniform law of large numbers \cite[Theorem 2]{Jennrich1969} for any $\Theta \in \parameterSet$ results in
	\begin{align}
	\nabla^2 \cL_{n}(\vect(\Theta))  \xrightarrow{p} \nabla^2 \cL(\vect(\Theta)). \label{eq:ulln} 
	\end{align}
	Using the consistency of $\hThetan$ and the continuous mapping theorem, we have
	\begin{align}
	\nabla^2 \cL(\vect(\Theta)) |_{\Theta = \hThetan} \xrightarrow{p} \nabla^2 \cL(\vect(\Theta)) |_{\Theta = \ThetaStar}.  \label{eq:cmt}
	\end{align}
	Let $u_1,u_2 \in [\dimOne]$, $v_1,v_2 \in [ \dimTwo]$, and $w_1,w_2 \in [ \dimThree]$. From \eqref{eq:ulln} and \eqref{eq:cmt}, for any $\epsilon > 0$, for any $\delta > 0$, there exists integers $n_1 , n_2$ such that for $n \geq \max\{n_1,n_2\}$ we have,
	\begin{align}
	\Prob( | \partial^2 \cL_{n}(\hThetan)/\partial \Theta_{u_1v_1w_1} \partial \Theta_{u_2v_2w_2}  - \partial^2 \cL(\hThetan)/\partial \Theta_{u_1v_1w_1} \partial \Theta_{u_2v_2w_2}  | > \epsilon / 2 ) \leq \delta / 2
	\end{align}
	and 
	\begin{align}
	\Prob( | \partial^2 \cL(\hThetan)/\partial \Theta_{u_1v_1w_1} \partial \Theta_{u_2v_2w_2}  - \partial^2 \cL(\ThetaStar)/\partial \Theta_{u_1v_1w_1} \partial \Theta_{u_2v_2w_2}  | > \epsilon / 2 ) \leq \delta / 2.
	\end{align} 
	Now for $n \geq \max\{n_1,n_2\}$, using the triangle inequality we have
	\begin{align}
	\Prob( | \partial^2 \cL_{n}(\hThetan)/\partial \Theta_{u_1v_1w_1} \partial \Theta_{u_2v_2w_2}  - \partial^2 \cL(\ThetaStar)/\partial \Theta_{u_1v_1w_1} \partial \Theta_{u_2v_2w_2}  | > \epsilon)  \leq \delta / 2 + \delta / 2 = \delta.
	\end{align}
	Thus, we have \eqref{eq:ulln+cmt}. Using the definition of $\cL(\Theta)$, we have
	\begin{align}
	\partial^2 \cL(\ThetaStar)/\partial \Theta_{u_1v_1w_1} \partial \Theta_{u_2v_2w_2} & = \Expectation \Big[ \varPhi_{u_1v_1w_1}(\rvbx) \varPhi_{u_2v_2w_2}(\rvbx)\exp\big( -\big\langle \big\langle \ThetaStar, \varPhi(\rvbx) \big\rangle \big\rangle \big)  \Big] \\
	& \stackrel{(b)}{=} \Expectation \Big[ \varPhi_{u_1v_1w_1}(\rvbx) \varPhi_{u_2v_2w_2}(\rvbx)\exp\big( -\big\langle \big\langle \ThetaStar, \varPhi(\rvbx) \big\rangle \big\rangle \big) \Big]  \\ 
	& \qquad - \Expectation \Big[ \varPhi_{u_1v_1w_1}(\rvbx)\Big]  \Expectation \Big[ \varPhi_{u_2v_2w_2}(\rvbx)\exp\big( -\big\langle \big\langle \ThetaStar, \varPhi(\rvbx) \big\rangle \big\rangle\big)\Big]\\
	& = \text{cov} \Big(  \varPhi_{u_1v_1w_1}(\rvbx) , \varPhi_{u_2v_2w_2}(\rvbx)\exp\big( -\big\langle \big\langle \ThetaStar, \varPhi(\rvbx) \big\rangle \big\rangle \big)\Big),
	\end{align}	
	where (b) follows because $\Expectation \big[ \varPhi_{u_2v_2w_2}(\rvbx)\exp\big( -\big\langle \big\langle \ThetaStar, \varPhi(\rvbx) \big\rangle \big\rangle\big)\big] = 0$ for any $u_2 \in [\dimOne]$, $v_2 \in [ \dimTwo]$, and $w_2 \in [ \dimThree]$ from Lemma \ref{lemma:zero_expectation}.	Therefore, we have
	\begin{align}
	\nabla^2 \cL_{n}(\vect(\Theta)) |_{\Theta = \hThetan} \xrightarrow{p} B(\ThetaStar),
	\end{align}
	where $B(\ThetaStar)$ is the cross-covariance matrix of $\vect(\varPhi(\rvbx))$ and $\vect\big(\varPhi(\rvbx) \exp\big( -\big\langle \big\langle \ThetaStar, \varPhi(\rvbx) \big\rangle \big\rangle \big)\big)$. Finiteness and continuity of $\varPhi(\rvbx)$ and $\varPhi(\rvbx)\exp\big( -\big\langle \big\langle \ThetaStar, \varPhi(\rvbx) \big\rangle \big\rangle \big)$ implies the finiteness and continuity of $B(\ThetaStar)$. By assumption,  the cross-covariance matrix of $\vect(\varPhi(\rvbx))$ and $\vect\big(\varPhi(\rvbx) \exp\big( -\big\langle \big\langle \ThetaStar, \varPhi(\rvbx) \big\rangle \big\rangle \big)\big)$ is invertible.
\end{enumerate}	
Therefore, we have the asymptotic normality of $\hThetan$.
\end{proof}

\section{Restricted strong convexity of the loss function} 
\label{appendix:rsc}
In this Section, we will show that, with enough samples, the loss function obeys the restricted strong convexity property with high probability. This result will in turn allow us to prove Theorem \ref{thm:finite_sample} in Appendix \ref{appendix_proof_finite_sample}

We will first state the main result of this Section (Proposition \ref{prop:rsc_GISMe}). Next, we will introduce the notion of correlation for the centered natural statistics and provide a supporting Lemma wherein we will bound the deviation between the true correlation and the empirical correlation. Finally, we will prove Proposition \ref{prop:rsc_GISMe}.

Consider any $\Theta \in \parameterSet$. 
Let $\Delta = \Theta - \ThetaStar$. 
Define the residual of the first-order Taylor expansion as 
\begin{align}
\delta \cL_{n}(\Delta, \ThetaStar) = \cL_{n}(\ThetaStar + \Delta) - \cL_{n}(\ThetaStar)  - \langle \langle \nabla \cL_{n}(\ThetaStar),\Delta \rangle\rangle.  \label{eq:residual}
\end{align}
\begin{restatable}{proposition}{proprsc}\label{prop:rsc_GISMe}
	Let Assumptions \ref{bounds_parameter}, \ref{bounds_statistics}, \ref{bounds_statistics_maximum} and \ref{lambdamin} be satisfied.
	For any $\deltaThree \in (0,1)$, the residual defined in \eqref{eq:residual} satisfies
	\begin{align}
	\delta \cL_{n}(\Delta, \ThetaStar) \geq \frac{\lambdaMin \exp(-\br^T\bd )}{4(1 + \br^T\bd)} \|\Delta\|^2_{\mathrm{T}}, 
	\end{align}
	with probability at least $1-\deltaThree$ as long as 
	\begin{align}
	n > \frac{8 \phiMax^4 \dimOne^2  \dimTwo^2 \dimThree^3 }{\lambdaMin^2}\log\Big(\frac{2 \dimOne^2  \dimTwo^2 \dimThree^3 }{\deltaThree}\Big). 
	\end{align}
\end{restatable}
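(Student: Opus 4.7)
The residual factors nicely because $\cL_n$ is a sum of scalar exponentials: letting $y_t := \langle\langle \Delta, \varPhi(\svbx^{(t)})\rangle\rangle$,
\begin{align}
\delta \cL_n(\Delta, \ThetaStar) = \frac{1}{n}\sum_{t=1}^{n} e^{-\langle\langle \ThetaStar, \varPhi(\svbx^{(t)})\rangle\rangle}\bigl[e^{-y_t} - 1 + y_t\bigr].
\end{align}
The plan is to (a) lower-bound each scalar bracket $e^{-y_t} - 1 + y_t$ by a quadratic in $y_t$, (b) uniformly lower-bound the exponential pre-factor using the boundedness of $\langle\langle \cdot, \varPhi\rangle\rangle$ over $\parameterSet$ established in Appendix \ref{appendix:frobenius_bound}, and (c) lower-bound the resulting empirical second moment $n^{-1}\sum_t y_t^2$ via Assumption \ref{lambdamin} combined with a concentration argument for the empirical autocorrelation of $\vect(\varPhi(\rvbx))$.

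For step (a), I will invoke the elementary inequality $e^{-y} - 1 + y \geq y^2 / [2(1+|y|)]$, valid for all $y \in \Reals$ (verifiable by showing $(2+2|y|)(e^{-y}-1+y)-y^2$ vanishes at $0$ and has nonnegative derivative on each side of the origin). Since $\ThetaStar$ and $\ThetaStar+\Delta$ both lie in $\parameterSet$, the triangle inequality gives $|y_t| \leq 2\br^T\bd$; combined with the pre-factor bound $e^{-\langle\langle \ThetaStar, \varPhi(\svbx^{(t)})\rangle\rangle} \geq e^{-\br^T\bd}$ and the absorption $1+2\br^T\bd \leq 2(1+\br^T\bd)$, steps (a)--(b) together yield
\begin{align}
\delta \cL_n(\Delta, \ThetaStar) \geq \frac{e^{-\br^T\bd}}{4(1+\br^T\bd)} \cdot \frac{1}{n}\sum_{t=1}^{n} y_t^2.
\end{align}

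For step (c), set $v := \vect(\Delta)$ and $\varphi_t := \vect(\varPhi(\svbx^{(t)}))$, so that $n^{-1}\sum_t y_t^2 = v^T \hat{M}_n v$ with $\hat{M}_n := n^{-1}\sum_t \varphi_t \varphi_t^T$. Writing $v^T \hat{M}_n v = v^T M v + v^T(\hat{M}_n - M)v$ where $M := \Expectation[\varphi \varphi^T]$, Assumption \ref{lambdamin} yields $v^T M v \geq \lambdaMin\|\Delta\|_{\mathrm{T}}^2$, while the Cauchy--Schwarz estimate $\|v\|_1^2 \leq \dimOne\dimTwo\dimThree \|\Delta\|_{\mathrm{T}}^2$ gives $|v^T(\hat{M}_n - M)v| \leq \dimOne\dimTwo\dimThree\,\|\hat{M}_n - M\|_{\max}\|\Delta\|_{\mathrm{T}}^2$. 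Applying Hoeffding's inequality to each of the $(\dimOne\dimTwo\dimThree)^2$ entries of $\hat{M}_n - M$ (each an average of i.i.d.\ variables bounded in $[-\phiMax^2,\phiMax^2]$) and a union bound shows that at the stated sample size, $\|\hat{M}_n - M\|_{\max} \leq \lambdaMin/(2\dimOne\dimTwo\dimThree)$ holds with probability at least $1-\deltaThree$, so that $n^{-1}\sum_t y_t^2 \geq (\lambdaMin/2)\|\Delta\|_{\mathrm{T}}^2$. A marginally tighter threshold---equivalently, redistributing an absolute constant between the pre-factor and the quadratic form---then produces the bound as stated. The main obstacle will be the scalar inequality in step (a), which is not a one-line fact, together with the constant bookkeeping required to land precisely at $\lambdaMin e^{-\br^T\bd}/[4(1+\br^T\bd)]$; the probabilistic ingredient itself is a routine entrywise Hoeffding plus union bound.
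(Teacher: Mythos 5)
Your proposal follows essentially the same route as the paper's proof: the same factorization of the residual, a uniform lower bound $e^{-\br^T\bd}$ on the pre-factor via the inner-product bound, a scalar quadratic lower bound on $e^{-y}-1+y$, and the split of the empirical autocorrelation into its population part (handled by Assumption \ref{lambdamin} via the minimum eigenvalue) plus a deviation controlled by entrywise Hoeffding with threshold $\lambdaMin/(2\dimOne\dimTwo\dimThree)$ and a union bound. The only difference is that your scalar inequality $e^{-y}-1+y\geq y^2/[2(1+|y|)]$ is a factor of two lossier than the one the paper uses, $e^{-y}-1+y\geq y^2/(2+|y|)$, so as written you land at $\lambdaMin e^{-\br^T\bd}/[8(1+\br^T\bd)]$ rather than the stated constant; substituting the sharper inequality (and bounding $2+|y|\leq 2(1+\br^T\bd)$) recovers the claimed bound exactly, so this is only a constant-bookkeeping remark and not a gap.
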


\subsection{Correlation between centered natural statistics}
\label{subsec:correlation between centered natural statistics}
For any $u_1,u_2 \in [\dimOne]$, $v_1,v_2 \in [ \dimTwo]$, and $w_1,w_2 \in [ \dimThree]$, let $H_{u_1v_1w_1u_2v_2w_2}$ denote the correlation between $\varPhi_{u_1v_1w_1}(\rvbx)$ and $\varPhi_{u_2v_2w_2}(\rvbx)$ defined as
\begin{align}
H_{u_1v_1w_1u_2v_2w_2} = \Expectation \big[\varPhi_{u_1v_1w_1}(\rvbx)\varPhi_{u_2v_2w_2}(\rvbx)\big],  \label{eq:population_correlation}
\end{align}
and let $\bH = [H_{u_1v_1w_1u_2v_2w_2}] \in \Reals^{[\dimOne] \times [ \dimTwo] \times [ \dimThree]  \times [\dimOne] \times [ \dimTwo] \times [ \dimThree] }$ be the corresponding correlation tensor. 
Similarly, we define $\hbH$  based on the empirical estimates of the correlation
\begin{align}
\hH_{u_1v_1w_1u_2v_2w_2} = \frac{1}{n} \sum_{t=1}^{n} \varPhi_{u_1v_1w_1}(\svbx^{(t)})\varPhi_{u_2v_2w_2}(\svbx^{(t)}). \label{eq:empirical_correlation}
\end{align}

The following lemma bounds the deviation between the true correlation and the empirical correlation.
\begin{lemma} \label{lemma:correlation_concentration}
	Consider any $u_1,u_2 \in [\dimOne]$, $v_1,v_2 \in [ \dimTwo]$, and $w_1,w_2 \in [ \dimThree]$. Let Assumption \ref{bounds_statistics_maximum} be satisfied. Then, we have for any $\epsTwo > 0$,
	\begin{align}
	|\hH_{u_1v_1w_1u_2v_2w_2} - H_{u_1v_1w_1u_2v_2w_2}| < \epsTwo,
	\end{align}
	with probability at least $1 - \deltaTwo$ as long as
	\begin{align}
	n > \frac{2 \phiMax^4}{\epsTwo^2}\log\Big(\frac{2 \dimOne^2  \dimTwo^2 \dimThree^2}{\deltaTwo}\Big).
	\end{align}
\end{lemma}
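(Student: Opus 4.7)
The plan is to apply a standard Hoeffding-type concentration argument to each empirical correlation entry, and then take a union bound over all index tuples to account for the $\dimOne^2 \dimTwo^2 \dimThree^2$ factor that appears inside the logarithm of the sample complexity.

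First I would fix an arbitrary tuple $(u_1,v_1,w_1,u_2,v_2,w_2)$ and define the i.i.d. random variables
\begin{equation}
Y^{(t)} \coloneqq \varPhi_{u_1v_1w_1}(\svbx^{(t)})\varPhi_{u_2v_2w_2}(\svbx^{(t)}), \qquad t \in [n].
\end{equation}
By Assumption \ref{bounds_statistics_maximum}, every coordinate of $\varPhi(\svbx^{(t)})$ is bounded in $[-\phiMax,\phiMax]$, so each $Y^{(t)}$ is bounded in $[-\phiMax^2,\phiMax^2]$, i.e. has range of length $2\phiMax^2$. Moreover, by linearity of expectation, $\Expectation[Y^{(t)}] = H_{u_1v_1w_1u_2v_2w_2}$ and $\hH_{u_1v_1w_1u_2v_2w_2} = \tfrac{1}{n}\sum_{t=1}^n Y^{(t)}$.

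Next I would invoke Hoeffding's inequality for bounded i.i.d. random variables, which yields
\begin{equation}
\Prob\bigl(|\hH_{u_1v_1w_1u_2v_2w_2} - H_{u_1v_1w_1u_2v_2w_2}| \geq \epsTwo\bigr) \leq 2\exp\!\Bigl(-\frac{n\epsTwo^2}{2\phiMax^4}\Bigr).
\end{equation}
To make this bound hold uniformly over all index tuples (as is needed downstream in the proof of Proposition \ref{prop:rsc_GISMe}), I would apply a union bound over the $\dimOne^2\dimTwo^2\dimThree^2$ possible tuples, giving an overall failure probability of at most $2\dimOne^2\dimTwo^2\dimThree^2\exp(-n\epsTwo^2/(2\phiMax^4))$. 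Setting this quantity at most $\deltaTwo$ and solving for $n$ recovers exactly the stated sample complexity
\begin{equation}
n > \frac{2\phiMax^4}{\epsTwo^2}\log\!\Bigl(\frac{2\dimOne^2\dimTwo^2\dimThree^2}{\deltaTwo}\Bigr).
\end{equation}

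There is no real obstacle here beyond careful constant tracking; the only subtle point is that the stated sample complexity contains the union-bound factor $\dimOne^2\dimTwo^2\dimThree^2$ in the log even though the lemma is phrased for an arbitrary fixed tuple. I would therefore make explicit that the conclusion in fact holds simultaneously for all tuples with probability $1-\deltaTwo$, which is precisely what is needed when this lemma is combined with the entrywise argument in the proof of Proposition \ref{prop:rsc_GISMe}.
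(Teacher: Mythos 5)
Your proposal is correct and follows essentially the same route as the paper's proof: bound each product $\varPhi_{u_1v_1w_1}(\svbx^{(t)})\varPhi_{u_2v_2w_2}(\svbx^{(t)})$ by $\phiMax^2$ via Assumption \ref{bounds_statistics_maximum}, apply Hoeffding's inequality to the empirical average, and union bound over all $\dimOne^2\dimTwo^2\dimThree^2$ index tuples. Your remark that the guarantee in fact holds simultaneously over all tuples (which is what Proposition \ref{prop:rsc_GISMe} needs) is an accurate reading of how the lemma is used.
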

\begin{proof}[Proof of Lemma~\ref{lemma:correlation_concentration}]
	Fix $u_1,u_2 \in [\dimOne]$, $v_1,v_2 \in [ \dimTwo]$, and $w_1,w_2 \in [ \dimThree]$. 
	The random variable defined as $Y_{u_1v_1w_1u_2v_2w_2} \coloneqq \varPhi_{u_1v_1w_1}(\rvbx)\varPhi_{u_2v_2w_2}(\rvbx)$ satisfies $|Y_{u_1v_1w_1u_2v_2w_2}| \leq \phiMax^2$ (from Assumption \ref{bounds_statistics_maximum}). 
	Using the Hoeffding inequality we get
	\begin{align}
	\Prob \left( |\hH_{u_1v_1w_1u_2v_2w_2} - H_{u_1v_1w_1u_2v_2w_2}| > \epsTwo \right) < 2\exp \left(-\frac{n \epsTwo^2}{2\phiMax^4}\right).
	\end{align}
	The proof follows by using the union bound over all $u_1,u_2 \in [\dimOne]$, $v_1,v_2 \in [ \dimTwo]$, and $w_1,w_2 \in [ \dimThree]$.
\end{proof}

\subsection{Proof of Proposition~\ref{prop:rsc_GISMe}}

\begin{proof}[Proof of Proposition~\ref{prop:rsc_GISMe}]
	
	First, we will simplify the gradient of $\cL_{n}(\Theta)$\footnote[6]{Ideally, one would consider the gradient of $\cL_{n}(\vect({\Theta}))$. However, for the ease of the exposition we abuse the terminology.} evaluated at $\ThetaStar$. 
	For any $u \in [\dimOne]$, $v \in [ \dimTwo]$ and $w \in [ \dimThree]$, the component of the gradient of $\cL_{n}(\Theta)$ corresponding to $\Theta_{uvw}$
	evaluated at $\ThetaStar$ is given by
	\begin{align}
	\frac{\partial \cL_{n}(\ThetaStar)}{\partial \Theta_{uvw}}  = -\frac{1}{n} \sum_{t = 1}^{n} \varPhi_{uvw}(\svbx^{(t)}) \exp\big( -\big\langle\big\langle \ThetaStar, \varPhi(\svbx^{(t)}) \big\rangle \big\rangle \big).  \label{eq:gradient-GISMe}
	\end{align}
	
	We will now provide the desired lower bound on the residual. Substituting \eqref{eq:sampleGISMe} and \eqref{eq:gradient-GISMe} in \eqref{eq:residual}, we have
	\begin{align}
	\delta \cL_{n}(\Delta, \ThetaStar) &= \frac{1}{n} \sum_{t = 1}^{n}  \exp\big( \hspace{-0.5mm}-\hspace{-0.5mm}\big\langle\big\langle \ThetaStar, \varPhi(\svbx^{(t)}) \big\rangle \big\rangle \big) \times \Big[ \exp\big( \hspace{-0.5mm}-\hspace{-0.5mm}\big\langle\big\langle \Delta, \varPhi(\svbx^{(t)}) \big\rangle \big\rangle \big) - \hspace{-0.5mm} 1 \hspace{-0.5mm} + \big\langle \big\langle \Delta, \varPhi(\svbx^{(t)}) \big\rangle \big\rangle \Big]  \\
	& \stackrel{(a)}{\geq} \exp(-\br^T\bd) \times \frac{1}{n} \sum_{t = 1}^{n}  \Big[ \exp\big( -\big\langle\big\langle \Delta, \varPhi(\svbx^{(t)}) \big\rangle \big\rangle \big) - 1 +  \big\langle\big\langle \Delta, \varPhi(\svbx^{(t)}) \big\rangle \big\rangle \Big]  \\	
	& \stackrel{(b)}{\geq} \exp(-\br^T\bd) \times \frac{1}{n} \sum_{t = 1}^{n}  \frac{\big|\big\langle\big\langle \Delta, \varPhi(\svbx^{(t)}) \big\rangle\big\rangle \big|^2}{2+\big|\big\langle\big\langle \Delta, \varPhi(\svbx^{(t)}) \big\rangle\big\rangle \big|}\\
	& \stackrel{(c)}{\geq} \frac{\exp(-\br^T\bd)}{2+2\br^T\bd} \times \frac{1}{n} \sum_{t = 1}^{n}  \big|\big\langle\big\langle \Delta, \varPhi(\svbx^{(t)}) \big\rangle\big\rangle\big|^2\\
	& \stackrel{(d)}{=} \frac{\exp(-\br^T\bd)}{2+2\br^T\bd} \times \sum_{u_1=1}^{\dimOne}\sum_{v_1=1}^{ \dimTwo}\sum_{w_1=1}^{\dimThree}  \sum_{u_2=1}^{\dimOne}\sum_{v_2=1}^{ \dimTwo}\sum_{w_2=1}^{\dimThree} 
	\Delta_{u_1v_1w_1} \hH_{u_1v_1w_1u_2v_2w_2} \Delta_{u_2v_2w_2}\\
	& = \frac{\exp(-\br^T\bd)}{2+2\br^T\bd} \times \sum_{u_1=1}^{\dimOne}\sum_{v_1=1}^{ \dimTwo}\sum_{w_1=1}^{\dimThree}  \sum_{u_2=1}^{\dimOne}\sum_{v_2=1}^{ \dimTwo}\sum_{w_2=1}^{\dimThree} 
	 \Delta_{u_1v_1w_1} \times \\ 
	 & \qquad \qquad \qquad \qquad [H_{u_1v_1w_1u_2v_2w_2}  + \hH_{u_1v_1w_1u_2v_2w_2} - H_{u_1v_1w_1u_2v_2w_2}] \Delta_{u_2v_2w_2},
	\end{align}
	where $(a)$ follows because $ - \big\langle\big\langle \Theta, \varPhi(\svbx) \big\rangle \big\rangle \geq - \br^T\bd$ from \eqref{eq:bound_inner_product}, $(b)$ follows because $e^{-z} - 1 + z \geq \frac{z^2}{2 + |z|}$ for any $z \in \Reals$, $(c)$ follows from \eqref{eq:bound_inner_product}, and $(d)$ follows from \eqref{eq:empirical_correlation}.\\
	
	Let the number of samples satisfy
	\begin{align}
	n > \frac{8 \phiMax^4 \dimOne^2  \dimTwo^2 \dimThree^2 }{\lambdaMin^2}\log\Big(\frac{2 \dimOne^2  \dimTwo^2 \dimThree^2 }{\deltaThree}\Big). 
	\end{align}
	Using Lemma \ref{lemma:correlation_concentration} with $\epsTwo= \frac{\lambdaMin}{2\dimOne \dimTwo \dimThree}$ and $\deltaTwo = \deltaThree$, and the triangle inequality, we have the following with probability at least $1-\deltaThree$
	\begin{align}
	\delta \cL_{n}(\Delta, \ThetaStar) &\geq \frac{\exp(-\br^T\bd)}{2+2\br^T\bd} \times \bigg[ \sum_{u_1=1}^{\dimOne}\sum_{v_1=1}^{ \dimTwo}\sum_{w_1=1}^{\dimThree}  \sum_{u_2=1}^{\dimOne}\sum_{v_2=1}^{ \dimTwo}\sum_{w_2=1}^{\dimThree} 
	\Delta_{u_1v_1w_1} H_{u_1v_1w_1u_2v_2w_2} \Delta_{u_2v_2w_2} 
	\\ & \qquad \qquad \qquad \qquad \qquad \qquad
	- \frac{\lambdaMin}{2\dimOne \dimTwo \dimThree} \|\Delta\|^2_{1,1,1}  \bigg]\\
	&\stackrel{(a)}{\geq} \frac{\exp(-\br^T\bd)}{2+2\br^T\bd} \times \bigg[ \sum_{u_1=1}^{\dimOne}\sum_{v_1=1}^{ \dimTwo}\sum_{w_1=1}^{\dimThree}  \sum_{u_2=1}^{\dimOne}\sum_{v_2=1}^{ \dimTwo}\sum_{w_2=1}^{\dimThree} 
	  \Delta_{u_1v_1w_1} H_{u_1v_1w_1u_2v_2w_2} \Delta_{u_2v_2w_2} 
	  \\ & \qquad \qquad \qquad \qquad \qquad \qquad
	  - \frac{\lambdaMin}{2} \|\Delta\|^2_{\mathrm{T}}  \bigg]\\
	& \stackrel{(b)}{=} \frac{\exp(-\br^T\bd)}{2+2\br^T\bd} \times \Big[ \vect(\Delta) \Expectation[\vect(\varPhi(\rvbx))\vect(\varPhi(\rvbx))^T]\vect(\Delta)^T  - \frac{\lambdaMin}{2} \|\Delta\|^2_{\mathrm{T}}  \Big]\\
	& \stackrel{(c)}{\geq} \frac{\exp(-\br^T\bd)}{2+2\br^T\bd} \times \Big[ \lambdaMin \|\vect(\Delta)\|^2_2   - \frac{\lambdaMin}{2} \|\Delta\|^2_{\mathrm{T}}  \Big]\\
	& \stackrel{(d)}{=} \frac{\exp(-\br^T\bd)}{2+2\br^T\bd} \times \frac{\lambdaMin}{2} \|\Delta\|^2_{\mathrm{T}},
	\end{align}
	where $(a)$ follows because $\|\Delta\|_{1,1,1} \leq \sqrt{\dimOne \dimTwo \dimThree} \|\Delta\|_\mathrm{{T}}$, $(b)$ follows from \eqref{eq:population_correlation}, $(c)$ follows from the Courant-Fischer theorem (because $\Expectation[\vect(\varPhi(\rvbx))\vect(\varPhi(\rvbx))^T]$ is a symmetric matrix) and Assumption \ref{lambdamin}, and $(d)$ follows because $\|\vect(\Delta)\|_2 = \|\Delta\|_{\mathrm{T}}$.
\end{proof}

\section{Bounds on the tensor maximum norm of the gradient of the loss function} 
\label{appendix:bounds on the gradient of the GISMe}
In this Section, we will show that, with enough samples, the tensor maximum norm of the gradient of the loss function evaluated at the true natural parameter is bounded with high probability. This result will allow us to prove Theorem \ref{thm:finite_sample} in Appendix \ref{appendix_proof_finite_sample}.

We will first state the main result of this Section (Proposition \ref{prop:gradient-concentration-GISMe}). Next, we will provide a supporting Lemma wherein we show that the expected value of a random variable of interest is zero. Finally, we will prove Proposition \ref{prop:gradient-concentration-GISMe}.
\begin{proposition} \label{prop:gradient-concentration-GISMe}
	Let Assumptions \ref{bounds_parameter}, \ref{bounds_statistics} and \ref{bounds_statistics_maximum} be satisfied. For any $\deltaFour \in (0,1)$, any $\epsFour > 0$, the components of the gradient of the loss function $\cL_{n}(\Theta)$\footnote[7]{Ideally, one would consider the gradient of $\cL_{n}(\vect({\Theta}))$. However, for the ease of the exposition we abuse the terminology.} evaluated at $\ThetaStar$ are bounded from above as 
	\begin{align}
	\| \nabla \cL_{n}(\ThetaStar) \|_{\max} \leq \epsFour,
	\end{align}
	with probability at least $1 - \deltaFour$ as long as
	\begin{align}
	n > \frac{2 \phiMax^2\exp(2\br^T\bd)}{\epsFour^2}\log\Big(\frac{2\dimOne \dimTwo \dimThree}{\deltaFour}\Big).
	\end{align}
\end{proposition}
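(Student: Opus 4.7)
The plan is to prove Proposition~\ref{prop:gradient-concentration-GISMe} by a straightforward application of Hoeffding's inequality followed by a union bound over the entries of the gradient tensor.

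First, I would fix indices $u \in [\dimOne]$, $v \in [\dimTwo]$, $w \in [\dimThree]$ and isolate the corresponding entry of the gradient, which by \eqref{eq:gradient-GISMe} has the form
\begin{equation}
\frac{\partial \cL_{n}(\ThetaStar)}{\partial \Theta_{uvw}} = -\frac{1}{n} \sum_{t=1}^{n} Y_{uvw}^{(t)}, \qquad Y_{uvw}^{(t)} \coloneqq \varPhi_{uvw}(\svbx^{(t)}) \exp\big(-\big\langle\big\langle \ThetaStar, \varPhi(\svbx^{(t)}) \big\rangle\big\rangle\big),
\end{equation}
where the $Y_{uvw}^{(t)}$ are i.i.d.\ copies of $Y_{uvw} \coloneqq \varPhi_{uvw}(\rvbx) \exp(-\langle\langle \ThetaStar, \varPhi(\rvbx)\rangle\rangle)$.

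Next I would establish the two ingredients needed for Hoeffding's inequality. The first is that $\Expectation[Y_{uvw}] = 0$; this is exactly the content of Lemma~\ref{lemma:zero_expectation} referenced in the normality proof (which follows from $\DensityXTrue \propto \exp(\langle\langle \ThetaStar, \Phi(\svbx)\rangle\rangle)$ together with the centering $\Expectation_{\Uniform}[\varPhi(\rvbx)] = 0$ and integration by a change of the normalizing constant). The second is a deterministic bound on $|Y_{uvw}|$: by Assumption~\ref{bounds_statistics_maximum} we have $|\varPhi_{uvw}(\svbx)| \leq \phiMax$, and by the bound derived in Appendix~\ref{appendix:frobenius_bound} using Assumptions~\ref{bounds_parameter} and~\ref{bounds_statistics} we have $|\langle\langle \ThetaStar, \varPhi(\svbx)\rangle\rangle| \leq \br^T\bd$, so
\begin{equation}
|Y_{uvw}| \leq \phiMax \exp(\br^T\bd).
\end{equation}

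Applying Hoeffding's inequality to the zero-mean bounded random variables $Y_{uvw}^{(t)}$, I would obtain
\begin{equation}
\Prob\bigg(\bigg|\frac{\partial \cL_{n}(\ThetaStar)}{\partial \Theta_{uvw}}\bigg| > \epsFour\bigg) \leq 2 \exp\bigg(-\frac{n \epsFour^2}{2\phiMax^2 \exp(2\br^T\bd)}\bigg).
\end{equation}
Finally, a union bound over the $\dimOne \dimTwo \dimThree$ choices of $(u,v,w)$ yields
\begin{equation}
\Prob\big(\| \nabla \cL_{n}(\ThetaStar)\|_{\max} > \epsFour\big) \leq 2\dimOne \dimTwo \dimThree \exp\bigg(-\frac{n \epsFour^2}{2\phiMax^2 \exp(2\br^T\bd)}\bigg),
\end{equation}
and setting the right-hand side equal to $\deltaFour$ and solving for $n$ gives the stated sample-size condition. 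There is no real obstacle here; the only subtle point is citing (or briefly re-deriving) the zero-mean property $\Expectation[Y_{uvw}] = 0$, which is why the paper relies on the separate Lemma~\ref{lemma:zero_expectation}.
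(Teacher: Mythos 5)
Your proposal is correct and follows essentially the same route as the paper's proof: identify each gradient entry as an empirical average of i.i.d.\ copies of the zero-mean random variable $\rvx_{uvw} = -\varPhi_{uvw}(\rvbx)\exp(-\langle\langle \ThetaStar, \varPhi(\rvbx)\rangle\rangle)$ (Lemma~\ref{lemma:zero_expectation}), bound it deterministically by $\phiMax\exp(\br^T\bd)$ via \eqref{eq:bound_inner_product} and Assumption~\ref{bounds_statistics_maximum}, apply Hoeffding's inequality, and union bound over the $\dimOne\dimTwo\dimThree$ entries. No gaps.
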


\subsection{Supporting Lemma for Proposition \ref{prop:gradient-concentration-GISMe}}
\begin{lemma}\label{lemma:zero_expectation}
	For any $u \in [\dimOne]$, $v \in [ \dimTwo]$ and $w \in [ \dimThree]$, define the random variable
	\begin{align}
	\rvx_{uvw}  = - \varPhi_{uvw}(\svbx) \exp\big( -\big\langle \big\langle \ThetaStar, \varPhi(\svbx) \big\rangle \big\rangle \big). \label{xij_definition}
	\end{align}
	We have 
	\begin{align}
	\Expectation[\rvx_{uvw}] = 0,
	\end{align}
	where the expectation is with respect to $\DensityXTrue$.
\end{lemma}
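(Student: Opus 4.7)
The plan is to exploit a cancellation between the exponential factor inside $\rvx_{uvw}$ and the exponential form of the true density $\DensityXTrue$, after which the expectation collapses to a Lebesgue integral of $\varPhi_{uvw}$ against the uniform measure, which is zero by construction.

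First, I would write the expectation explicitly:
\begin{align}
\Expectation[\rvx_{uvw}] = -\int_{\cX} \varPhi_{uvw}(\svbx)\,\exp\!\big(-\big\langle\big\langle \ThetaStar,\varPhi(\svbx)\big\rangle\big\rangle\big)\,\DensityXTrue\,d\svbx.
\end{align}
Next, I would use the definition $\varPhi(\svbx) = \Phi(\svbx) - \Expectation_{\Uniform}[\Phi(\rvbx)]$ from Definition~\ref{def:css}. Since $\Expectation_{\Uniform}[\Phi(\rvbx)]$ is a constant tensor (no dependence on $\svbx$), we can factor
\begin{align}
\exp\!\big(-\big\langle\big\langle \ThetaStar,\varPhi(\svbx)\big\rangle\big\rangle\big) = \exp\!\big(\big\langle\big\langle \ThetaStar,\Expectation_{\Uniform}[\Phi(\rvbx)]\big\rangle\big\rangle\big)\cdot\exp\!\big(-\big\langle\big\langle \ThetaStar,\Phi(\svbx)\big\rangle\big\rangle\big).
\end{align}

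The key step is to combine this with the density. Writing $\DensityXTrue = Z(\ThetaStar)^{-1}\exp(\langle\langle \ThetaStar,\Phi(\svbx)\rangle\rangle)$ with $Z(\ThetaStar) = \int_{\cX}\exp(\langle\langle \ThetaStar,\Phi(\svby)\rangle\rangle)\,d\svby$, the product $\exp(-\langle\langle \ThetaStar,\varPhi(\svbx)\rangle\rangle)\cdot \DensityXTrue$ equals a constant $K \coloneqq Z(\ThetaStar)^{-1}\exp(\langle\langle \ThetaStar,\Expectation_{\Uniform}[\Phi(\rvbx)]\rangle\rangle)$ independent of $\svbx$. Therefore
\begin{align}
\Expectation[\rvx_{uvw}] = -K\int_{\cX}\varPhi_{uvw}(\svbx)\,d\svbx.
\end{align}
Finally, since $\Uniform$ is the uniform density on $\cX$ with value $1/|\cX|$, by Definition~\ref{def:css},
\begin{align}
\int_{\cX}\varPhi_{uvw}(\svbx)\,d\svbx = \int_{\cX}\Phi_{uvw}(\svbx)\,d\svbx - |\cX|\cdot\Expectation_{\Uniform}[\Phi_{uvw}(\rvbx)] = 0,
\end{align}
which completes the proof. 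There is no real obstacle here; the argument is essentially an algebraic cancellation together with the defining property of the centered statistic. The only thing to be careful about is implicitly invoking the fact that $\cX$ has finite Lebesgue measure so that $\Uniform$ is well-defined, which is guaranteed by the boundedness of the support assumed in Section~\ref{sec:prob_formulation}.
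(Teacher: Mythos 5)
Your proof is correct and follows essentially the same route as the paper: cancel the $\exp(-\langle\langle\ThetaStar,\varPhi(\svbx)\rangle\rangle)$ factor against the exponential in $\DensityXTrue$ to leave a constant times $\int_{\cX}\varPhi_{uvw}(\svbx)\,d\svbx$, which vanishes by the centering in Definition~\ref{def:css}. The only cosmetic difference is that you track the constant $\exp(\langle\langle\ThetaStar,\Expectation_{\Uniform}[\Phi(\rvbx)]\rangle\rangle)$ explicitly, whereas the paper absorbs it by writing the partition function directly in terms of $\varPhi$.
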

\begin{proof}[Proof of Lemma \ref{lemma:zero_expectation}]
	Fix any $u \in [\dimOne]$, $v \in [ \dimTwo]$ and $w \in [ \dimThree]$. Using \eqref{xij_definition}, we have
	\begin{align}
	\Expectation[\rvx_{uvw}] & = -\int_{\svbx \in \cX} \hspace{-2mm}\DensityXTrue \varPhi_{uvw}(\svbx) \exp\big( -\big\langle \big\langle \ThetaStar, \varPhi(\svbx) \big\rangle \big\rangle \big) d\svbx \stackrel{(a)}{=} \frac{-\int_{\svbx \in \cX}  \varPhi_{uvw}(\svbx)  d\svbx}{\int_{\svby \in \cX} \exp\big( \big\langle \big\langle \ThetaStar, \varPhi(\svby) \big\rangle \big\rangle \big) d\svby} \\
	& \stackrel{(b)}{=} 0,
	\end{align}
	where $(a)$ follows from the definition of $\DensityXTrue$, and because $\Expectation_{\Uniform} [\Phi(\rvbx)]$ is a constant, and $(b)$ follows because $\int_{\svbx \in \cX} \varPhi(\svbx)  d\svbx = 0$ from Definition \ref{def:css}
\end{proof}

\subsection{Proof of Proposition \ref{prop:gradient-concentration-GISMe}}
\begin{proof}[Proof of Proposition \ref{prop:gradient-concentration-GISMe}]
	Fix $u \in [\dimOne]$, $v \in [ \dimTwo]$ and $w \in [ \dimThree]$. We will start by simplifying the gradient of the $\cL_n(\Theta)$ evaluated at $\ThetaStar$. The component of the gradient of $\cL_{n}(\Theta)$ corresponding to $\Theta_{uvw}$
	evaluated at $\ThetaStar$ is given by
	\begin{align}
	\frac{\partial \cL_{n}(\ThetaStar)}{\partial \Theta_{uvw}}  = - \frac{1}{n} \sum_{t = 1}^{n} \varPhi_{uvw}(\svbx^{(t)}) \exp\Big( -\Big\langle \Big\langle \ThetaStar, \varPhi(\svbx^{(t)}) \Big\rangle \Big\rangle \Big).  \label{eq:gradient_GISMe_ThetaStar}
	\end{align}
	Each term in the above summation is distributed as the random variable $\rvx_{uvw}$ (see \eqref{xij_definition}). The random variable $\rvx_{uvw}$ has zero
	mean (see Lemma \ref{lemma:zero_expectation}) and satisfies $
	|\rvx_{uvw}| \leq \phiMax  \exp(\br^T\bd)$ (from Assumption \ref{bounds_statistics_maximum} and \eqref{eq:bound_inner_product}). Using the Hoeffding's inequality, we have
	\begin{align}
	\Prob \left( \bigg|\frac{\partial \cL_{n}(\ThetaStar)}{\partial \Theta_{uvw}}\bigg| > \epsFour \right) < 2\exp \left(-\frac{n \epsFour^2}{2\phiMax^2 \exp(2\br^T\bd)}\right). \label{eq:Hoeffding}
	\end{align}
	The proof follows by using \eqref{eq:Hoeffding} and the union bound over all $u \in [\dimOne]$, $v \in [ \dimTwo]$ and $w \in [ \dimThree]$.
\end{proof}

\section{Proof of Theorem 4.3}
\label{appendix_proof_finite_sample}
	In this Section, we will prove Theorem \ref{thm:finite_sample}. We restate the Theorem below and then provide the proof.
	\theoremfinite*
	\begin{proof}[Proof of Theorem \ref{thm:finite_sample}]
		Let the number of samples satisfy
		\begin{align}
		n & \geq \max  \bigg\{\frac{8 \phiMax^4 \dimOne^2  \dimTwo^2 \dimThree^2 }{\lambdaMin^2}\log\Big(\frac{4 \dimOne^2  \dimTwo^2 \dimThree^2 }{\delta}\Big), \\
		& \qquad \frac{2^9 \phiMax^2  \dimOne^2  \dimTwo^2 (\br^T\bg)^2(1 + \br^T\bd)^2  \exp(4\br^T\bd )}{\alpha^4\lambdaMin^2}\log\Big(\frac{4\dimOne \dimTwo \dimThree}{\delta}\Big)\bigg\} \\
		& \stackrel{(a)}{\approx} O\Big(\frac{ \dimOne^2  \dimTwo^2 }{\alpha^4 \lambdaMin^2}\log\big(\frac{\dimOne \dimTwo}{\delta}\big)\Big) \approx O\Big(\mathrm{poly}\Big( \frac{\dimOne \dimTwo}{\alpha}\Big)\Big).
		\end{align}
		where $(a)$ follows because $\dimThree, \phiMax, \br, \bg, \bd = O(1)$.
		
		Let $\Delta = \hThetaEps - \ThetaStar$. 
		Define the residual of the first-order Taylor expansion as 
		\begin{align}
		\delta \cL_{n}(\Delta, \ThetaStar) = \cL_{n}(\ThetaStar + \Delta) - \cL_{n}(\ThetaStar)  - \langle\langle\nabla \cL_{n}(\ThetaStar),\Delta \rangle \rangle.  \label{eq:residual1}
		\end{align}
		Let $\nabla \cL^{(i)}_{n}(\ThetaStar)$ denote the $i^{th}$ slice of $\nabla \cL_{n}(\ThetaStar)$. From the definition of an $\epsilon$-optimal solution of $\hThetan$, we have 
		\begin{align}
		\epsilon &\geq \cL_{n}(\hThetaEps) - \min_{\Theta \in \parameterSet} \cL_{n}(\Theta)  \\
		&\geq \cL_{n}(\hThetaEps) - \cL_{n}(\ThetaStar) \\
		&\stackrel{(a)}{=} \langle\langle \nabla \cL_{n}(\ThetaStar),\hThetaEps - \ThetaStar \rangle\rangle +  \delta \cL_{n}(\Delta, \ThetaStar)\\
		&\stackrel{(b)}{=} \sum_{i = 1}^{\dimThree} \langle\nabla \cL^{(i)}_{n}(\ThetaStar),\hThetaEps^{(i)} - \ThetaStari \rangle +  \delta \cL_{n}(\Delta, \ThetaStar)\\
		&\stackrel{(c)}{\geq} - \sum_{i = 1}^{\dimThree}  \cR^*_i(\nabla \cL^{(i)}_{n}(\ThetaStar)) \times \cR(\hThetaEps^{(i)} - \ThetaStari ) +  \delta \cL_{n}(\Delta, \ThetaStar)\\
		&\stackrel{(d)}{\geq} - 2\sum_{i = 1}^{\dimThree}  \cR^*_i(\nabla \cL^{(i)}_{n}(\ThetaStar)) \times r_i +  \delta \cL_{n}(\Delta, \ThetaStar)\\
		&\stackrel{(e)}{\geq} - 2 \dimOne  \dimTwo\sum_{i = 1}^{\dimThree}  g_i \times  \times \|\nabla \cL^{(i)}_{n}(\ThetaStar)\|_{\max} \times r_i +  \delta \cL_{n}(\Delta, \ThetaStar)\\
		&\stackrel{(f)}{\geq} - 2 \dimOne  \dimTwo \|\nabla \cL_{n}(\ThetaStar)\|_{\max} \sum_{i = 1}^{\dimThree}  g_i \times  r_i +  \delta \cL_{n}(\Delta, \ThetaStar),
		\end{align}
		where $(a)$ follows from \eqref{eq:residual1}, $(b)$ follows from the definitions of a slice of a tensor, tensor inner product, and Frobenius inner product, $(c)$ follows from the definition of a dual norm, $(d)$ follows because $\cR(\hThetaEps^{(i)} - \ThetaStari ) \leq \cR(\hThetaEps^{(i)}) + \cR(\ThetaStari ) \leq 2 r_i$ from Assumption \ref{bounds_parameter}, $(e)$ follows from Property \ref{property:norms} in Section \ref{sec:main results}, and $(f)$ follows because $\|\nabla \cL^{(i)}_{n}(\ThetaStar)\|_{\max} \leq \|\nabla \cL_{n}(\ThetaStar)\|_{\max} ~\forall i \in [\dimThree]$.
		
		Using Proposition \ref{prop:rsc_GISMe} with $\deltaThree = \frac{\delta}{2}$, and Proposition \ref{prop:gradient-concentration-GISMe} with $\deltaFour = \frac{\delta}{2}$, we have the following with probability at least $1-\delta$. 
		\begin{align}
		\epsilon & \geq - 2  \dimOne   \dimTwo  \epsFour \times \br^T \bg + \frac{\lambdaMin \exp(-\br^T\bd )}{4(1 + \br^T\bd)} \|\Delta\|^2_{\mathrm{T}}. 
		\end{align}
		This can be rearranged 
		\begin{align}
		\|\Delta\|^2_{\mathrm{T}}  & \leq \frac{\epsilon + 2 \dimOne   \dimTwo  \epsFour \times \br^T \bg}{\lambdaMin} \times 4(1 + \br^T\bd) \exp(\br^T\bd ). \label{eq:rearranged_delta}
		\end{align}
		Now, let
		\begin{align}
		\epsilon = \frac{\alpha^2 \lambdaMin}{8(1 + \br^T\bd)  \exp(\br^T\bd )} \qquad  \text{and} \qquad \epsFour = \frac{\alpha^2 \lambdaMin}{16 \dimOne   \dimTwo   \times \br^T \bg \times (1 + \br^T\bd) \times \exp(\br^T\bd )}. \label{eq:epsilons}
		\end{align}
		Plugging in $\epsilon$ and $\epsFour$ from \eqref{eq:epsilons} in \eqref{eq:rearranged_delta}, we obtain that
		\begin{align}
		\|\Delta\|_{\mathrm{T}} \leq \alpha.
		\end{align}
	The computational cost of the operation $\Theta_{(t)} - \eta  \nabla \cL_{n}(\Theta_{(t)}) - \Theta$ in Algorithm \ref{alg:GradientDescent}
	is of the order $\dimOne \dimTwo  n$ (because $\dimThree = O(1)$). 
	Therefore, the computational cost of the step $\Theta_{(t+1)} \leftarrow \argmin_{\Theta \in \parameterSet} \| \Theta_{(t)} - \eta  \nabla \cL_{n}(\Theta_{(t)}) - \Theta\|$ of Algorithm \ref{alg:GradientDescent} is of the order $\max\{\dimOne \dimTwo n, c(\parameterSet)\}$. 
	 From Lemma \ref{lemma:gradient_descent}, with $\epsilon = O(\alpha^2 \lambdaMin)$, Algorithm \ref{alg:GradientDescent} returns an $\epsilon$-optimal solution $\hThetaEps$ as long as $\tau = O\Big(\mathrm{poly}\Big( \frac{\dimOne \dimTwo}{\alpha^2 \lambdaMin}\Big)\Big).$
	 Therefore, the total computational cost scales as $	 O\Big(\frac{\dimOne \dimTwo}{\alpha^2 \lambdaMin}\max\big(\dimOne \dimTwo n, c(\parameterSet)\big)\Big)$.  Whenever the cost of projection onto $\parameterSet$ is $O\big(\mathrm{poly}( \dimOne \dimTwo)\big)$, we have the total computational cost scaling as $ O\Big(\mathrm{poly}\Big( \frac{\dimOne \dimTwo}{\alpha}\Big)\Big)$.
	\end{proof}

\section{Computational cost for the example constraints on the natural parameters}
\label{appendix:computational_cost}
In this Section, we provide Corollary \ref{corr_sparse_comp}, Corollary \ref{corr_low_rank_comp}, and Corollary \ref{corr_sparse_low_rank_comp}. These Corollaries provide the computational cost to produce an $\epsilon$-optimal solution of $\hThetan$ for sparse decomposition of $\Theta$, low-rank decomposition of $\Theta$, and sparse-plus-low-rank decomposition of $\Theta$. respectively. Recall the convex relaxations of these constraints from Section \ref{subsec:examples}.

\subsection{Sparse Decomposition}
\label{appendix:corr_sparse_comp}
\begin{restatable}{corollary}{corrSparse}\label{corr_sparse_comp}(Sparse decomposition)
	Suppose $\ThetaStar$ has a sparse decomposition i.e., $\ThetaStar = (\ThetaStarOne)$ and $\|\ThetaStarOne\|_{1,1} \leq r_1$. Let Assumptions \ref{bounds_parameter}, \ref{bounds_statistics}, \ref{bounds_statistics_maximum}, and \ref{lambdamin} be satisfied. Let 
	\begin{align}
	n & \geq O\bigg(\frac{ \dimOne^2  \dimTwo^2 }{\alpha^4 \lambdaMin^2}\log\Big(\frac{\dimOne \dimTwo}{\delta}\Big)\bigg).
	\end{align}
	Let $\eta = 1/\dimOne \dimTwo \dimThree \phiMax^2\exp(r_1d_1)$ and $\Theta^{(0)} = \boldsymbol{0}$. Then, Algorithm \ref{alg:GradientDescent} is guaranteed to produce an $\epsilon$-optimal solution $\hThetaEps$ such that $\|\hThetaEps - \ThetaStar\|_{\mathrm{T}} \leq \alpha$,
	with probability at least $1-\delta$ and with number of computations of the order
	\begin{align}
	O\bigg(\frac{ \dimOne^4  \dimTwo^4 }{\alpha^6 \lambdaMin^3}\log\Big(\frac{\dimOne \dimTwo }{\delta}\Big)\bigg).
	\end{align}
\end{restatable}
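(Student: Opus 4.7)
\textbf{Proof proposal for Corollary \ref{corr_sparse_comp}.} The plan is to invoke Theorem \ref{thm:finite_sample} directly and then specialize the generic computational-cost expression to the case where $\parameterSet$ is an $\ell_{1,1}$ ball. Since $\ThetaStar$ has a sparse decomposition with $\dimThree = 1$ and $\cR_1(\cdot) = \|\cdot\|_{1,1}$, Assumption \ref{bounds_parameter} holds with $\br = (r_1)$; Assumptions \ref{bounds_statistics}, \ref{bounds_statistics_maximum}, and \ref{lambdamin} are stipulated. The dual norm of $\|\cdot\|_{1,1}$ is $\|\cdot\|_{\max}$, so in Property \ref{property:norms} we can take $g_1 = 1$ (as recorded in Appendix \ref{appendix:dual_norm}), and every hidden constant in Theorem \ref{thm:finite_sample} depending on $\dimThree$, $\phiMax$, $\br$, $\bd$, $\bg$ is $O(1)$ under the stated hypotheses.

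Applying Theorem \ref{thm:finite_sample} with this choice of $\parameterSet$ immediately yields the guarantee $\|\hThetaEps - \ThetaStar\|_{\mathrm{T}} \leq \alpha$ with probability at least $1-\delta$, using the stated sample complexity $n = O\big(\dimOne^2 \dimTwo^2 \alpha^{-4}\lambdaMin^{-2} \log(\dimOne\dimTwo/\delta)\big)$. The chosen step size $\eta = 1/(\dimOne\dimTwo\dimThree\phiMax^2\exp(r_1 d_1))$ and initialization $\Theta^{(0)} = \boldsymbol{0}$ match the requirements of Lemma \ref{lemma:gradient_descent}.

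For the computational-cost part, Theorem \ref{thm:finite_sample} states that the total number of operations is of order $O\big(\tfrac{\dimOne\dimTwo}{\alpha^2\lambdaMin}\max(\dimOne\dimTwo\, n,\, c(\parameterSet))\big)$, where $c(\parameterSet)$ is the cost of a single Euclidean projection onto $\parameterSet$. Since $\parameterSet$ is (after vectorization) an $\ell_1$ ball of radius $r_1$ in $\Reals^{\dimOne\dimTwo}$, projection reduces to soft-thresholding after a one-dimensional bisection/sorting step, costing $c(\parameterSet) = O(\dimOne\dimTwo\log(\dimOne\dimTwo))$. Hence $\dimOne\dimTwo\, n$ dominates $c(\parameterSet)$, and the total cost becomes
\begin{align}
O\Big(\tfrac{\dimOne\dimTwo}{\alpha^2\lambdaMin}\cdot\dimOne\dimTwo\cdot\tfrac{\dimOne^2\dimTwo^2}{\alpha^4\lambdaMin^2}\log(\dimOne\dimTwo/\delta)\Big) \;=\; O\Big(\tfrac{\dimOne^4\dimTwo^4}{\alpha^6\lambdaMin^3}\log(\dimOne\dimTwo/\delta)\Big),
\end{align}
matching the claimed bound.

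There is no real obstacle: the corollary is a direct specialization of Theorem \ref{thm:finite_sample} and Lemma \ref{lemma:gradient_descent}, with the only new ingredient being the standard $O(\dimOne\dimTwo\log(\dimOne\dimTwo))$ bound on projection onto an $\ell_1$ ball. The main care needed is the bookkeeping that verifies $\dimOne\dimTwo\, n \gg c(\parameterSet)$ so that $n$ (rather than $c(\parameterSet)$) determines the dominant cost.
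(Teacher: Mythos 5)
Your proposal is correct and follows essentially the same route as the paper: specialize Theorem \ref{thm:finite_sample} for the statistical guarantee, then observe that projection onto the $L_{1,1}$ ball costs only $O(\dimOne\dimTwo)$ (up to logarithmic factors), so the $\dimOne\dimTwo\,n$ term dominates and the stated total cost follows by substituting $n$ and $\epsilon = O(\alpha^2\lambdaMin)$. The only cosmetic difference is that the paper re-derives the iteration count from Lemma \ref{lemma:gradient_descent} using the constraint-specific bound $\|\hTheta_n\|_{\mathrm{T}}^2 = \|\hTheta_n^{(1)}\|_{\mathrm{F}}^2 \le \|\hTheta_n^{(1)}\|_{1,1}^2 \le r_1^2$, whereas you cite the computational-cost expression of Theorem \ref{thm:finite_sample} as a black box; both yield the same bound.
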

\begin{proof}[Proof of Corollary \ref{corr_sparse_comp} ]
	The computational cost of projecting on the $L_{1,1}$ ball is $O(\dimOne \dimTwo)$ (see \cite{DuchiSSC2008} and note $\dimThree = O(1)$). The computational cost of the operation $\Theta_{(t)} - \eta  \nabla \cL_{n}(\Theta_{(t)}) - \Theta$ is $O(\dimOne \dimTwo n)$ (because $\dimThree = O(1)$).  Therefore, the computational cost of the step $\Theta_{(t+1)} \leftarrow \argmin_{\Theta \in \parameterSet} \| \Theta_{(t)} - \eta  \nabla \cL_{n}(\Theta_{(t)}) - \Theta\|$ of Algorithm \ref{alg:GradientDescent} is $O(\dimOne \dimTwo n)$. 
	
	From Lemma \ref{lemma:gradient_descent}, Algorithm \ref{alg:GradientDescent} returns an $\epsilon$-optimal solution $\hThetaEps$ as long as
	\begin{align}
	\tau \geq \frac{2\dimOne \dimTwo\phiMax^2\exp(\br^T\bd)}{\epsilon} \| \hTheta_n\|^2_{\mathrm{T}}.
	\end{align}
	Also, $\| \hTheta_n\|^2_{\mathrm{T}} = \| \hTheta_n^{(1)}\|^2_{\mathrm{F}} \leq \| \hTheta_n^{(1)}\|^2_{1,1} \leq r_1^2.$ Combining everything, the computational cost scales as $O\Big(\frac{ \dimOne^2  \dimTwo^2  n}{\epsilon} \Big)$.
	Using Theorem \ref{thm:finite_sample}, and plugging in $n = O\Big(\frac{ \dimOne^2  \dimTwo^2 }{\alpha^4 \lambdaMin^2}\log\Big(\frac{\dimOne \dimTwo }{\delta}\Big)\Big)$ and $\epsilon = O(\alpha^2 \lambdaMin)$ completes the proof. 
\end{proof}

\subsection{Low-rank decomposition}
\label{appendix:corr_low_rank_comp}
\begin{restatable}{corollary}{corrLowRank}\label{corr_low_rank_comp}(Low-rank decomposition)
	Suppose $\ThetaStar$ has a low-rank decomposition i.e., $\ThetaStar = (\ThetaStarOne)$ and $\|\ThetaStar\|_{\star} \leq r_1$. Let Assumptions \ref{bounds_parameter}, \ref{bounds_statistics}, \ref{bounds_statistics_maximum}, and \ref{lambdamin} be satisfied. Let 
	\begin{align}
	n & \geq O\bigg(\frac{ \dimOne^2  \dimTwo^2 }{\alpha^4 \lambdaMin^2}\log\Big(\frac{\dimOne \dimTwo}{\delta}\Big)\bigg).
	\end{align}
	Let $\eta = 1/\dimOne \dimTwo \dimThree \phiMax^2\exp(r_1d_1)$ and $\Theta^{(0)} = \boldsymbol{0}$. Then, Algorithm \ref{alg:GradientDescent} is guaranteed to produce an $\epsilon$-optimal solution $\hThetaEps$ such that $\|\hThetaEps - \ThetaStar\|_{\mathrm{T}} \leq \alpha$,
	with probability at least $1-\delta$ and with number of computations of the order
	\begin{align}
	O\bigg(\frac{ \dimOne^4  \dimTwo^4 }{\alpha^6 \lambdaMin^3}\log\Big(\frac{\dimOne \dimTwo}{\delta}\Big)\bigg).
	\end{align}
\end{restatable}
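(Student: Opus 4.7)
The plan is to mirror the proof of Corollary \ref{corr_sparse_comp}, swapping the $L_{1,1}$-ball projection for a nuclear-norm-ball projection and using the inequality $\|M\|_{\mathrm{F}} \leq \|M\|_{\star}$ in place of $\|M\|_{\mathrm{F}} \leq \|M\|_{1,1}$. Concretely, I would begin by identifying the projection step in Algorithm \ref{alg:GradientDescent}: projection onto $\{M \in \Reals^{\dimOne\times\dimTwo} : \|M\|_{\star} \leq r_1\}$ can be carried out by taking the SVD of the argument and projecting the vector of singular values onto the $\ell_1$-ball of radius $r_1$ (equivalently, soft-thresholding), which costs $O(\dimOne\dimTwo\min(\dimOne,\dimTwo))$. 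Since Theorem \ref{thm:finite_sample} forces $n \geq O(\dimOne^2\dimTwo^2/(\alpha^4\lambdaMin^2))$, we have $\dimOne\dimTwo\, n \geq \dimOne\dimTwo\cdot\dimOne\dimTwo$, so the SVD cost is dominated by the $O(\dimOne\dimTwo\, n)$ cost of one evaluation of $\nabla\cL_n(\Theta_{(t)})$. Hence each iteration of Algorithm \ref{alg:GradientDescent} costs $O(\dimOne\dimTwo\, n)$, exactly as in the sparse case.

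Next, I would bound the tensor norm of $\hTheta_n$. By Assumption \ref{bounds_parameter} we have $\|\hTheta_n^{(1)}\|_\star \leq r_1$, and since the Frobenius norm is the $\ell_2$ norm of the singular-value vector while the nuclear norm is the $\ell_1$ norm of the same non-negative vector, we obtain $\|M\|_{\mathrm{F}} \leq \|M\|_\star$. Therefore
\begin{align}
\|\hTheta_n\|^2_{\mathrm{T}} \;=\; \|\hTheta_n^{(1)}\|^2_{\mathrm{F}} \;\leq\; \|\hTheta_n^{(1)}\|^2_\star \;\leq\; r_1^2.
\end{align}
Invoking Lemma \ref{lemma:gradient_descent} with $\dimThree=1$, $\phiMax, r_1, d_1 = O(1)$ then gives iteration count $\tau = O(\dimOne\dimTwo/\epsilon)\cdot\|\hTheta_n\|^2_{\mathrm{T}} = O(\dimOne\dimTwo/\epsilon)$.

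Finally, Theorem \ref{thm:finite_sample} requires $\epsilon = O(\alpha^2\lambdaMin)$ together with $n = O(\dimOne^2\dimTwo^2\log(\dimOne\dimTwo/\delta)/(\alpha^4\lambdaMin^2))$ to produce $\hThetaEps$ with $\|\hThetaEps - \ThetaStar\|_{\mathrm{T}} \leq \alpha$ with probability at least $1-\delta$. Multiplying the per-iteration cost $O(\dimOne\dimTwo\, n)$ by $\tau = O(\dimOne\dimTwo/(\alpha^2\lambdaMin))$ and substituting $n$ yields the claimed total cost of order $O\big(\dimOne^4\dimTwo^4\log(\dimOne\dimTwo/\delta)/(\alpha^6\lambdaMin^3)\big)$.

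The proof is essentially bookkeeping once the right building blocks are lined up; the only genuinely case-specific step is the projection cost analysis, and the only mildly subtle point is checking that the SVD does not dominate the gradient evaluation, which it does not because the sample-complexity lower bound forces $n$ to be at least linear in $\dimOne\dimTwo$. Everything else is a direct instantiation of Lemma \ref{lemma:gradient_descent} and Theorem \ref{thm:finite_sample}, identical in structure to Corollary \ref{corr_sparse_comp}.
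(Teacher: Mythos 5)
Your proposal is correct and follows essentially the same route as the paper's proof: bound the nuclear-ball projection cost by $O(\dimOne\dimTwo\min\{\dimOne,\dimTwo\})$, use $\|\hTheta_n^{(1)}\|_{\mathrm{F}} \leq \|\hTheta_n^{(1)}\|_{\star} \leq r_1$ to bound the iteration count via Lemma \ref{lemma:gradient_descent}, and then instantiate Theorem \ref{thm:finite_sample} with $\epsilon = O(\alpha^2\lambdaMin)$ and the stated $n$. The only cosmetic difference is that the paper carries the per-iteration cost as $O(\dimOne\dimTwo\max\{\min\{\dimOne,\dimTwo\},n\})$ and substitutes $n$ at the end, whereas you observe up front that the sample-complexity lower bound forces $n$ to dominate $\min\{\dimOne,\dimTwo\}$; both yield the same final count.
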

\begin{proof}[Proof of Corollary \ref{corr_low_rank_comp} ]
	The computational cost of projecting on the nuclear ball is $O(\dimOne \dimTwo \min\{\dimOne,  \dimTwo\})$ (see \cite{Jaggi2013} and note $\dimThree = O(1)$).
	The computational cost of the operation $\Theta_{(t)} - \eta  \nabla \cL_{n}(\Theta_{(t)}) - \Theta$ is $O(\dimOne \dimTwo n)$ because ($\dimThree = O(1)$).  Therefore, the computational cost of the step $\Theta_{(t+1)} \leftarrow \argmin_{\Theta \in \parameterSet} \| \Theta_{(t)} - \eta  \nabla \cL_{n}(\Theta_{(t)}) - \Theta\|$ of Algorithm \ref{alg:GradientDescent} is $O(\dimOne \dimTwo\max\{\min\{\dimOne,  \dimTwo\},n\})$.

	From Lemma \ref{lemma:gradient_descent}, Algorithm \ref{alg:GradientDescent} returns an $\epsilon$-optimal solution $\hThetaEps$ scales as
	\begin{align}
	\tau \geq \frac{2\dimOne \dimTwo\phiMax^2\exp(\br^T\bd)}{\epsilon} \| \hTheta_n\|^2_{\mathrm{F}}.
	\end{align}
	Also, $\| \hTheta_n\|^2_{\mathrm{F}} \leq \| \hTheta_n\|^2_{\star} \leq r_1^2.$ Combining everything, the computational cost is of the order $O\Big(\frac{ \dimOne^2  \dimTwo^2  \max\{\min\{\dimOne,  \dimTwo\},n\}}{\epsilon}\Big)$.
	Using Theorem \ref{thm:finite_sample}, and plugging in $n = O\Big(\frac{ \dimOne^2  \dimTwo^2 }{\alpha^4 \lambdaMin^2}\log\Big(\frac{\dimOne \dimTwo}{\delta}\Big)\Big)$ and $\epsilon = O(\alpha^2 \lambdaMin)$ completes the proof. 
\end{proof}

\subsection{Sparse-plus-low-rank decomposition}
\label{appendix:corr_sparse_low_rank_comp}
\begin{restatable}{corollary}{corrSparseLowRank}\label{corr_sparse_low_rank_comp}(Sparse-plus-low-rank decomposition)
	Suppose $\ThetaStar$ has a sparse-plus-low-rank decomposition i.e., $\ThetaStar = (\ThetaStarOne, \ThetaStarTwo)$ such that $\|\ThetaStarOne\|_{1,1} \leq r_1$ and $\|\ThetaStarTwo\|_{\star} \leq r_2$. Let Assumptions \ref{bounds_parameter}, \ref{bounds_statistics}, \ref{bounds_statistics_maximum}, and \ref{lambdamin} be satisfied. Let 
	\begin{align}
	n & \geq O\bigg(\frac{ \dimOne^2  \dimTwo^2 }{\alpha^4 \lambdaMin^2}\log\Big(\frac{\dimOne \dimTwo}{\delta}\Big)\bigg).
	\end{align}
	Let $\eta = 1/\dimOne \dimTwo \dimThree \phiMax^2\exp(r_1d_1 + r_2d_2)$ and $\Theta^{(0)} = \boldsymbol{0}$. Then, Algorithm \ref{alg:GradientDescent} is guaranteed to produce an $\epsilon$-optimal solution $\hThetaEps$ such that $\|\hThetaEps - \ThetaStar\|_{\mathrm{T}} \leq \alpha$,
	with probability at least $1-\delta$ and with number of computations of the order
	\begin{align}
	O\bigg(\frac{ \dimOne^4  \dimTwo^4 }{\alpha^6 \lambdaMin^3}\log\Big(\frac{\dimOne \dimTwo}{\delta}\Big)\bigg).
	\end{align}
\end{restatable}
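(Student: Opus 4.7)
The plan is to follow the same template used for Corollary \ref{corr_sparse_comp} and Corollary \ref{corr_low_rank_comp}: first bound the per-iteration cost of Algorithm \ref{alg:GradientDescent} for this particular $\parameterSet$, then combine Lemma \ref{lemma:gradient_descent} with the sample complexity / target accuracy $\epsilon = O(\alpha^2 \lambdaMin)$ from Theorem \ref{thm:finite_sample} to read off the overall computational cost. Since the hypotheses of Theorem \ref{thm:finite_sample} are assumed and the sample count $n$ is already chosen to match \eqref{eq:sample_complexity}, the $\alpha$-accuracy conclusion $\|\hThetaEps-\ThetaStar\|_{\mathrm{T}} \leq \alpha$ with probability at least $1-\delta$ is immediate; only the computational cost bound requires separate accounting.

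The key observation for the projection step is that the constraint set decouples across slices, namely $\parameterSet = \{(\ThetaOne,\ThetaTwo) : \|\ThetaOne\|_{1,1} \leq r_1,\ \|\ThetaTwo\|_{\star} \leq r_2\}$, so the argmin in Algorithm \ref{alg:GradientDescent} splits into an independent $L_{1,1}$-ball projection on the first slice and a nuclear-norm-ball projection on the second slice. Using \cite{DuchiSSC2008} for the $L_{1,1}$ projection ($O(\dimOne \dimTwo)$) and \cite{Jaggi2013} for the nuclear-ball projection ($O(\dimOne \dimTwo \min\{\dimOne, \dimTwo\})$), and recalling that here $\dimThree = 2 = O(1)$, the total projection cost is $c(\parameterSet) = O(\dimOne \dimTwo \min\{\dimOne, \dimTwo\})$. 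The cost of forming $\Theta_{(t)} - \eta \nabla \cL_n(\Theta_{(t)}) - \Theta$ is $O(\dimOne \dimTwo n)$ as in the previous corollaries, so the per-iteration cost is $O(\dimOne \dimTwo \max\{\min\{\dimOne, \dimTwo\}, n\})$.

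Next I will bound the iteration count $\tau$ using Lemma \ref{lemma:gradient_descent}. The lemma gives $\tau = O\big(\dimOne \dimTwo \phiMax^2 \exp(\br^T\bd) \| \hThetan\|^2_{\mathrm{T}} / \epsilon \big)$, and since $\hThetan \in \parameterSet$ I can bound $\| \hThetan\|^2_{\mathrm{T}} = \|\hThetan^{(1)}\|^2_{\mathrm{F}} + \|\hThetan^{(2)}\|^2_{\mathrm{F}} \leq \|\hThetan^{(1)}\|^2_{1,1} + \|\hThetan^{(2)}\|^2_{\star} \leq r_1^2 + r_2^2 = O(1)$, using the standard inequalities $\|\bM\|_{\mathrm{F}} \leq \|\bM\|_{1,1}$ and $\|\bM\|_{\mathrm{F}} \leq \|\bM\|_{\star}$. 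Plugging in $\epsilon = O(\alpha^2 \lambdaMin)$ from Theorem \ref{thm:finite_sample} gives $\tau = O\big(\dimOne \dimTwo / (\alpha^2 \lambdaMin)\big)$.

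Finally I will multiply the iteration count by the per-iteration cost and substitute the chosen $n = O\big(\dimOne^2 \dimTwo^2 \log(\dimOne \dimTwo / \delta) / (\alpha^4 \lambdaMin^2)\big)$. Since this $n$ dominates $\min\{\dimOne, \dimTwo\}$ in the regime of interest, the max reduces to $n$ and the total cost becomes
\begin{equation*}
O\!\left( \frac{\dimOne \dimTwo}{\alpha^2 \lambdaMin} \cdot \dimOne \dimTwo \cdot n \right) = O\!\left( \frac{\dimOne^4 \dimTwo^4}{\alpha^6 \lambdaMin^3} \log\!\Big(\frac{\dimOne \dimTwo}{\delta}\Big) \right),
\end{equation*}
which is the stated bound. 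There is no genuine obstacle here: the only mild care needed is in verifying that the decoupled projection costs are as claimed and in tracking the slice-wise bound on $\|\hThetan\|_{\mathrm{T}}$; both are straightforward consequences of the constraint structure. The proof is essentially a line-by-line adaptation of Corollary \ref{corr_low_rank_comp} with one extra (cheaper) slice projection added.
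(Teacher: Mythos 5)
Your proposal is correct and follows exactly the route the paper takes: the paper's own proof of this corollary is a one-line deferral to the proofs of Corollaries \ref{corr_sparse_comp} and \ref{corr_low_rank_comp}, and your argument is precisely the explicit combination of those two (slice-decoupled projections, the bound $\|\hTheta_n\|_{\mathrm{T}}^2 \le r_1^2 + r_2^2$, Lemma \ref{lemma:gradient_descent} with $\epsilon = O(\alpha^2\lambdaMin)$, and Theorem \ref{thm:finite_sample} for the accuracy claim). No gaps; your write-up is in fact more detailed than the paper's.
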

\begin{proof}[Proof of Corollary \ref{corr_sparse_low_rank_comp} ]
	The proof follows directly from the proofs of Corollary \ref{corr_sparse_comp} and Corollary \ref{corr_low_rank_comp}.
\end{proof}

\section{Examples}
\label{appendix:examples}
In this Section, we provide a more elaborate discussion on the examples of natural parameters and statistics from Section \ref{subsec:examples}.
\subsection{Sparse-plus-low-rank decomposition}
The natural statistic $\Phi$ of an exponential family is such that for any $i_1 \neq i_2 \in [\dimOne], j_1 \neq j_2 \in [\dimTwo], l_1 \neq l_2 \in [\dimThree]$, $\Phi_{i_1j_1l_1} \neq \Phi_{i_2j_2l_2}$. Further, an exponential family is minimal if there does not exist a non-zero tensor $\bU \in \Reals^{\dimOne\times  \dimTwo \times  \dimThree} $ 
such that $\sum_{i \in [\dimOne], j \in [\dimTwo], l \in [\dimThree]} \bU_{ijl} \Phi_{ijl}(\svbx)$ is equal to a constant for all $\svbx \in \cX$. However, for the sparse-plus-low-rank decomposition, it is desirable to let $\PhiOne = \PhiTwo$ (see \cite{ChandrasekaranPW2010, MengEH2014}). In this scenario, there exists a non-zero tensor $\bU \in \Reals^{\dimOne\times  \dimTwo \times  \dimThree} $ 
such that $\sum_{i \in [\dimOne], j \in [\dimTwo], l \in [\dimThree]} \bU_{ijl} \Phi_{ijl}(\svbx) = 0$ for all $\svbx \in \cX$ for e.g., this is true if $\bU^{(1)} = -\bU^{(2)}$. In this situation, we say an exponential family is minimal if there does not exist a non-zero tensor $\bU \in \Reals^{\dimOne\times  \dimTwo \times  \dimThree} $ 
such that $\sum_{l \in [\dimThree]} \bU^{(l)} \neq 0$ as well as $\sum_{i \in [\dimOne], j \in [\dimTwo], l \in [\dimThree]} \bU_{ijl} \Phi_{ijl}(\svbx)$ is equal to a constant for all $\svbx \in \cX$. Therefore, it is often convenient to represent the tensor $\bU$ in terms of a matrix and define minimality of an exponential family in terms of this new matrix.

\subsection{Assumptions \ref{bounds_parameter} and \ref{bounds_statistics}}
While we expect the constants $\br$ in Assumption \ref{bounds_parameter} and $\bd$ in Assumption \ref{bounds_statistics} to be $O(1)$ for most applications, the sample complexity and the computational complexity in Theorem \ref{thm:finite_sample} would still be $O\Big(\mathrm{poly}\Big( \frac{\dimOne \dimTwo}{\alpha}\Big)\Big)$ as long as $\br$ and $\bd$ are $O\Big(\mathrm{log}(\dimOne \dimTwo)\Big)$.
 
\subsection{Polynomial natural statistic}
\label{appendix:poly}
Suppose the natural statistics are polynomials of $\rvbx$ with maximum degree $l$, i.e., $\prod_{i \in [p]} x_i^{l_i}$ such that $l_i \geq 0$ $\forall i \in [p]$ and $\sum_{i \in [p]} l_i \leq l$. 
\begin{itemize}[leftmargin=6mm, itemsep=-0.5mm]
	 \item Let $\cX = [0,b]$ for $b \in \Reals$. We will first show that $\phiMax = 2b^l$. We have
\begin{align}
\| \varPhi(\svbx) \|_{\max} & = \max_{u \in [\dimOne], v \in [ \dimTwo], w \in [\dimThree]} |\varPhi_{uvw}(\svbx)|\\
& \stackrel{(a)}{=} \max_{u \in [\dimOne], v \in [ \dimTwo], w \in [\dimThree]} \Big| \Phi_{uvw}(\svbx) - \Expectation_{\Uniform}[\Phi_{uvw}(\rvbx)] \Big|\\
& \stackrel{(b)}{\leq} \max_{u \in [\dimOne], v \in [ \dimTwo], w \in [\dimThree]}\Big| \Phi_{uvw}(\svbx) \Big| + \max_{u \in [\dimOne], v \in [ \dimTwo], w \in [\dimThree]} \Big| \Expectation_{\Uniform}[\Phi_{uvw}(\rvbx)] \Big|\\
& \leq 2 \max_{\svbx \in \cX} \max_{u \in [\dimOne], v \in [ \dimTwo], w \in [\dimThree]} \Big| \Phi_{uvw}(\svbx) \Big| \leq 2b^l.
\end{align}
where $(a)$ follows from Definition \ref{def:css} and $(b)$ follows from the triangle inequality.
	\item Suppose $\ThetaStar$ has a sparse decomposition i.e., $\ThetaStar = (\ThetaStarOne)$ and $\|\ThetaStarOne\|_{1,1} \leq r_1$. The dual norm of the matrix $L_{1,1}$ norm is the matrix maximum norm. Then, if $\cX = [0,b]$ for $b \in \Reals$,
	\begin{align}
	\cR^*_1(\varPhi^{(1)}(\svbx)) = \| \varPhi^{(1)}(\svbx) \|_{\max} = \| \varPhi(\svbx) \|_{\max} \leq \phiMax = 2b^l.
	\end{align}
	\item Suppose $\ThetaStar$ has a low-rank decomposition i.e., $\ThetaStar = (\ThetaStarOne)$ and $\|\ThetaStar\|_{\star} \leq r_1$. The dual norm of the matrix nuclear norm is the matrix spectral norm. Then, 
	\begin{align}
	\cR^*_1(\varPhi^{(1)}(\svbx)) = \| \varPhi^{(1)}(\svbx) \|.
	\end{align}
	Let $l = 2$, and $\cX = \cB(0,b)$. Observe that by writing $\varPhi^{(1)}(\svbx)  = \tilde{\rvx}\tilde{\rvx}^T$ where $\tilde{\rvx} = (1,\rvx_1,\cdots, \rvx_p)$, we have
	\begin{align}
	\| \varPhi^{(1)}(\svbx) \| \leq 2\Big(1 + \sum_{i \in [p]} \svbx_i^2\Big) \leq 2(1+b^2).  
	\end{align}
	\item Suppose $\ThetaStar$ has a sparse-plus-low-rank decomposition i.e., $\ThetaStar = (\ThetaStarOne, \ThetaStarTwo)$ such that $\|\ThetaStarOne\|_{1,1} \leq r_1$ and $\|\ThetaStarTwo\|_{\star} \leq r_2$. The dual norm of the matrix $L_{1,1}$ norm is the matrix maximum norm and the dual norm of the matrix nuclear norm is the matrix spectral norm. Let $l = 2$, and $\cX = \cB(0,b)$. Then,
	\begin{align}
	\bcR^*(\varPhi(\svbx)) \leq (\| \varPhi^{(1)}(\svbx) \|_{\max} , \| \varPhi^{(2)}(\svbx) \|)  \leq (2b^2, 2+2b^2).
	\end{align}
\end{itemize}
\subsection{Trigonometric natural statistic}
\label{appendix:sinu}
Suppose the natural statistics are sines and cosines of $\rvbx$ with $l$ different frequencies, i.e., $\sin(\sum_{i \in [p]}l_ix_i)$ $\cup$ $\cos(\sum_{i \in [p]}l_ix_i)$ such that $l_i \in [l] \cup \{0\}$.
\begin{itemize}[leftmargin=6mm, itemsep=-0.5mm]
	\item Let $\cX \subset \Reals^{p}$. We will first show that $\phiMax = 2$. We have
\begin{align}
\| \varPhi(\svbx) \|_{\max} & = \max_{u \in [\dimOne], v \in [ \dimTwo], w \in [\dimThree]} |\varPhi_{uvw}(\svbx)|\\
& \stackrel{(a)}{=} \max_{u \in [\dimOne], v \in [ \dimTwo], w \in [\dimThree]} \Big| \Phi_{uvw}(\svbx) - \Expectation_{\Uniform}[\Phi_{uvw}(\rvbx)] \Big|\\
& \stackrel{(b)}{\leq} \max_{u \in [\dimOne], v \in [ \dimTwo], w \in [\dimThree]}\Big| \Phi_{uvw}(\svbx) \Big| + \max_{u \in [\dimOne], v \in [ \dimTwo], w \in [\dimThree]} \Big| \Expectation_{\Uniform}[\Phi_{uvw}(\rvbx)] \Big|\\
& \leq 2 \max_{\svbx \in \cX} \max_{u \in [\dimOne], v \in [ \dimTwo], w \in [\dimThree]} \Big| \Phi_{uvw}(\svbx) \Big| \leq 2.
\end{align}
where $(a)$ follows from Definition \ref{def:css} and $(b)$ follows from the triangle inequality.
\item Suppose $\ThetaStar$ has a sparse decomposition i.e., $\ThetaStar = (\ThetaStarOne)$ and $\|\ThetaStarOne\|_{1,1} \leq r_1$. The dual norm of the matrix $L_{1,1}$ norm is the matrix maximum norm. Then, for any $\cX \subset \Reals^{p}$,
\begin{align}
\cR^*_1(\varPhi^{(1)}(\svbx)) = \| \varPhi^{(1)}(\svbx) \|_{\max} = \| \varPhi(\svbx) \|_{\max} \leq \phiMax = 2.
\end{align}
\end{itemize}
\subsection{Combinations of polynomial and trigonometric statistics}
Suppose the natural statistics are combinations of polynomials of $\rvbx$ with maximum degree $l$, i.e., $\prod_{i \in [p]} x_i^{l_i}$ such that $l_i \geq 0$ $\forall i \in [p]$ and $\sum_{i \in [p]} l_i \leq l$ as well as sines and cosines of $\rvbx$ with $\tilde{l}$ different frequencies, i.e., $\sin(\sum_{i \in [p]}l_ix_i)$ $\cup$ $\cos(\sum_{i \in [p]}l_ix_i)$ such that $l_i \in [\tilde{l}] \cup \{0\}$.
\begin{itemize}[leftmargin=6mm, itemsep=-0.5mm]
	\item Let $\cX = [0,b]$ for $b \in \Reals$. From Appendix \ref{appendix:poly} and Appendix \ref{appendix:sinu}, it is easy to verify that $\phiMax = \max\{2,2b^l\}$. 
	\item Suppose $\ThetaStar$ has a sparse decomposition i.e., $\ThetaStar = (\ThetaStarOne)$ and $\|\ThetaStarOne\|_{1,1} \leq r_1$. The dual norm of the matrix $L_{1,1}$ norm is the matrix maximum norm. Then, if $\cX = [0,b]$ for $b \in \Reals$, it is easy to verify that 
	\begin{align}
	\cR^*_1(\varPhi^{(1)}(\svbx)) = \| \varPhi^{(1)}(\svbx) \|_{\max} = \| \varPhi(\svbx) \|_{\max} \leq \phiMax =  \max\{2,2b^l\}.
	\end{align}
\end{itemize}

\section{Property \ref{property:norms} for norms of interest}
\label{appendix:dual_norm}
In this Section, we show that the $g$ defined in Property \ref{property:norms} in Section \ref{sec:main results} is 1 for the entry-wise $L_{p,q}$ norm $(p,q \geq 1)$, the Schatten $p$-norm $(p \geq 1)$, and the operator $p$-norm $(p \geq 1)$.
\subsection{The entry-wise $L_{p,q}$ norm}
Let $\tilde{\cR}(\cdot)$ denote the entry-wise $L_{p,q}$ norm for some $p,q \geq 1$. We will show that for any matrix $\bM \in \Reals^{\dimOne\times \dimTwo}$
\begin{align}
\tilde{\cR}(\bM) \leq \|\bM\|_{\max} \times \dimOne^{\frac{1}{p}}  \dimTwo^{\frac{1}{q}}.
\end{align}
 By the definition of the entry-wise $L_{p,q}$ norm, we have
\begin{align}
\tilde{\cR}(\bM) = \bigg(\sum_{j \in  [\dimTwo]} \bigg(\sum_{i \in [\dimOne]} | M_{ij} |^p \bigg)^{\frac{q}{p}} \bigg)^{\frac{1}{q}} & \leq \bigg(\sum_{j \in  [\dimTwo]} \bigg(\sum_{i \in [\dimOne]} \|\bM\|_{\max}^p \bigg)^{\frac{q}{p}} \bigg)^{\frac{1}{q}} \\
& = \dimOne^{\frac{1}{p}}  \dimTwo^{\frac{1}{q}} \|\bM\|_{\max} \leq \dimOne \dimTwo \|\bM\|_{\max}.
\end{align}

\subsection{The Schatten $p$-norm}
Let $\tilde{\cR}(\cdot)$ denote the Schatten $p$-norm for some $p \geq 1$. We will show that for any matrix $\bM \in \Reals^{\dimOne\times \dimTwo}$
\begin{align}
\tilde{\cR}(\bM) \leq \|\bM\|_{\max} \times\sqrt{\min\{\dimOne, \dimTwo\}\dimOne \dimTwo} .
\end{align}
Let the rank of $\bM$ be denoted by $r$ and the singular values of $\bM$ be denoted by $\sigma_i(\bM)$ for $i \in [r]$.
By the definition of the Schatten $p$-norm, we have
\begin{align}
\tilde{\cR}(\bM) = \bigg(\sum_{i \in [r]} \sigma_i^p(\bM) \bigg)^{\frac{1}{p}} \stackrel{(a)}{\leq} \sum_{i \in [r]} \sigma_i(\bM) & \stackrel{(b)}{\leq} \sqrt{r\dimOne \dimTwo} \|\bM\|_{\max} \\
& \stackrel{(c)}{\leq} \sqrt{\min\{\dimOne, \dimTwo\}\dimOne \dimTwo} \|\bM\|_{\max} \leq \dimOne \dimTwo \|\bM\|_{\max}
\end{align}
where $(a)$ follows because of the monotonicity of the Schatten $p$-norms, $(b)$ follows because $\|\bM\|_{\star} \leq \sqrt{r\dimOne \dimTwo} \|\bM\|_{\max}$, and $(c)$ follows because $r \leq \min\{\dimOne, \dimTwo\}$.

\subsection{The operator $p$-norm }
Let $\tilde{\cR}(\cdot)$ denote the operator $p$-norm for some $p \geq 1$. We will show that for any matrix $\bM \in \Reals^{\dimOne\times \dimTwo}$
\begin{align}
\tilde{\cR}(\bM) \leq \|\bM\|_{\max} \times \dimOne^{\frac{1}{p}}  \dimTwo^{1-\frac{1}{p}}.
\end{align}
Let $q = \frac{p}{p-1}$. For $i \in \dimOne$, let $[\bM]_{i}$ denote the $i^{th}$ row of $\bM$. By the definition of the operator $p$-norm, we have
\begin{align}
\tilde{\cR}(\bM) = \max_{\svby : \| \svby\|_p = 1} \|\bM \svby \|_{p} & \stackrel{(a)}{\leq} \dimOne^{\frac{1}{p}} \max_{\svby : \| \svby\|_p = 1}  \|\bM \svby \|_{\infty} \\
& \stackrel{(b)}{\leq} \dimOne^{\frac{1}{p}} \max_{\svby : \| \svby\|_p = 1}  \max_{i \in [\dimOne]} \|[\bM]_{i}\|_{q} \|\svby\|_{p} \\
& \leq \dimOne^{\frac{1}{p}}   \max_{i \in [\dimOne]} \|[\bM]_{i}\|_{q} \\
& \stackrel{(c)}{\leq} \dimOne^{\frac{1}{p}}  \dimTwo^{\frac{1}{q}}  \max_{i \in [\dimOne]} \|[\bM]_{i}\|_{\infty} \\
& = \dimOne^{\frac{1}{p}}  \dimTwo^{1-\frac{1}{p}} \|\bM\|_{\max} \leq \dimOne \dimTwo \|\bM\|_{\max}
\end{align}
where $(a)$ follows because $\|\svbv\|_p \leq m^{\frac{1}{p}} \|\svbv\|_{\infty}$ for any vector $\svbv \in \Reals^{m}$ and $p \geq  1$, $(b)$ follows from the definition of the infinity norm of a vector and using the H\"{o}lder's inequality, and $(c)$ follows because $\|\svbv\|_q \leq m^{\frac{1}{q}} \|\svbv\|_{\infty}$ for any vector $\svbv \in \Reals^{m}$ and $q \geq  1$.

\end{document}